\documentclass[11pt]{article}

\usepackage[margin=1.2in]{geometry}

\usepackage{amsmath,amssymb,amsfonts,amsthm,amsbsy}
\usepackage{mathtools}
\usepackage{bm}
\usepackage{mathrsfs}
\usepackage{bbm}
\usepackage{cases}
\usepackage{wasysym}
\usepackage{xparse}

\usepackage{graphicx}
\usepackage{subcaption}
\usepackage{float}
\usepackage{placeins}
\usepackage{booktabs}
\usepackage{multirow}
\usepackage{tabularx}
\usepackage{array}
\usepackage{ragged2e}

\newcolumntype{L}[1]{>{\RaggedRight\arraybackslash}p{#1}}
\newcolumntype{Y}{>{\RaggedRight\arraybackslash}X}

\usepackage{algorithm}
\usepackage{algpseudocode}

\usepackage{xcolor}
\usepackage{url}
\usepackage{nicefrac}
\usepackage{microtype}
\usepackage{enumitem}
\usepackage{comment}
\usepackage{authblk}
\usepackage{todonotes}
\usepackage[normalem]{ulem}
\usepackage{soul}
\usepackage{tikz}
\usepackage{multicol}

\usepackage[colorlinks=true,linkcolor=blue,citecolor=blue,urlcolor=blue]{hyperref}

\theoremstyle{plain}
\newtheorem{theorem}{Theorem}[section]

\newtheorem*{thm*}{Theorem}

\newtheorem{proposition}[theorem]{Proposition}

\theoremstyle{definition}

\newtheorem{assumption}{Assumption}
\newtheorem*{assumption*}{Assumption}

\theoremstyle{remark}
\newtheorem{remark}[theorem]{Remark}

\NewDocumentCommand{\norm}{O{} m g}{%
  \left\lVert #2 \right\rVert%
  \IfNoValueTF{#3}{%
    \if\relax\detokenize{#1}\relax
    \else
      _{#1}%
    \fi
  }{%
    _{#3}%
  }%
}

\algrenewcommand{\algorithmicrequire}{\textbf{Input:}}
\algrenewcommand{\algorithmicensure}{\textbf{Output:}}

\providecommand{\keywords}[1]{\textbf{\textit{Keywords:}} #1}
\title{Inferring physical fields in coupled systems with unknown parameters from incomplete observations using physics-constrained attentive neural operators}

\author[1,2]{Shilun Wei\thanks{
  \texttt{weishlun@mail2.sysu.edu.cn};
  \texttt{253621013@stu.gbu.edu.cn}
}}
\author[1]{Xiaoqiang Sun\thanks{\texttt{sunxq6@mail.sysu.edu.cn}}}
\author[3]{Wei Li\thanks{\texttt{weili.cs@sjtu.edu.cn}}}
\author[2]{Kejun Tang\thanks{Corresponding author: \texttt{tangkj@gbu.edu.cn}}}

\affil[1]{School of Mathematics, Sun Yat-sen University, Guangzhou, China}
\affil[2]{School of Sciences, Great Bay University, Dongguan, China}
\affil[3]{John Hopcroft Center for Computer Science, School of Computer Science,
Shanghai Jiao Tong University, Shanghai, China}

\date{}

\begin{document}

\maketitle

\begin{abstract}
Given incomplete measurements of a single physical field in a coupled system with unknown parameters, can we infer its full physical state and identify the underlying parameters? This problem is challenging because multiple coupled fields must be reconstructed simultaneously from limited observations of only one, while the system parameters are unknown. In this work, we propose a machine learning framework for full-field reconstruction and parameter identification of unknown physical systems from sparse observations of a single physical field. Specifically, the cross-attention encoder propagates sparse sensor observations onto a regular grid to construct a sensor-conditioned latent representation, while a Fourier neural operator (FNO) decoder captures global spatial dependencies to reconstruct all coupled physical fields. The network parameters and unknown physical parameters are jointly optimized by minimizing observation losses, governing equation residuals, and boundary/initial condition constraints. The proposed approach is validated on two- and three-dimensional lid-driven cavity flows, a two-dimensional cylinder wake, and a two-dimensional non-ideal magnetohydrodynamics problem, demonstrating the recovery performance of unobserved fields and physical parameters from incomplete observations.
\end{abstract}
\keywords{inverse problems, coupled physical systems, neural operator, physics-constrained machine learning}

\section{Introduction}
\label{sec:introduction}

Let \(\mathcal D\) denote a computational domain (e.g. a spatial domain \(\Omega\) for a steady
problem or a space--time domain
\(\Omega_T=(0,T]\times\Omega\) for an unsteady problem). We consider the following coupled
physical system
\begin{equation}
\begin{cases}
    \mathcal F_1(u_1,u_2,\lambda)=0,\\
    \mathcal F_2(u_1,u_2,\lambda)=0,\\
    \mathcal C(u_1,u_2,\lambda)=0,
\end{cases}
\qquad \text{in } \mathcal D,
\label{eq:intro_coupled_system}
\end{equation}
where \(U=(u_1,u_2)\) denotes the coupled physical state associated with the spatial variable $x$,
\(\lambda \in \mathbb R^m\) is a vector collecting some unknown physical parameters in the system, and \(\mathcal C\) collects the associated boundary and initial conditions. We consider a challenging scenario that only a sparse set of possibly irregular observations from \emph{only one} field ($u_1$ or $u_2$) is available. Without loss of generality, we can assume only $u_1$ is observed and some sparse observations are collected
\begin{equation}
    \mathsf S
    =
    \left\{
        (x^{(i)}, u_1^{(i)})
    \right\}_{i=1}^{N_s},
\label{eq:intro_sparse_observations}
\end{equation}
where $x^{(i)} \in \mathcal{D}$ is the sensor location, $u_1^{(i)}$ is the measurement corresponding to $x^{(i)}$. The task is to reconstruct the complete state $U$ through $\mathsf S$, including unobserved physical fields, while simultaneously identifying the unknown
parameters $\lambda$.

This inverse problem is challenging because we need to reconstruct the whole physical field from a few measurements in a coupled physical system. The hidden variables and physical parameters
must therefore be inferred indirectly through the governing equations and the
cross-variable coupling. Reconstruction quality depends on the sensor
configuration, the strength of the physical coupling, the sensitivity of the
observations to the unknown parameters, and the level of measurement noise.

\subsection{Related work}
\label{sec:intro_related_work}
In coupled systems, the fields accessible to measurement do not necessarily include the quantities needed to characterize the state.
For Navier--Stokes flows, sparse velocity observations have been used to infer pressure and unknown coefficients~\cite{raissi2019physics,parfenyev2024inferring}. Hidden Fluid Mechanics instead uses measurements of a transported tracer concentration to recover velocity and pressure through the coupled
transport and flow equations~\cite{raissi2020hidden}. Related work
on vortex-induced vibrations combines flow and structural-motion data to identify pressure and parameters \cite{raissi2019deep}. Thermal measurements offer another route: temperature fields obtained by tomographic background-oriented Schlieren imaging have been used to infer three dimensional velocity and pressure \cite{cai2021espresso}, whereas velocity measurements have been used to infer temperature in turbulent convection \cite{toscano2025aivt}.

The same measurement mismatch arises in magnetohydrodynamics (MHD). Externally measured magnetic fields have been used to reconstruct velocity in a low-magnetic-Reynolds-number regime \cite{stefani2006inverse}; sparse-data field and parameter inversion has also been studied for incompressible magnetohydrodynamics~\cite{guan2025trajectory}. In subsurface flow and transport, hydraulic head, concentration, and conductivity can be inferred jointly from incomplete measurements
\cite{he2020physics}. These examples establish the practical motivation for recovering an unobserved field through its physical coupling with an observed one. While these applications motivate recovering hidden fields from observable ones, they usually assume well-placed sensors and fixed system parameters.

Classical inverse formulations combine a forward physical model with observations and assumptions about the unknown state or parameters. Bayesian formulations have, for example, been developed
for function-space inverse problems in fluid mechanics \cite{cotter2009bayesian}; broader connections between inverse problems and reduced models are reviewed in \cite{ghattas2021learning}. The quality of an inversion depends on the forward model and on the information supplied by the measurements. In physics-constrained learning, Xu and Darve
\cite{xu2022physics} used differentiable numerical solvers to enforce PDE constraints during inversion from sparse observations.
Our formulation instead fits a neural representation of the fields and parameters using observation and equation residuals. This
distinction is relevant when comparing the role of the PDE and the computational cost of the two approaches.

POD and Gappy POD estimate coefficients in a reduced basis from incomplete measurements
\cite{sirovich1987turbulence,willcox2006unsteady}. Voronoi-assisted reconstruction maps irregular sensors to spatial representations \cite{fukami2021global}, while Senseiver processes variable-size sensor sets using attention \cite{vaswani2017attention,santos2023development}. DeepONet and the Fourier Neural Operator provide representations of mappings between functions \cite{lu2021learning,li2020fourier,kovachki2023neural}. RecFNO applies Fourier operator layers to sparse flow and temperature reconstruction \cite{zhao2024recfno}. These models show how local measurements can be incorporated into
a global field representation, although their training requirements depend on the particular reconstruction setup.

Physics-informed neural networks combine measurement errors with governing-equation and boundary-condition residuals \cite{raissi2019physics}. Recent approaches to flow reconstruction use wavelet-based residual networks, attention-based representations, and domain decomposition \cite{barman2026efficient,barman2026physicsformer,
qian2026distributed}. Physics-constrained convolutional networks have also reconstructed unsteady flow from sparse and noisy data
without complete field labels \cite{mo2025reconstructing}.
Thus, lack of full-field supervision is not unique to the present work. Training can nevertheless be sensitive to PDE difficulty
and to the relative behavior of different loss terms \cite{krishnapriyan2021characterizing,wang2022when}.

Physics-informed DeepONet and Physics-Informed Neural Operator incorporate equation constraints into operator representations \cite{wang2021learning,li2024physics}. Physics-informed neural
operators have also been applied to two-dimensional incompressible
MHD simulation \cite{rosofsky2023mhd}; this is a forward modeling setting rather than joint inversion of hidden fields and physical
parameters. Operator-based parameter identification from sparse sensors has been considered as well, including the use of a
pretrained DeepONet to initialize a physics-constrained inverse model \cite{sarkar2026learning}. Despite these advances, existing methods rarely tackle the joint challenge of handling irregular single-field sensors, recovering unobserved physics, and identifying unknown system parameters within a unified framework. We therefore focus on the combined setting studied here: reconstructing the full coupled state and identifying unknown parameters from irregular, single-channel measurements under PDE constraints.

\subsection{Proposed framework and contributions}
\label{sec:intro_connections}
To make progress, our goal is to recover coupled physical fields and unknown parameters from sparse observations of one single field. The primary setting uses measurements of one field; selected MHD experiments additionally use sparse pressure observations to assess the value of auxiliary
information. We represent the observations as sensor tokens, transfer their information to a computational grid through cross-attention,
and use an FNO to parameterize the complete state. The network and physical parameters are optimized for each inverse instance using
observation, equation, boundary and initial condition constraints. The main contributions of this work are summarized as follows:

\begin{enumerate}
    \item We design a novel framework that integrates irregular sensor encoding and residual-aware attention within an FNO representation to perform joint field reconstruction and parameter identification.

    \item Under local regularity and sensor-observability assumptions, we derive a conditional stability estimate bounding reconstruction errors by PDE residuals, observation mismatch, discretization errors, and noise, while providing a computational method to validate the key assumption.

    \item We evaluate our method on lid-driven cavity flows, a cylinder wake, and a non-ideal MHD system under different observation setups and noise levels.
\end{enumerate}

\subsection{Organization}
\label{sec:intro_organization}
The remainder of this paper is organized as follows. In section~\ref{sec:theory_sparse_inverse}, we present the abstract formulation of the studied problem, the local state-stability structure shared by the flow and MHD systems, sensor-dependent parameter identifiability, and the resulting conditional stability estimate. After that, the proposed physics-constrained attentive FNO framework, including the sensor-token representation, cross-attention encoder,
spectral decoder, physical parameterization, and joint optimization objective, is introduced in section~\ref{sec:method}. In section~\ref{sec:numerical_study},  we demonstrate the effectiveness of our approach with numerical experiments and ablation studies on the flow and magnetohydrodynamic benchmarks.
Finally, section~\ref{sec:conclusion} summarizes the main findings and discusses the limitations and directions for future work.

\section{Theoretical analysis for sparse inverse reconstruction in coupled multiphysics systems}
\label{sec:theory_sparse_inverse}

We now formulate the inverse problem introduced in
Section~\ref{sec:introduction} in a form suitable for the stability analysis.
The notation \(u_1\) and \(u_2\) is deliberately abstract: each may be a scalar-
or vector-valued physical field, and their concrete meaning changes from one
benchmark to another. In the basic setting, \(u_1\) denotes the observed field
and \(u_2\) denotes the remaining coupled field, or a block of fields, to be
reconstructed.

Let
\begin{equation*}
    U=(u_1,u_2)\in\mathcal X_U:=\mathcal X_1\times\mathcal X_2,
    \qquad
    \lambda\in\Lambda\subset\mathbb R^m,
    \qquad
    Z=(U,\lambda)\in\mathcal X:=\mathcal X_U\times\mathbb R^m,
    \label{eq:all_at_once_variable}
\end{equation*}
where \(\mathcal X_1\) and \(\mathcal X_2\) are two proper function spaces for the two
field blocks, and \(\Lambda\) is an admissible parameter set. The coupled physical state is denoted by $U$, while the joint variable is denoted by $Z$. Throughout this section, $|\cdot|$ denotes the Euclidean norm on
\(\mathbb R^m\) and $\Vert \cdot \Vert_{\mathcal{F}}$ denotes the norm in the space $\mathcal{F}$. We define
\(\mathcal X_{\rm ad}:=\mathcal X_U\times\Lambda\) as the admissible
subset of the ambient space \(\mathcal X\). With this convention, the
residual map associated with
\eqref{eq:intro_coupled_system} is
\begin{equation}
    \mathcal R:\mathcal X\to\mathcal Y,
    \qquad
    \mathcal R(Z)
    :=
    \begin{pmatrix}
        \mathcal F_1(u_1,u_2,\lambda)\\
        \mathcal F_2(u_1,u_2,\lambda)\\
        \mathcal C(u_1,u_2,\lambda)
    \end{pmatrix},
    \label{eq:general_all_at_once_residual}
\end{equation}
where \(\mathcal Y\) is the product space of the PDE residual, the boundary residual and the initial residual. Let \(Z^\dagger=(U^\dagger,\lambda^\dagger)\) be the physical state and parameter pair \emph{to be recovered}. It naturally satisfies \(\mathcal R(Z^\dagger)=0\). The feasible set of the joint inverse problem is
\begin{equation*}
    \mathcal M
    :=\{Z\in\mathcal X_{\rm ad}:\mathcal R(Z)=0\}.
    \label{eq:coupled_solution_set}
\end{equation*}
Local identifiability at \(Z^\dagger\) means that the restriction
\(\mathcal O_S|_{\mathcal M}\) is injective in a neighborhood of
\(Z^\dagger\).

The sensor set
\(\mathsf S\) in \eqref{eq:intro_sparse_observations} induces a bounded
observation map
\begin{equation*}
    \mathcal O_S:\mathcal X\to\mathcal Y_{\rm obs}.
    \label{eq:general_observation_operator}
\end{equation*}
The subscript \(S\) indicates the dependence on \(\mathsf S\). In the
single-field setting, \(\mathcal O_S(Z)\) depends on \(u_1\) only; allowing a
direct dependence on \(\lambda\) costs no additional argument and is retained
for completeness. The noisy data are written as
\begin{equation}
    y^\delta
    =\mathcal O_S(Z^\dagger)+\varepsilon^\delta,
    \qquad
    \|\varepsilon^\delta\|_{\mathcal Y_{\rm obs}}\leq\delta.
    \label{eq:general_noisy_observation_model}
\end{equation}
For independent Gaussian measurement errors, the last inequality can be read
for the realized noise vector, or on an event with the prescribed confidence
level. Boundedness of \(\mathcal O_S\) is automatic for finite-dimensional
grid states. In a function-space formulation with point sensors, it requires enough regularity for point evaluation, or the sensors may instead be modeled as local averages. For the differentiability arguments below, \(\lambda^\dagger\) is assumed to be an interior point of \(\Lambda\). Parameters on the boundary require a one-sided or tangent-cone formulation, which is not considered here.

\subsection{Local regularity and sensor observability}
\label{subsec:coupled_identifiability}
\label{subsec:linearized_coupled_observability}
\label{subsec:parameter_to_state_sufficient_condition}

Define the state and parameter derivatives of the residual map by
\begin{equation*}
    \mathcal L^\dagger
    :=D_U\mathcal R(Z^\dagger),
    \qquad
    \mathcal R_\lambda^\dagger
    :=D_\lambda\mathcal R(Z^\dagger).
    \label{eq:linearized_state_operator}
\end{equation*}
For example, for the Navier--Stokes and non-ideal MHD systems considered below, the
differential part of \(\mathcal L^\dagger\) has the schematic decomposition
\begin{equation*}
    \mathcal L^\dagger
    =\mathcal L_{\rm diss}^\dagger
    +\mathcal L_{\rm tr}^\dagger
    +\mathcal L_{\rm cpl}^\dagger,
    \label{eq:dissipative_transport_coupling_decomposition}
\end{equation*}
where the three terms represent dissipation, transport, and cross-field
coupling. In suitable energy spaces
\(\mathcal V\hookrightarrow\mathcal H\), their typical behavior is
\begin{align}
    \langle\mathcal L_{\rm diss}^\dagger h,h\rangle
    &\geq c_D\|\Pi_{\rm d}h\|_{\mathcal V}^2,
    \label{eq:dissipative_coercivity}\\
    \left|\langle(\mathcal L_{\rm tr}^\dagger
    +\mathcal L_{\rm cpl}^\dagger)h,h\rangle\right|
    &\leq
    \zeta\|\Pi_{\rm d}h\|_{\mathcal V}^2
    +C_\zeta\|h\|_{\mathcal H}^2,
    \qquad \zeta>0,
    \label{eq:transport_coupling_relative_bound}
\end{align}
where \(h\in\mathcal V\) is an admissible state perturbation,
\(\langle\cdot,\cdot\rangle\) is the natural
\(\mathcal V^\ast\)--\(\mathcal V\) duality pairing for the differential
part of the linearized equations, \(c_D>0\), and \(C_\zeta\geq0\).
The projection \(\Pi_{\rm d}\) selects the directly diffusive components.
These relations only describe the PDE mechanism behind the stability
assumption; they are not, by themselves, an invertibility proof. The required
system-specific calculations are collected in
Appendix~\ref{app:system_structural_details}.

\begin{assumption}
\label{ass:constrained_state_stability}
The map \(\mathcal R\) is continuously Fr\'echet differentiable near
\(Z^\dagger\). After the boundary, initial, compatibility conditions included in \(\mathcal R\) have been imposed,
\(\mathcal L^\dagger\) is a bounded isomorphism from
\(\mathcal X_U\) onto \(\mathcal Y\), and there exists a constant $C_F$ such that
\begin{equation*}
    \|(\mathcal L^\dagger)^{-1}r\|_{\mathcal X_U}
    \leq C_F\|r\|_{\mathcal Y}
    \qquad \text{for all }r\in\mathcal Y.
    \label{eq:forward_linearized_stability}
\end{equation*}
\end{assumption}

This assumption is local to the regular solution branch through
\(Z^\dagger\). For example, it rules out bifurcation points, nontrivial constrained
kernels, loss of regularity, and degenerate dissipation. The implicit
function theorem then gives a local state map
\(\mathscr U:\Lambda\to\mathcal X_U\) such that
\begin{equation}
    U=\mathscr U(\lambda),
    \qquad
    \mathcal T^\dagger
    :=D\mathscr U(\lambda^\dagger)
    =-(\mathcal L^\dagger)^{-1}\mathcal R_\lambda^\dagger.
    \label{eq:parameter_to_state_derivative}
\end{equation}
The symbol \(\mathscr U\) is used here to avoid confusing the PDE solution
map with the sensor set \(\mathsf S\).

The derivatives of the observation operator at the reference pair are
\begin{equation*}
    \mathcal H_U^\dagger:=D_U\mathcal O_S(Z^\dagger),
    \qquad
    \mathcal H_\lambda^\dagger:=D_\lambda\mathcal O_S(Z^\dagger).
    \label{eq:state_parameter_observation_derivatives}
\end{equation*}
For direct measurements of state variables,
\(\mathcal H_\lambda^\dagger=0\). The reduced parameter-to-observation map
and its derivative are
\begin{equation*}
    \Phi(\lambda)
    :=\mathcal O_S(\mathscr U(\lambda),\lambda),
    \qquad
    J_\lambda
    :=D\Phi(\lambda^\dagger)
    =\mathcal H_U^\dagger\mathcal T^\dagger
    +\mathcal H_\lambda^\dagger.
    \label{eq:reduced_sensitivity_matrix}
\end{equation*}

Here, $J_\lambda\in
\mathcal L(\mathbb R^m,\mathcal Y_{\mathrm{obs}})$ is the local
parameter-to-observation sensitivity operator. For a parameter
perturbation direction $\xi\in\mathbb R^m$, the quantity
$J_\lambda\xi$ represents the first-order change in the sensor
observations induced by $\xi$ along the local PDE solution branch.
For a finite set of observations with
$\mathcal Y_{\mathrm{obs}}\simeq\mathbb R^{n_y}$,
$J_\lambda$ is represented by a matrix in
$\mathbb R^{n_y\times m}$, whose $j$th column gives the observation
sensitivity with respect to the $j$th parameter.

\begin{assumption}
\label{ass:parameter_observation_sensitivity}
For the selected observed field, sensor locations, temporal coverage,
and parameterization, there exists a constant $\sigma_*>0$ such that
\begin{equation*}
    \sigma_{\min}(J_\lambda)
    :=
    \inf_{\substack{\xi\in\mathbb R^m\\|\xi|=1}}
    \|J_\lambda\xi\|_{\mathcal Y_{\mathrm{obs}}}
    \geq \sigma_*>0.
    \label{eq:sensitivity_min_singular_value}
\end{equation*}
\end{assumption}

Assumption~\ref{ass:parameter_observation_sensitivity} depends on the
particular experiment. A small \(\sigma_*\) means that at least one parameter
direction produces little change in the measured field. When parameters or
observations have different units, the singular values should be computed
after non-dimensionalization. It is worth noting that this assumption can be numerically verified and we give a concrete example in Appendix~\ref {app:a2_diagnostic} to show this in detail. In Appendix~\ref {app:a2_diagnostic}, we report an explicit finite-difference check for the two-dimensional cavity problem at \(\mathrm{Re}=5000\) using the boundary pressure sensor configurations employed in the numerical study.

\begin{proposition}
\label{prop:forward_sensitivity_implies_observability}
Under Assumptions~\ref{ass:constrained_state_stability} and
\ref{ass:parameter_observation_sensitivity}, there is a constant
\(C_{\rm obs}>0\) such that, for every
\((h,\xi)\in\mathcal X_U\times\mathbb R^m\),
\begin{equation}
\begin{aligned}
    \|h\|_{\mathcal X_U}+|\xi|
    \leq C_{\rm obs}\Big(&
        \|\mathcal L^\dagger h
          +\mathcal R_\lambda^\dagger\xi\|_{\mathcal Y}
        \\
        &+
        \|\mathcal H_U^\dagger h
          +\mathcal H_\lambda^\dagger\xi
        \|_{\mathcal Y_{\rm obs}}
    \Big).
\end{aligned}
\label{eq:general_linearized_observability_estimate}
\end{equation}
Thus a nonzero state--parameter perturbation cannot be invisible to both the
linearized equations and the measurements.
\end{proposition}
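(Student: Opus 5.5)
The plan is a direct elimination argument. We split the state perturbation \(h\) into the part that follows the parameter direction along the linearized solution branch and a remainder controlled by the equations, then use Assumption~\ref{ass:parameter_observation_sensitivity} to control \(\xi\). Fix \((h,\xi)\in\mathcal X_U\times\mathbb R^m\) and abbreviate
\[
r:=\mathcal L^\dagger h+\mathcal R_\lambda^\dagger\xi\in\mathcal Y,
\qquad
q:=\mathcal H_U^\dagger h+\mathcal H_\lambda^\dagger\xi\in\mathcal Y_{\rm obs}.
\]
By Assumption~\ref{ass:constrained_state_stability}, \(\mathcal L^\dagger\) is invertible, so we can write \(h=(\mathcal L^\dagger)^{-1}r-(\mathcal L^\dagger)^{-1}\mathcal R_\lambda^\dagger\xi=(\mathcal L^\dagger)^{-1}r+\mathcal T^\dagger\xi\), using \eqref{eq:parameter_to_state_derivative}. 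Hence \(h=w+\mathcal T^\dagger\xi\) with \(w:=(\mathcal L^\dagger)^{-1}r\) and \(\|w\|_{\mathcal X_U}\le C_F\|r\|_{\mathcal Y}\).

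Substituting this decomposition into \(q\) gives
\[
q=\mathcal H_U^\dagger w+(\mathcal H_U^\dagger\mathcal T^\dagger+\mathcal H_\lambda^\dagger)\xi=\mathcal H_U^\dagger w+J_\lambda\xi .
\]
Assumption~\ref{ass:parameter_observation_sensitivity} then yields
\[
\sigma_*|\xi|\le\|J_\lambda\xi\|_{\mathcal Y_{\rm obs}}\le\|q\|_{\mathcal Y_{\rm obs}}+\|\mathcal H_U^\dagger\|\,C_F\|r\|_{\mathcal Y}.
\]
Here \(\|\mathcal H_U^\dagger\|<\infty\) because \(\mathcal O_S\) is bounded; we take it to be bounded linear, or at least continuously differentiable near \(Z^\dagger\), so that its derivative is a bounded operator. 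This step estimates \(|\xi|\) by the two residuals.

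Finally, \(\|h\|_{\mathcal X_U}\le C_F\|r\|_{\mathcal Y}+\|\mathcal T^\dagger\|\,|\xi|\), where \(\|\mathcal T^\dagger\|\le C_F\|\mathcal R_\lambda^\dagger\|\) is finite by the \(C^1\) assumption on \(\mathcal R\). Inserting the bound on \(|\xi|\) and adding the two estimates gives \eqref{eq:general_linearized_observability_estimate}. One admissible choice of constant is
\[
C_{\rm obs}=\bigl(1+C_F\|\mathcal R_\lambda^\dagger\|\bigr)\,\max\Bigl\{C_F+\frac{C_F\|\mathcal H_U^\dagger\|}{\sigma_*},\ \frac{1}{\sigma_*}\Bigr\}.
\]

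I expect the only delicate step to be the bookkeeping that identifies \(\mathcal H_U^\dagger\mathcal T^\dagger+\mathcal H_\lambda^\dagger\) with \(J_\lambda\). The cancellation \(\mathcal L^\dagger\mathcal T^\dagger=-\mathcal R_\lambda^\dagger\) must be used exactly as in \eqref{eq:parameter_to_state_derivative}, and the boundedness of \(\mathcal H_U^\dagger\) and \(\mathcal R_\lambda^\dagger\) must be justified from the standing assumptions. Everything else is the triangle inequality.
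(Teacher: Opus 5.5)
Your proposal is correct and follows the paper's proof essentially step for step. The paper uses the same splitting $h=\widetilde h+\mathcal T^\dagger\xi$ with $\widetilde h=(\mathcal L^\dagger)^{-1}(\mathcal L^\dagger h+\mathcal R_\lambda^\dagger\xi)$, the same identification $\mathcal H_U^\dagger\mathcal T^\dagger+\mathcal H_\lambda^\dagger=J_\lambda$ to bound $|\xi|$ via Assumption~\ref{ass:parameter_observation_sensitivity}, and the same triangle inequality to recover $\|h\|_{\mathcal X_U}$; your explicit constant $C_{\rm obs}$ checks out, whereas the paper leaves the constant implicit.
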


The proof is given in Appendix~\ref{app:proof_linear_observability}. The
constant \(C_F\) in Assumption~\ref{ass:constrained_state_stability} describes the conditioning of the state problem, whereas
\(\sigma_*^{-1}\) Assumption~\ref{ass:parameter_observation_sensitivity} describes the conditioning of parameter recovery. It should be noted that good state reconstruction does not imply accurate parameter identification.

\subsection{Joint local stability and the training objective}
\label{subsec:joint_stability_loss}
Now we give the relationship between the training loss and the reconstruction error. 
For \(Z^\dagger=(u_1^\dagger,u_2^\dagger,\lambda^\dagger)\) and any \(Z=(u_1,u_2,\lambda)\), we define the reconstruction error
\begin{equation*}
    d_{\mathcal X}(Z,Z^\dagger)
    :=\|u_1-u_1^\dagger\|_{\mathcal X_1}
    +\|u_2-u_2^\dagger\|_{\mathcal X_2}
    +|\lambda-\lambda^\dagger|.
    \label{eq:joint_state_parameter_distance}
\end{equation*}
Consider the residual map
\(\mathcal R=(\mathcal R_1, \mathcal R_2,\mathcal R_3)\) in \eqref{eq:general_all_at_once_residual},  with
\begin{equation}
    \|\mathcal R(Z)\|_{\mathcal Y}^2
    :=\sum_{k=1}^{3}\|\mathcal R_k(Z)\|_{\mathcal Y_k}^2,
    \qquad
    \mathcal Y:=\prod_{k=1}^{3}\mathcal Y_k.
    \label{eq:residual_product_norm}
\end{equation}
The continuous physics-constrained objective is
\begin{equation}
\begin{aligned}
    \mathcal J_\delta(Z)
    :={}&
    \sum_{k=1}^{3}\gamma_k
    \|\mathcal R_k(Z)\|_{\mathcal Y_k}^2
    +\gamma_{\rm obs}
    \|\mathcal O_S(Z)-y^\delta\|_{\mathcal Y_{\rm obs}}^2
    +\gamma_{\rm reg}\mathcal L_{\rm reg}(Z),
\end{aligned}
\label{eq:general_continuous_physics_loss}
\end{equation}
where \(\gamma_k,\gamma_{\rm obs}>0\),
\(\gamma_{\rm reg}\geq0\) are scalar coefficients to balance the three terms appropriately, and \(\mathcal L_{\rm reg}\) is a regularization term. The optional
regularizer is absent when \(\gamma_{\rm reg}=0\). In practical implementation, we can use the weighted observation norm for \(\|\cdot\|_{\mathcal Y_{\rm obs}}\). More precisely, if \(\mathsf W_{\rm obs}\) is a positive sensor-weight operator, then we can define a weighted norm \(\|q\|_{\mathcal Y_{\rm obs}, \mathsf W_{\rm obs}}:=
\|\mathsf W_{\rm obs}^{1/2}q\|_{\mathcal Y_{\rm obs}}\) with the eigenvalues of \(\mathsf W_{\rm obs}\) bounded away from zero and infinity. In the implementation, this follows from clipping the sensor weights.

\begin{theorem}
\label{thm:joint_local_stability}
Let
\(\mathcal A=(\mathcal R,\mathcal O_S)\). Suppose that \(\mathcal A\) is
continuously Fr\'echet differentiable near \(Z^\dagger\), its derivative is
locally Lipschitz, and Assumptions~\ref{ass:constrained_state_stability} and
\ref{ass:parameter_observation_sensitivity} hold. Then there are a
neighborhood \(\mathcal N_0\) of \(Z^\dagger\) and a constant
\(C_{\rm loc}\) such that
\begin{equation}
    d_{\mathcal X}(Z,Z^\dagger)
    \leq C_{\rm loc}
    \left(
        \|\mathcal R(Z)\|_{\mathcal Y}
        +\|\mathcal O_S(Z)-\mathcal O_S(Z^\dagger)
        \|_{\mathcal Y_{\rm obs}}
    \right). \label{eq:general_local_nonlinear_stability}
\end{equation}
for every \(Z\in\mathcal N_0\). Moreover,
if \eqref{eq:general_noisy_observation_model} holds and a
joint prediction
\(\widehat Z:=(\widehat U,\widehat\lambda)\) belongs to
\(\mathcal N_0\), then there is a constant $C > 0$ such that
\begin{equation}
    d_{\mathcal X}(\widehat Z, Z^\dagger)
    \leq
    C\left(\mathcal J_\delta(\widehat Z)^{1/2}+\delta\right).
    \label{eq:general_lipschitz_loss_to_error}
\end{equation}
\end{theorem}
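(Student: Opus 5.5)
The proof has two stages: first a nonlinear local stability estimate obtained by linearizing \(\mathcal A=(\mathcal R,\mathcal O_S)\) at \(Z^\dagger\), then a conversion of that estimate into a bound in terms of the loss \(\mathcal J_\delta\).

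\emph{Linearization.} Write \(Z=Z^\dagger+(h,\xi)\) with \(h=U-U^\dagger\) and \(\xi=\lambda-\lambda^\dagger\). Since \(\mathcal R(Z^\dagger)=0\), the mean value (integral) form of Taylor's theorem together with the local Lipschitz continuity of \(D\mathcal A\), with constant \(L\) on a ball \(B_\rho(Z^\dagger)\), gives
\begin{equation*}
\big\|\mathcal R(Z)-\mathcal L^\dagger h-\mathcal R_\lambda^\dagger\xi\big\|_{\mathcal Y}
+\big\|\mathcal O_S(Z)-\mathcal O_S(Z^\dagger)-\mathcal H_U^\dagger h-\mathcal H_\lambda^\dagger\xi\big\|_{\mathcal Y_{\rm obs}}
\le L\,\big(\|h\|_{\mathcal X_U}+|\xi|\big)^2 .
\end{equation*}
Combining this with Proposition~\ref{prop:forward_sensitivity_implies_observability} yields
\begin{equation*}
\|h\|_{\mathcal X_U}+|\xi|\le C_{\rm obs}\big(\|\mathcal R(Z)\|_{\mathcal Y}+\|\mathcal O_S(Z)-\mathcal O_S(Z^\dagger)\|_{\mathcal Y_{\rm obs}}\big)+C_{\rm obs}L\big(\|h\|_{\mathcal X_U}+|\xi|\big)^2 .
\end{equation*}
Choose \(\mathcal N_0\) to be the ball of radius \(r\le\min\{\rho,1/(2C_{\rm obs}L)\}\). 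On this ball the quadratic term is at most \(\tfrac12(\|h\|_{\mathcal X_U}+|\xi|)\), so it can be absorbed into the left-hand side. This gives the first estimate with \(C_{\rm loc}=2C_{\rm obs}\), up to the norm-equivalence constant between \(d_{\mathcal X}\) and \(\|h\|_{\mathcal X_U}+|\xi|\). That constant is harmless as long as \(\|\cdot\|_{\mathcal X_U}\) is taken equivalent to the sum of the component norms, which I would state as the convention.

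\emph{From the residual to the loss.} Since every \(\gamma_k>0\), we have \(\|\mathcal R(\widehat Z)\|_{\mathcal Y}^2\le(\min_k\gamma_k)^{-1}\mathcal J_\delta(\widehat Z)\). Here I use that \(\mathcal L_{\rm reg}\ge0\), which I would state explicitly; it holds automatically when \(\gamma_{\rm reg}=0\). Similarly \(\|\mathcal O_S(\widehat Z)-y^\delta\|\le\gamma_{\rm obs}^{-1/2}\mathcal J_\delta(\widehat Z)^{1/2}\), possibly after multiplying by the factor from the equivalence of the weighted norm, which holds because the sensor weights are clipped. The triangle inequality together with the noise model \eqref{eq:general_noisy_observation_model} gives
\begin{equation*}
\|\mathcal O_S(\widehat Z)-\mathcal O_S(Z^\dagger)\|_{\mathcal Y_{\rm obs}}\le\|\mathcal O_S(\widehat Z)-y^\delta\|_{\mathcal Y_{\rm obs}}+\delta .
\end{equation*}
Inserting these bounds into the first estimate yields the second estimate with \(C=C_{\rm loc}\max\{(\min_k\gamma_k)^{-1/2}+\gamma_{\rm obs}^{-1/2},\,1\}\), up to the weight-equivalence factor.

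\emph{Main obstacle.} The delicate point is the absorption step. It requires that the ball on which the Taylor remainder bound holds also lies inside the region where Proposition~\ref{prop:forward_sensitivity_implies_observability} is applied at the single point \(Z^\dagger\), and that \(\lambda\) stays in \(\Lambda\), which is fine because \(\lambda^\dagger\) is interior. I would handle both by shrinking \(r\) accordingly. Everything else is bookkeeping of constants.
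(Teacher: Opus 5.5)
Your proposal is correct and follows essentially the same route as the paper. Both linearize $\mathcal A$ at $Z^\dagger$, apply Proposition~\ref{prop:forward_sensitivity_implies_observability} to $D\mathcal A(Z^\dagger)e$, absorb the quadratic Taylor remainder on a small ball, and then bound the residual and the data misfit by $\mathcal J_\delta^{1/2}$ using the triangle inequality with the noise level $\delta$; your explicit radius and constants, and the conventions you flag (nonnegativity of $\mathcal L_{\rm reg}$, equivalence of the weighted observation norm, and equivalence of $d_{\mathcal X}$ with the product norm), are the ones the paper uses implicitly.
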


The theorem is local: it gives an error certificate once the candidate lies
on the regular branch near \(Z^\dagger\), but it does not assert that a
nonconvex training procedure will enter that neighborhood. The proof of Theorem \ref{thm:joint_local_stability} is given
in Appendix~\ref{app:proof_nonlinear_loss}.

\begin{remark}
The estimate does not cover distant solution branches, bifurcations,
symmetry-related solutions, or parameter directions that are globally weak.
If only a conditional estimate with modulus of continuity \(\omega\) is
available, the same argument gives
\[
    d_{\mathcal X}(\widehat Z,Z^\dagger)
    \leq C_{\rm stab}\,
    \omega\!\left(
        C_\gamma\mathcal J_\delta(\widehat Z)^{1/2}+\delta
    \right).
\]
This includes H\"older and logarithmic stability estimates.
\end{remark}

While our theoretical analysis considers continuous loss terms in \eqref{eq:general_continuous_physics_loss}, the empirical objective must be implemented using sampled residuals and discrete derivatives in practice. So, there is a mild gap between the continuous theory and the practical training loss introduced in section~\ref{sec:method}. However, if these discrete quantities effectively control their continuous counterparts as shown in Appendix~\ref{app:empirical_consistency}, then the theoretical bound remains valid and every $\widehat Z \in \mathcal{N}_0$ satisfies

\begin{equation}
    d_{\mathcal X}(\widehat Z,Z^\dagger)
    \leq C\left(
        \widehat{\mathcal J}_\delta(\widehat Z)^{1/2}
        +\varepsilon_{\rm disc}+\delta
    \right),
    \label{eq:empirical_multiphysics_lipschitz_bound}
\end{equation}
where \(\varepsilon_{\rm disc}\) collects derivative, quadrature,
interpolation, and constraint-enforcement errors. Approximation and
optimization errors are reflected in the attained empirical loss. The bound
is independent of the particular network architecture: the network supplies
the candidate \(\widehat Z\), while the PDE and the observation configuration
determine whether a small loss controls the reconstruction error
\cite{mishra2022estimates}.

\subsection{Instantiation for the flow and MHD systems}
\label{subsec:theory_instantiations}
We now illustrate how the abstract operators are instantiated in specific physical applications. We focus on three benchmark problems: lid-driven cavity flow, flow past a circular cylinder, and non-ideal magnetohydrodynamics (MHD). For each problem, we explicitly define the corresponding differential operators, unknown parameters, and sparse observed measurements in the inverse problems.

\subsubsection{Incompressible Navier--Stokes systems.}
The flow systems considered here satisfy the incompressible
Navier--Stokes equations
\begin{equation}
\begin{cases}
    \partial_t\mathbf u
    +\lambda_1(\mathbf u\cdot\nabla)\mathbf u
    +\nabla p-\nu\Delta\mathbf u=0,\\
    \nabla\cdot\mathbf u=0.
\end{cases}
\label{eq:ns_governing_section23}
\end{equation}
Here \(\lambda_1\) is the convection coefficient and \(\nu\) is the
kinematic viscosity used in \eqref{eq:ns_governing_section23}. This work studies two settings for this physical system. The first one is the steady lid-driven cavity problem, corresponding to
\(\partial_t\mathbf u=0\) and \(\lambda_1=1\). The second one is the cylinder wake problem. 

For the lid-driven cavity problem, the abstract fields in \eqref{eq:intro_coupled_system} are
assigned as
\begin{equation}
    u_1=p,
    \qquad
    u_2=\mathbf u,
    \qquad
    \lambda=\nu,
    \label{eq:cavity_two_field_grouping}
\end{equation}
where \(u_1 = p\) is the only sparse measured pressure field and \(u_2\) is the unobserved
velocity field. Using the standard physical notation, the residual is
\begin{equation}
    \mathcal R_{\rm NS}(\mathbf u,p,\nu)
    =
    \left(
        (\mathbf u\cdot\nabla)\mathbf u+\nabla p-\nu\Delta\mathbf u,
        \nabla\cdot\mathbf u,
        \mathbf u|_{\partial\Omega}-\mathbf g,
        \int_\Omega p\,dx
    \right).
    \label{eq:ns_instance_general_residual}
\end{equation}
Here \(\mathbf g\) is the prescribed boundary velocity. The final component
fixes the additive pressure gauge. For an admissible
velocity perturbation \(\delta\mathbf u\), viscosity gives
\(\nu^\dagger\|\nabla\delta\mathbf u\|_{L^2}^2\), while
\begin{equation*}
    ((\mathbf u^\dagger\cdot\nabla)\delta\mathbf u,
       \delta\mathbf u)_{L^2}=0,
    \qquad
    |((\delta\mathbf u\cdot\nabla)\mathbf u^\dagger,
       \delta\mathbf u)_{L^2}|
    \leq
    \|\nabla\mathbf u^\dagger\|_{L^\infty}
    \|\delta\mathbf u\|_{L^2}^2.
    \label{eq:ns_representative_energy_bounds}
\end{equation*}
These are the concrete counterparts of
\eqref{eq:dissipative_coercivity}--
\eqref{eq:transport_coupling_relative_bound}.
Assumption~\ref{ass:constrained_state_stability} additionally
requires the mixed velocity--pressure stability and a trivial constrained
kernel on the selected branch.

For the cylinder wake, the measured quantity is the velocity field, so the
assignment is instead
\begin{equation*}
    u_1=\mathbf u,
    \qquad
    u_2=p,
    \qquad
    \lambda=(\lambda_1,\nu).
    \label{eq:cylinder_two_field_grouping}
\end{equation*}
The corresponding space--time estimate uses the unsteady linearized
Navier--Stokes operator. When the initial condition is included in
\(\mathcal C\), it removes an independent perturbation at \(t=0\); if it is
not prescribed, it must be treated as part of the unknown state and the
invertibility assumption must be checked for that enlarged problem.

\subsubsection{Non-ideal magnetohydrodynamics.}
The non-ideal MHD system considered here is governed by
\begin{equation*}
\left\{
\begin{aligned}
    &\partial_t\rho+\nabla\cdot(\rho\mathbf v)=0,\\
    &\partial_t(\rho\mathbf v)
    +\nabla\cdot\Bigl[
        \rho\mathbf v\otimes\mathbf v
        +\Bigl(p+\tfrac12|\mathbf B|^2\Bigr)\mathbf I
        -\mathbf B\otimes\mathbf B-\boldsymbol\tau
    \Bigr]=0,\\
    &\partial_t\mathbf B
    -\nabla\times(\mathbf v\times\mathbf B)
    +\nabla\times(\eta\mathbf J)=0,\\
    &\partial_t E
    +\nabla\cdot\Bigl[
        \Bigl(E+p+\tfrac12|\mathbf B|^2\Bigr)\mathbf v
        -(\mathbf v\cdot\mathbf B)\mathbf B\\
    &\qquad\qquad\quad
        -\boldsymbol\tau\mathbf v-\eta\mathbf J\times\mathbf B
    \Bigr]=0,\\
    &\nabla\cdot\mathbf B=0.
\end{aligned}
\right.
\label{eq:mhd_governing_section23}
\end{equation*}
The current density, total energy density, and viscous stress tensor are
defined by
\begin{equation*}
\begin{aligned}
    \mathbf J
    &=\nabla\times\mathbf B,\\
    E
    &=\frac{p}{\gamma-1}
      +\frac12\rho|\mathbf v|^2
      +\frac12|\mathbf B|^2,\\
    \boldsymbol\tau
    &=\nu\left[
        \nabla\mathbf v+(\nabla\mathbf v)^{T}
        -\frac23(\nabla\cdot\mathbf v)\mathbf I
    \right],
\end{aligned}
\label{eq:mhd_definitions_section23}
\end{equation*}
where \(\gamma>1\) is the adiabatic index and \(\mathbf I\) is the identity
tensor. For the magnetic-field-observed MHD problem, we set
\begin{equation*}
    u_1=\mathbf B,
    \qquad
    u_2=(\rho,\mathbf v,p),
    \qquad
    \lambda=(\nu,\eta).
    \label{eq:mhd_two_field_grouping}
\end{equation*}
Here the vector field \(\mathbf B\) is one physical field in the abstract
notation of Section~\ref{sec:introduction}, where \(\mathbf B\) is only observed on some sensor locations. \(\rho\), \(\mathbf v\), and
\(p\) are density, velocity, and pressure respectively, while \(\nu\) and \(\eta\) are
unknown viscosity and magnetic diffusivity. In the pressure-assisted cases we retain
the same state grouping and augment the observation map by
\begin{equation*}
    \mathcal O_S^{\rm aug}(U)
    :=\bigl(\mathcal O_B(\mathbf B),
            \mathcal O_p(\Pi_pu_2)\bigr),
    \qquad \Pi_pu_2=p.
    \label{eq:mhd_augmented_observation_operator}
\end{equation*}
where \(\mathcal O_B\) and \(\mathcal O_p\) sample the magnetic and pressure
fields at their respective sensor sets.
The proof above only uses boundedness and differentiability of the observation
map, so it applies to this augmented operator without further changes.

The viscous and resistive terms directly control
\(\nabla\delta\mathbf v\) and \(\nabla\delta\mathbf B\). Let
\(\delta U=(\delta\mathbf B,\delta\rho,\delta\mathbf v,\delta p)\) and let
\(\mathcal E^\dagger(\delta U)\) denote the quadratic energy of this
linearized state. On a regular branch with positive density and pressure, we
assume that the energy is equivalent to the chosen state norm,
\begin{equation*}
    c_E\|\delta U\|_{\mathcal H}^2
    \leq\mathcal E^\dagger(\delta U)
    \leq C_E\|\delta U\|_{\mathcal H}^2,
    \label{eq:mhd_energy_equivalence}
\end{equation*}
for constants \(c_E,C_E>0\). The linearized equations then have the
representative estimate
\begin{equation}
\begin{aligned}
    \frac{d}{dt}\mathcal E^\dagger(\delta U)
    &+c_\nu\nu^\dagger
        \|\nabla\delta\mathbf v\|_{L^2}^2
    +c_\eta\eta^\dagger
        \|\nabla\delta\mathbf B\|_{L^2}^2\\
    &\leq
    C^\dagger\mathcal E^\dagger(\delta U)
    +C_r\|r\|_{\mathcal Y}^2,
\end{aligned}
\label{eq:mhd_linearized_energy_estimate}
\end{equation}
where \(r\) is the linearized equation and constraint residual, and the
positive constants depend on bounds for the reference solution. Together
with the initial, periodic, and magnetic-divergence constraints,
Gr\"onwall's inequality gives an a priori state estimate over every time
interval on which the reference solution remains regular. The coupling terms
are written out in Appendix~\ref{app:mhd_coupling_details}. They explain how
magnetic observations can constrain the hydrodynamic fields, but they do not
by themselves verify
Assumption~\ref{ass:parameter_observation_sensitivity} for a particular
sensor layout.

\section{Method}
\label{sec:method}

In this section, we present the framework to reconstruct the full states of the coupled physical system and identify unknown parameters from sparse single-field observations. To bridge the gap between sparse point measurements and full-field reconstruction, the observation set $\mathsf S$ is first embedded into a latent space. A cross-attention mechanism then transfers the sparse sensor information onto a target reconstruction grid. The resulting representation is concatenated with the grid features and fed into a Fourier Neural Operator (FNO), which serves as a decoder to predict all physical channels. 
Sparse measurements and grid-based neural operators use different data
structures. We connect them in three steps: sensor-token construction,
cross-attention from the sensors to the grid, and FNO decoding followed by
physics-constrained optimization.

Specifically, each measurement in \(\mathsf S\) is first written as a sensor token
\(s_i\), containing its location \(x^{(i)}\), measured value
\(y^{(i)}:=u_1^{(i)}\), and, when needed, a channel-availability mask
\(m^{(i)}\). A cross-attention encoder transfers these irregular sensor
tokens to the computational grid. The resulting dense latent field is passed
to an FNO decoder, which returns all components of \(U\) in \eqref{eq:intro_coupled_system}. The network and the
unknown parameter vector \(\lambda\) are then fitted together using the
observation, equation, and constraint residuals. An overview of the framework is illustrated in Figure~\ref{fig:overview}.

\begin{figure}[!htb]
    \centering
    \includegraphics[width=1.0\textwidth]
{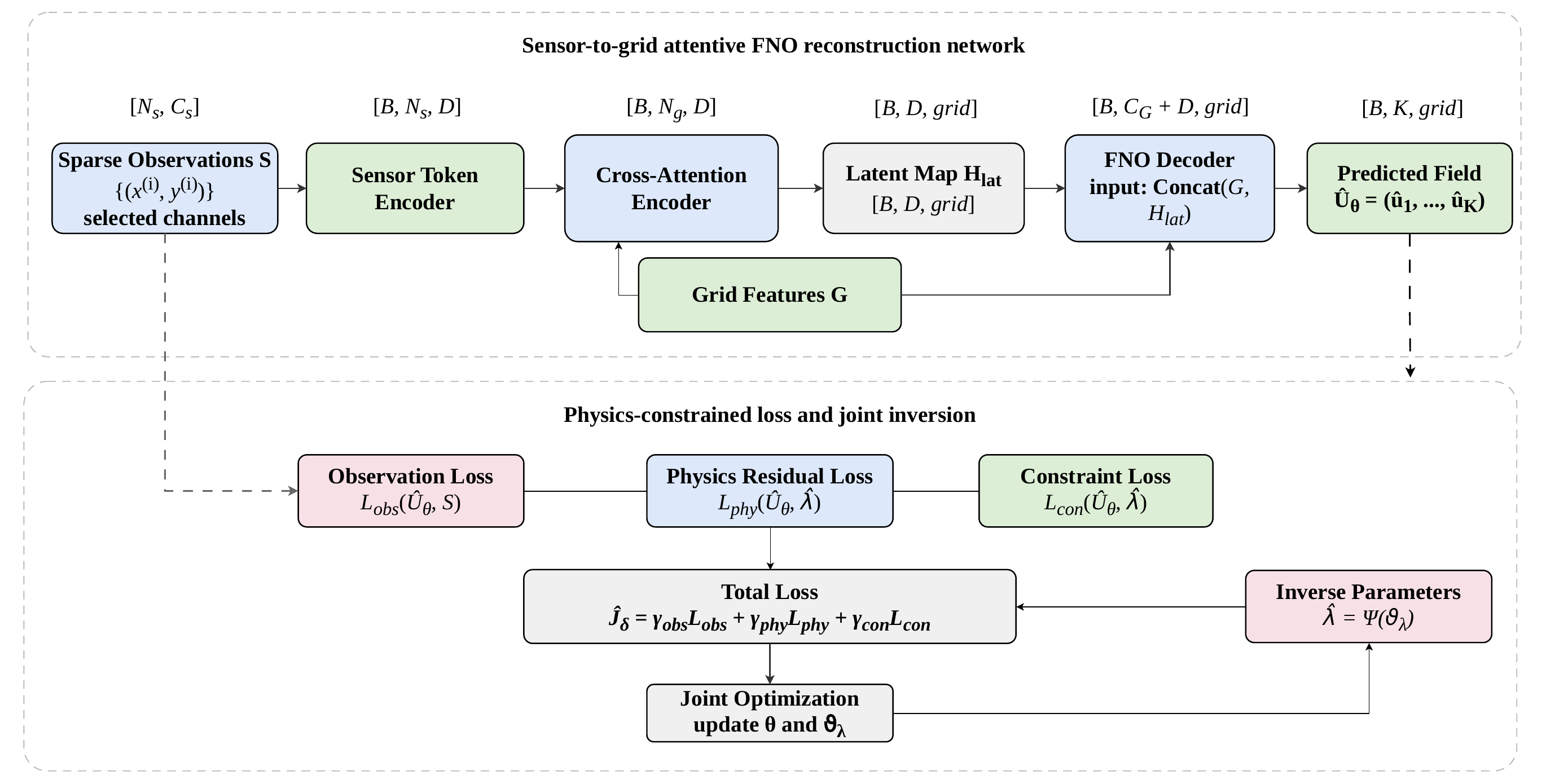}
    \caption{Overall architecture of the proposed physics-constrained attentive FNO framework.}
    \label{fig:overview}
\end{figure}

\subsection{Sensor and Grid Token Embedding}

Sensors can be arbitrarily located without requiring grid alignment. To process these unstructured point data, the observations are first mapped into a latent space. Let \(d_x\)
be the coordinate dimension and let \(d_y\) be the number of scalar
components measured at each sensor. Letting
\[
    x^{(i)}\in\mathbb R^{d_x},\qquad
    y^{(i)}:=u_1^{(i)}\in\mathbb R^{d_y},\qquad
    m^{(i)}\in\{0,1\}^{d_y},
\]
\(m^{(i)}\) records the available components. For a scalar observation
without missing channels, \(d_y=1\) and \(m^{(i)}=1\). The symbol
\(\odot\) denotes componentwise multiplication. A sensor token and a grid
token are defined by
\begin{equation}
\begin{aligned}
    s_i
    &:=
    \bigl[
        \psi_x(x^{(i)}),
        \;m^{(i)}\odot y^{(i)},
        \;m^{(i)}
    \bigr],
    \qquad i=1,\ldots,N_s,\\
    \chi_j^g&:=\psi_g(x_j^g),
    \qquad j=1,\ldots,N_g,\\
    H_s&:=\phi_s(\{s_i\}_{i=1}^{N_s})
    \in\mathbb R^{B\times N_s\times D},\\
    H_g&:=\phi_g(\{\chi_j^g\}_{j=1}^{N_g})
    \in\mathbb R^{B\times N_g\times D}.
\end{aligned}
\label{eq:method_token_embedding}
\end{equation}
Here \(x_j^g\) is the \(j\)-th coordinate in the grid set
\(\mathsf G:=\{x_j^g\}_{j=1}^{N_g}\), \(N_g\) is the number of grid
points, \(D\) is the latent width, and \(B\) is the batch size.
The maps \(\psi_x\) and \(\psi_g\) encode coordinates, while \(\phi_s\)
and \(\phi_g\) are trainable feature maps.

\subsection{Cross-Attention Encoder}

The encoder aims to turn sparse observations into a dense latent field, we here use multi-head cross-attention \cite{vaswani2017attention}. The grid tokens $H_g$ are used as as queries, and the sensor tokens $H_s$ are used as keys and values respectively:
\begin{equation}
    Q=H_gW_Q,\qquad K=H_sW_K,\qquad V=H_sW_V,
    \label{eq:method_qkv}
\end{equation}
where \(W_Q,W_K,W_V\) are trainable projection matrices. Their output
features are split into \(N_h\) heads of width \(d_a\).
The sensor-conditioned grid representation is obtained through
multi-head cross-attention. For the \(h\)th attention head,
\begin{equation}
    A^{(h)}
    =
    \operatorname{softmax}
    \left(
        \frac{Q^{(h)}(K^{(h)})^\top}{\sqrt{d_a}}
        +\beta\log\mathbf w
        +\mathcal M_s
    \right)V^{(h)},
    \label{eq:method_cross_attention}
\end{equation}
where \(d_a\) is the feature dimension of each head and
\(\mathbf w=(w_1,\ldots,w_{N_s})\) contains the strictly positive
residual sensor weights. These weights are initialized to one, and they
periodically updated from the normalized local squared PDE residuals
interpolated at the sensor locations. The coefficient
\(\beta\) controls the influence of the residual prior, with
\(\beta=0.5\) in our experiments. The mask \(\mathcal M_s\) is zero for
active sensors and negative infinity for inactive sensors.

More explicitly, let \(\mathcal I_s\) denote the set of active sensors.
For the \(j\)th grid query, the attention coefficient assigned to the
\(i\)th sensor by the \(h\)th head can be written as
\begin{equation}
    \alpha_{ji}^{(h)}
    =
    \frac{
        w_i^\beta
        \exp\!\left(
            \bigl(q_j^{(h)}\bigr)^\top k_i^{(h)}/\sqrt{d_a}
        \right)
    }{
        \displaystyle
        \sum_{r\in\mathcal I_s}
        w_r^\beta
        \exp\!\left(
            \bigl(q_j^{(h)}\bigr)^\top k_r^{(h)}/\sqrt{d_a}
        \right)
    },
    \qquad i\in\mathcal I_s,
    \label{eq:method_attention_coefficient}
\end{equation}
Here \(q_j^{(h)}\), \(k_i^{(h)}\), and \(v_i^{(h)}\) are the query, key,
and value vectors in head \(h\), respectively.
For inactive sensors, \(\alpha_{ji}^{(h)}=0\). Thus,
\(w_i^\beta\) acts as a multiplicative sensor-importance prior, assigning
a larger contribution to sensors with larger residual weights.
The output of each head and the final multi-head output are
\begin{equation}
\begin{aligned}
    a_j^{(h)}
    &=
    \sum_{i\in\mathcal I_s}
    \alpha_{ji}^{(h)}v_i^{(h)},\\
    A_j
    &=
    W_O\operatorname{Concat}
    \left(
        a_j^{(1)},\ldots,a_j^{(N_h)}
    \right),
\end{aligned}
\label{eq:method_multihead_output}
\end{equation}
where \(N_h\) is the number of attention heads and \(W_O\) is a
trainable output projection. When sensor weighting is disabled ($w_i = 1$), the formulation reduces to standard cross-attention. Equation \eqref{eq:method_cross_attention} corresponds to the log-bias mode. In the feature and value-gate modes, the feature and value-gate modes either incorporate sensor weights as additional input features or use them to scale the sensor tokens.

The attention output is further refined through two residual updates:

\begin{equation}
\begin{aligned}
    \widetilde H_g
    &= H_g + A,\\
    H_g^{\mathrm{out}}
    &=
    \widetilde H_g+
    \operatorname{FFN}
    \left(\operatorname{LN}(\widetilde H_g)\right),\\
    H_{\rm lat}
    &=
    \operatorname{Reshape}\left(H_g^{\mathrm{out}}\right)
    \in\mathbb R^{B\times D\times N_1\times N_2}.
\end{aligned}
\label{eq:method_latent_map}
\end{equation}
Here, the first residual connection combines the original grid
representation \(H_g\) with the sensor information \(A\) obtained from
cross-attention, thereby preserving the grid features while
incorporating the observations. The layer-normalization operator
\(\operatorname{LN}\) normalizes the feature channels of each grid token
to improve numerical stability. The position-wise feed-forward network
\(\operatorname{FFN}\), implemented as a two-layer multilayer perceptron
with a nonlinear activation, independently transforms each grid token
and mixes its latent feature channels. Its parameters are shared across
all grid locations. The second residual connection preserves the fused
representation and facilitates stable optimization.

Finally, the token sequence \(H_g^{\mathrm{out}}\in
\mathbb R^{B\times N_g\times D}\), where \(N_g=N_1N_2\) and
\(N_1,N_2\) are the two grid dimensions, is reshaped into the latent
field \(H_{\rm lat}\).

For three-dimensional or time-dependent problems, \(H_{\rm lat}\) is reshaped
into the corresponding higher-dimensional latent field. When
bidirectional refinement is enabled, the attention blocks follow
\[
    \mathsf S\rightarrow\mathsf G\rightarrow
    \mathsf S\rightarrow\mathsf G,
\]
allowing the sensor tokens to incorporate global grid context before
producing the final latent field.

\subsection{FNO Decoder and Physical Parameterization}

The latent field is concatenated with the grid features:
\begin{equation*}
    H_{\rm dec}
    :=
    \operatorname{Concat}(G,H_{\rm lat}),
    \label{eq:method_decoder_input}
\end{equation*}
where \(G\) contains the grid coordinates and any fixed positional
features. The FNO decoder is
\begin{equation*}\label{eq_fno_decoder}
    \widehat U_\theta
    =
    D_\theta^{\rm FNO}(H_{\rm dec}).
\end{equation*}

The decoder returns every component of \(U\), including the fields that are not measured. For example, in the MHD problem it reconstructs the hydrodynamic variables from magnetic observations. Its \(\ell\)-th spectral block is
\cite{li2020fourier}
\begin{equation*}
    H^{(\ell+1)}
    =
    \sigma\!\left(
        \mathcal K_\ell H^{(\ell)}
        +\mathcal P_\ell H^{(\ell)}
    \right),
    \qquad
    \widehat{\mathcal K_\ell H^{(\ell)}}(k)
    =
    R_\ell(k)\widehat H^{(\ell)}(k),
    \quad k\in\mathcal M_{\rm trunc}.
    \label{eq:method_spectral_block}
\end{equation*}
Here \(H^{(\ell)}\) is the latent feature field, \(\sigma\) is the
activation function, \(\mathcal P_\ell\) is a pointwise linear map, and
\(\mathcal K_\ell\) is a spectral convolution. The hat denotes the discrete
Fourier transform, \(R_\ell(k)\) is the learned multiplier for mode \(k\),
and \(\mathcal M_{\rm trunc}\) is the retained set of Fourier modes.

\subsection{Physics-Constrained Loss Function}

The sensors used in the numerical examples measure state variables directly,
so \(\mathcal O_S\) has no direct dependence on \(\lambda\). We therefore
write its state restriction as
\(\mathcal O_S^U(U):=\mathcal O_S(U,\lambda)\). The predicted measurements
are
\begin{equation*}
    \widehat y
    =
    \mathcal O_S^U(\widehat U_\theta)
    \in\mathcal Y_{\rm obs}.
    \label{eq:method_sensor_prediction}
\end{equation*}

This gives the weighted observation loss
\begin{equation*}
    \mathcal L_{\mathrm{obs}}
    =
    \frac{
        \displaystyle
        \sum_{i\in\mathcal I_s}
        w_i\left\|\widehat y^{(i)}-y^{(i)}\right\|_2^2
    }{
        \displaystyle
        \sum_{i\in\mathcal I_s}w_i
    },
    \label{eq:observation_loss}
\end{equation*}
where \(\mathcal I_s\subseteq\{1,\ldots,N_s\}\) is the set of active
sensors, \(\widehat y^{(i)}\) is the prediction at sensor \(i\), and
\(w_i>0\) is its residual-aware weight. The denominator is nonzero
whenever at least one sensor is active.

The unknown parameter vector is represented by an unconstrained trainable
variable \(\vartheta_\lambda\in\mathbb R^m\) and an admissibility map
\begin{equation*}
    \widehat\lambda
    =
    \Psi(\vartheta_\lambda)\in\Lambda.
    \label{eq:positive_parameterization}
\end{equation*}
If the unknown parameters are positive, one can apply $\Psi(\vartheta_\lambda)=\exp(\vartheta_\lambda)$ component-wise to ensure positivity. A bounded admissible interval can instead be enforced using a scaled sigmoid function.

The physics residual and the boundary or initial-condition residual are
evaluated on the structured grid:
\begin{equation*}
\begin{aligned}
    \mathcal R_{\mathrm{phy}}(x)
    &:=
    \mathcal F
    \left(
        x,\widehat U_\theta,
        \nabla\widehat U_\theta,
        \nabla^2\widehat U_\theta;
        \widehat\lambda
    \right),\\
    \mathcal R_{\mathrm{con}}(x)
    &:=
    \mathcal B(x,\widehat U_\theta;\widehat\lambda)-g(x),
    \qquad x\in\mathcal D.
\end{aligned}
\label{eq:physical_residuals}
\end{equation*}
Here \(\mathcal F\) stacks the governing-equation residuals,
\(\mathcal B\) is the relevant boundary or initial operator, and \(g\)
contains the prescribed data. These are components of the general residual
\(\mathcal R\) in \eqref{eq:general_all_at_once_residual}. Spatial
derivatives, and temporal derivatives when present, are evaluated using
second-order central finite differences. Let
\(\mathsf G_{\mathrm{int}}\) and \(\mathsf G_{\mathrm{con}}\) denote the
interior and constraint grid-point sets. The corresponding losses are
\begin{equation*}
\begin{aligned}
    \mathcal L_{\mathrm{phy}}
    &:=
    \frac{1}{|\mathsf G_{\mathrm{int}}|}
    \sum_{x_j^g\in\mathsf G_{\mathrm{int}}}
    \left\|\mathcal R_{\mathrm{phy}}(x_j^g)\right\|_2^2,\\
    \mathcal L_{\mathrm{con}}
    &:=
    \frac{1}{|\mathsf G_{\mathrm{con}}|}
    \sum_{x_j^g\in\mathsf G_{\mathrm{con}}}
    \left\|\mathcal R_{\mathrm{con}}(x_j^g)\right\|_2^2.
\end{aligned}
\label{eq:physical_constraint_losses}
\end{equation*}
The sampled objective in
\eqref{eq:empirical_multiphysics_lipschitz_bound} is implemented as
\begin{equation*}
    \widehat{\mathcal J}_\delta(\theta,\vartheta_\lambda)
    :=
    \gamma_{\mathrm{obs}}\mathcal L_{\mathrm{obs}}
    +
    \gamma_{\mathrm{phy}}\mathcal L_{\mathrm{phy}}
    +
    \gamma_{\mathrm{con}}\mathcal L_{\mathrm{con}}.
    \label{eq:total_training_loss}
\end{equation*}
The positive weights control the three contributions. The network
parameters \(\theta\) and inverse variables \(\vartheta_\lambda\) are optimized
jointly. We write \(E_\theta\) for the complete sensor-to-grid encoder
defined by \eqref{eq:method_token_embedding}--\eqref{eq:method_latent_map}. The training procedure of our method is summarized in Algorithm~\ref{alg:method}.

\begin{algorithm}[t]
\caption{Training procedure of the proposed physics-constrained attentive FNO}
\label{alg:method}
\small
\begin{algorithmic}[1]

\Require Sensor set
\(\mathsf S=\{(x^{(i)},y^{(i)})\}_{i=1}^{N_s}\),
grid features \(G\), residual operators \(\mathcal F\) and \(\mathcal B\),
observation operator \(\mathcal O_S^U\), loss weights
\(\gamma_{\rm obs},\gamma_{\rm phy},\gamma_{\rm con}\), and iteration count
\(N_{\rm it}\)
\State Initialize \(\theta\), \(\vartheta_\lambda\), and
\(w=(w_1,\ldots,w_{N_s})\)

\For{\(n=1,\ldots,N_{\rm it}\)}

    \State \(\widehat\lambda\gets\Psi(\vartheta_\lambda)\)

    \State
    \(H_{\rm lat}\gets E_\theta(\mathsf S,G;w)\)

    \State
    \(\widehat U_\theta\gets
      D_\theta^{\rm FNO}
      \bigl(\operatorname{Concat}(G,H_{\rm lat})\bigr)\)

    \State
    \(\widehat y\gets\mathcal O_S^U(\widehat U_\theta)\)

    \State Evaluate
    \(\mathcal L_{\rm obs}\), \(\mathcal L_{\rm phy}\), and
    \(\mathcal L_{\rm con}\) using
    \((\widehat U_\theta,\widehat\lambda)\)

    \State
    \(\widehat{\mathcal J}_\delta\gets
      \gamma_{\rm obs}\mathcal L_{\rm obs}
      +\gamma_{\rm phy}\mathcal L_{\rm phy}
      +\gamma_{\rm con}\mathcal L_{\rm con}\)

    \State Update \((\theta,\vartheta_\lambda)\) by one gradient step on
    \(\widehat{\mathcal J}_\delta\)

    \If{residual-aware sensor weighting is enabled}
        \State Update, clip, and normalize \(w\)
    \EndIf

\EndFor

\State \Return
\(\widehat U_{\theta^\ast}\) and
\(\widehat\lambda=\Psi(\vartheta_\lambda^\ast)\)
\Ensure Reconstructed state \(\widehat U = \widehat U_{\theta^\ast}\) and identified parameters
\(\widehat\lambda\)
\end{algorithmic}
\end{algorithm}

\section{Numerical Study}
\label{sec:numerical_study}
We evaluate the performance of the proposed framework on four test problems, including two- and three-dimensional lid-driven cavity test problems~\cite{ghia1982high,bruneau2006two,albensoeder2005accurate}, a two-dimensional cylinder wake test problem~\cite{schafer1996benchmark}, and a two-dimensional non-ideal MHD Orszag--Tang vortex test problem~\cite{orszag1979small,toth2000constraint}. To ensure comprehensive evaluation, we cover a spectrum of scenarios, including steady and unsteady systems, 2D and 3D spatial domains, boundary- and interior-dominated observations, as well as the reconstruction of various physical fields. For clarity, we summarize the settings of all numerical test problems in Table~\ref{tab:example_summary}. The code for this study will be released upon paper acceptance.

\begin{table}[H]
\centering
\caption{Inverse-problem settings for the four benchmarks.}
\label{tab:example_summary}
\small
\setlength{\tabcolsep}{5pt}
\renewcommand{\arraystretch}{1.12}

\begin{tabularx}{\linewidth}{
  @{}
  >{\raggedright\arraybackslash}p{0.19\linewidth}
  >{\raggedright\arraybackslash}p{0.40\linewidth}
  >{\raggedright\arraybackslash}X
  @{}
}
\toprule
\textbf{Benchmark}
& \textbf{Sparse observations}
& \textbf{Inferred fields and parameters} \\
\midrule

2D lid-driven cavity
& Boundary pressure $p|_{\partial\Omega}$.
  $N_s\in\{100,200,300\}$ at $Re=3200$;
  $N_s\in\{200,300,400\}$ at $Re=5000$.
& Full field $(u,v,p)$ and $Re$. \\
\midrule

3D lid-driven cavity
& Boundary pressure $p|_{\partial\Omega}$;
  $N_s\in\{500,1000,1500\}$.
& Full field $(u,v,w,p)$ and $Re$. \\
\midrule

2D cylinder wake
& Boundary velocity
  $(u,v)|_{\partial\Omega\times(0,T]}$;
  $N_s\in\{13{,}000,14{,}000,15{,}000\}$.
  No pressure observations.
& Time-dependent field $(u,v,p)(x,y,t)$,
  $Re$, and $\lambda_1$. \\
\midrule

2D non-ideal MHD
& Magnetic field $(B_x,B_y)$;
  $N_B\in\{5{,}000,10{,}000,15{,}000\}$.
  Optional pressure observations: none, boundary, or interior;
  $N_p=10{,}000$ when used.
& Space--time fields
  $(\rho,v_x,v_y,B_x,B_y,p)$,
  current density $j_z$, and parameters $\nu,\eta$. \\
\bottomrule
\end{tabularx}

\vspace{4pt}
\begin{minipage}{\linewidth}
\footnotesize
\textit{Optimization.}
The two cavity problems use joint AdamW~\cite{loshchilov2019decoupled} optimization of the network
and $\nu$ for 10{,}000 and 5{,}000 epochs, respectively.
The cylinder wake additionally optimizes $\lambda_1$ for 10{,}000 epochs.
The MHD problem alternates field, parameter, and joint Muon updates
for 7{,}000 epochs.
\end{minipage}
\end{table}

For the noise tests, independent Gaussian perturbations are added to each observed channel with standard deviation $\gamma\,\operatorname{std}(y^{\mathrm{obs}})$, where $\gamma\in\{0,0.01,0.05\}$. To access the reconstruction accuracy, we use the following relative errors defined as:
\begin{equation*}
    e_q=
    \frac{\|q_\theta-q_{\mathrm{ref}}\|_2}
    {\|q_{\mathrm{ref}}\|_2},
    \qquad
    e_\lambda=
    \frac{\|\lambda_\theta-\lambda_{\mathrm{ref}}\|}
    {\|\lambda_{\mathrm{ref}}\|}.
\end{equation*}
For incompressible flows, pressure errors are evaluated after removing the spatial mean.

\subsection{2D Lid-Driven Cavity Flow}

We first consider the inverse reconstruction of the two-dimensional lid-driven
cavity flow governed by the steady incompressible Navier--Stokes equations:
\begin{equation*}\label{eq:ldc_param}
\left\{
\begin{aligned}
(\mathbf{u}\cdot\nabla)\mathbf{u}+\nabla p
&=\nu\Delta \mathbf{u}, && \text{in } \Omega,\\
\nabla\cdot \mathbf{u} &=0, && \text{in } \Omega,\\
\mathbf{u} &=\mathbf{g}, && \text{on } \partial\Omega,
\end{aligned}
\right.
\end{equation*}
where $\mathbf{u}=(u,v)^{\top}$ denotes the velocity field,
$p$ denotes the scalar pressure, and $\nu=1/\mathrm{Re}$ is the kinematic
viscosity. The spatial domain is $\Omega=(0,1)^2$ and the boundary condition is given by
\begin{equation*}
\mathbf{g}(x_1,x_2)=
\begin{cases}
(1,0)^{\top}, & x_2=1,\\
(0,0)^{\top}, & \text{otherwise}.
\end{cases}
\end{equation*}
In this case, the sparse pressure observations are only given on the boundary, while the velocity field and the Reynolds number are entirely unknown. 

Our goal is to reconstruct $(u,v,p)$ and identify $\nu$ from those sparse pressure observations. The pressure observations are obtained by randomly selecting sensors on the boundary, where the reference solutions are generated with OpenFOAM~\cite{weller1998tensorial} on a uniform $129\times129$ finite-volume mesh.

For the \(\mathrm{Re}=5000\) case, a direct numerical check of the local
parameter-to-pressure sensitivity in
Assumption~\ref{ass:parameter_observation_sensitivity} is given in
Appendix~\ref{app:a2_diagnostic}. This check uses separately converged
OpenFOAM forward solutions and is independent of the neural-network training;
the resulting singular-value estimates are reported in
Table~\ref{tab:a2_ldc_re5000_diagnostic}.

\subsubsection{Inverse problem for 2D Lid-Driven Cavity Flow at
\texorpdfstring{$\mathrm{Re}=3200$}{Re=3200}}
For the $\mathrm{Re}=3200$ case, the encoder consists of a two-layer sensor MLP, four-head attention modules, and a bidirectional sensor--grid information path. The MLP has a hidden width of 64 and an output dimension of 128. The FNO decoder consists of four residual FNO blocks with a channel width of 64. Each block combines a spectral convolution retaining 24 Fourier modes in each spatial direction with a parallel $1\times1$ convolution, followed by group normalization and GELU activation. Replicate padding of size 8 is applied to the feature map before the FNO blocks. The predicted pressures at the sensor locations are obtained from the reconstructed grid field while the PDE residuals are evaluated at the interior points of the structured grid using second-order central finite differences. The initial learning rates are $10^{-3}$ and  $5\times10^{-3}$ for the network parameters and the trainable log-viscosity parameter, respectively. During the initial 500 warm-up epochs, all sensor weights are fixed at unity, while the network parameters and the log-viscosity parameter are updated according to the standard training procedure. Following this warm-up phase, the sensor weights are updated based on the instantaneous PDE residuals, with subsequent updates performed at every $500$ epochs. The updated weights are used in both the observation loss and the attention calculation. For the high-Reynolds-number cases, the trainable viscosity parameter is initialized according to a smaller initial Reynolds number: 2000 for the target $\mathrm{Re}=3200$ and 3000 for the target $\mathrm{Re}=5000$.

\begin{table}[htb]
\centering
\caption{Reconstruction and parameter inversion errors for the 2D lid-driven cavity inverse problem at $\mathrm{Re}=3200$ under different numbers of pressure observations and noise levels.}
\label{tab:ldc2d_re3200_sensor_noise_results}
\resizebox{\textwidth}{!}{
\begin{tabular}{ccccccc}
\toprule
\# Sensors & Noise level & $e_{|\mathbf{u}|}$ & $e_u$ & $e_v$ & $e_p$ & $e_{\mathrm{Re}}$ \\
\midrule
100 & 0.00 & $7.492\times 10^{-2}$ & $6.932\times 10^{-2}$ & $8.427\times 10^{-2}$ & $2.409\times 10^{-1}$ & $7.967\times 10^{-2}$ \\
200 & 0.00 & $4.854\times 10^{-2}$ & $4.947\times 10^{-2}$ & $5.316\times 10^{-2}$ & $1.078\times 10^{-1}$ & $6.269\times 10^{-2}$ \\
300 & 0.00 & $3.846\times 10^{-2}$ & $3.563\times 10^{-2}$ & $4.524\times 10^{-2}$ & $7.701\times 10^{-2}$ & $9.220\times 10^{-2}$ \\
\midrule
100 & 0.01 & $4.638\times 10^{-2}$ & $4.801\times 10^{-2}$ & $5.066\times 10^{-2}$ & $1.027\times 10^{-1}$ & $7.105\times 10^{-2}$ \\
200 & 0.01 & $4.941\times 10^{-2}$ & $5.093\times 10^{-2}$ & $5.364\times 10^{-2}$ & $1.051\times 10^{-1}$ & $8.565\times 10^{-2}$ \\
300 & 0.01 & $4.064\times 10^{-2}$ & $3.746\times 10^{-2}$ & $4.779\times 10^{-2}$ & $8.460\times 10^{-2}$ & $1.105\times 10^{-1}$ \\
\midrule
100 & 0.05 & $1.083\times 10^{-1}$ & $1.031\times 10^{-1}$ & $1.191\times 10^{-1}$ & $2.415\times 10^{-1}$ & $2.455\times 10^{-1}$ \\
200 & 0.05 & $4.489\times 10^{-2}$ & $4.284\times 10^{-2}$ & $5.138\times 10^{-2}$ & $8.831\times 10^{-2}$ & $1.303\times 10^{-1}$ \\
300 & 0.05 & $4.030\times 10^{-2}$ & $3.929\times 10^{-2}$ & $4.576\times 10^{-2}$ & $8.681\times 10^{-2}$ & $9.524\times 10^{-2}$ \\
\bottomrule
\end{tabular}
}
\end{table}

We further compare the proposed method with a vanilla PINN at $\mathrm{Re}=3200$ using 300 boundary pressure observations without noise. As shown in \autoref{tab:ldc2d_pinn_comparison}, the proposed method yields lower errors for the reconstructed velocity and pressure fields, as well as the Reynolds number. The improvement is particularly clear in the velocity reconstruction. These results indicate that the proposed method makes more effective use of sparse pressure observations in this case.

\begin{table}[htb]
\centering
\caption{Comparison with vanilla PINN for the 2D lid-driven cavity
inverse problem using sparse boundary pressure observations without
noise. The numbers of observations are 300 and 400 for
$\mathrm{Re}=3200$ and $5000$, respectively.}
\label{tab:ldc2d_pinn_comparison}
\small
\setlength{\tabcolsep}{5pt}
\renewcommand{\arraystretch}{1.12}
\resizebox{\linewidth}{!}{%
\begin{tabular}{llccccc}
\toprule
$\mathrm{Re}$ & Method
& $e_{|\mathbf{u}|}$ & $e_u$ & $e_v$
& $e_p$ & $e_{\mathrm{Re}}$ \\
\midrule
\multirow{2}{*}{$3200$}
& Vanilla PINN
& $7.712\times10^{-1}$
& $8.271\times10^{-1}$
& $9.387\times10^{-1}$
& $7.537\times10^{-1}$
& $2.155\times10^{-1}$ \\
& Ours
& $\mathbf{3.846\times10^{-2}}$
& $\mathbf{3.563\times10^{-2}}$
& $\mathbf{4.524\times10^{-2}}$
& $\mathbf{7.701\times10^{-2}}$
& $\mathbf{9.220\times10^{-2}}$ \\
\midrule
\multirow{2}{*}{$5000$}
& Vanilla PINN
& $7.451\times10^{-1}$
& $7.778\times10^{-1}$
& $9.309\times10^{-1}$
& $7.161\times10^{-1}$
& $4.290\times10^{-1}$ \\
& Ours
& $\mathbf{1.030\times10^{-1}}$
& $\mathbf{9.743\times10^{-2}}$
& $\mathbf{1.197\times10^{-1}}$
& $\mathbf{2.959\times10^{-1}}$
& $\mathbf{5.999\times10^{-2}}$ \\
\bottomrule
\end{tabular}%
}
\vspace{2pt}

\begin{minipage}{\linewidth}
\footnotesize

\end{minipage}
\end{table}

\begin{table}[htb]
\centering
\caption{Ablation study for the 2D lid-driven cavity inverse problem at $\mathrm{Re}=3200$ with 200 pressure observations and no observation noise.}
\label{tab:ldc2d_ablation_200sensors}
\resizebox{\textwidth}{!}{
\begin{tabular}{ccccccccc}
\toprule
Case & Feedback layers & Sensor weighting & 
$e_{|\mathbf{u}|}$ & $e_u$ & $e_v$ & $e_p$ & $e_{\mathrm{Re}}$ \\
\midrule
Full model 
& 2 
& Residual 
& $4.854\times 10^{-2}$ 
& $4.947\times 10^{-2}$ 
& $5.317\times 10^{-2}$ 
& $1.078\times 10^{-1}$ 
& $6.269\times 10^{-2}$ \\

No sensor weighting
& 2 
& None 
& $5.312\times 10^{-2}$ 
& $5.495\times 10^{-2}$ 
& $5.677\times 10^{-2}$ 
& $1.093\times 10^{-1}$ 
& $1.137\times 10^{-1}$ \\

One feedback layer
& 1 
& Residual 
& $5.217\times 10^{-1}$ 
& $5.678\times 10^{-1}$ 
& $5.744\times 10^{-1}$ 
& $4.536\times 10^{-1}$ 
& $1.055\times 10^{-2}$ \\

One feedback layer, no sensor weighting
& 1 
& None 
& $5.564\times 10^{-1}$ 
& $6.188\times 10^{-1}$ 
& $6.434\times 10^{-1}$ 
& $5.089\times 10^{-1}$ 
& $2.799\times 10^{-2}$ \\
\bottomrule
\end{tabular}
}
\end{table}

\autoref{tab:ldc2d_re3200_sensor_noise_results} shows the reconstruction error of the physical fields and the Reynolds number for the two-dimensional lid-driven cavity inverse problem, where different numbers of sensors and noise levels are considered. It can be seen that, in the noise-free case, increasing the number of sensors from 100 to 300 consistently reduces the velocity and pressure errors. The error for the Reynolds number, however, does not exhibit a strictly monotonic decrease, suggesting a possible saturation effect in parameter identifiability. When a mild noise of 1\% is added, the overall errors remain close to their noise-free scenarios, indicating that the proposed framework is largely insensitive to low-level perturbations. In contrast, at a 5\% noise level, the errors for the 100-sensor setting increase dramatically, especially for $e_{Re}$. Nevertheless, this deterioration is effectively mitigated by deploying 200 or 300 sensors, which bring the errors back to levels comparable to those obtained under much cleaner data. These results demonstrate that the proposed method achieves robust performance under moderate noise, and that increasing the number of sensors serves as an effective countermeasure against data corruption. \autoref{fig:ldc2d_re3200_clean_noise_uvspeedp} shows the point-wise errors of the reconstruction fields for $\mathrm{Re} = 3200$. It is seen that the proposed method is able to reconstruct the field information from sparse observations.

\autoref{tab:ldc2d_ablation_200sensors} presents the ablation study results for the 2D lid-driven cavity test problem for $\mathrm{Re} = 3200$ with 200 pressure observations and no observation noise. From \autoref{tab:ldc2d_ablation_200sensors}, we can see that the full model yields the lowest errors for all metrics except $e_{Re}$ where one‑layer cases give smaller values. The full model also provides the best overall field reconstruction performance. Reducing the feedback path can improve parameter estimation but results in a substantial loss in velocity and pressure accuracy.

\begin{figure}[htb]
    \centering
    \includegraphics[width=1.0\textwidth]{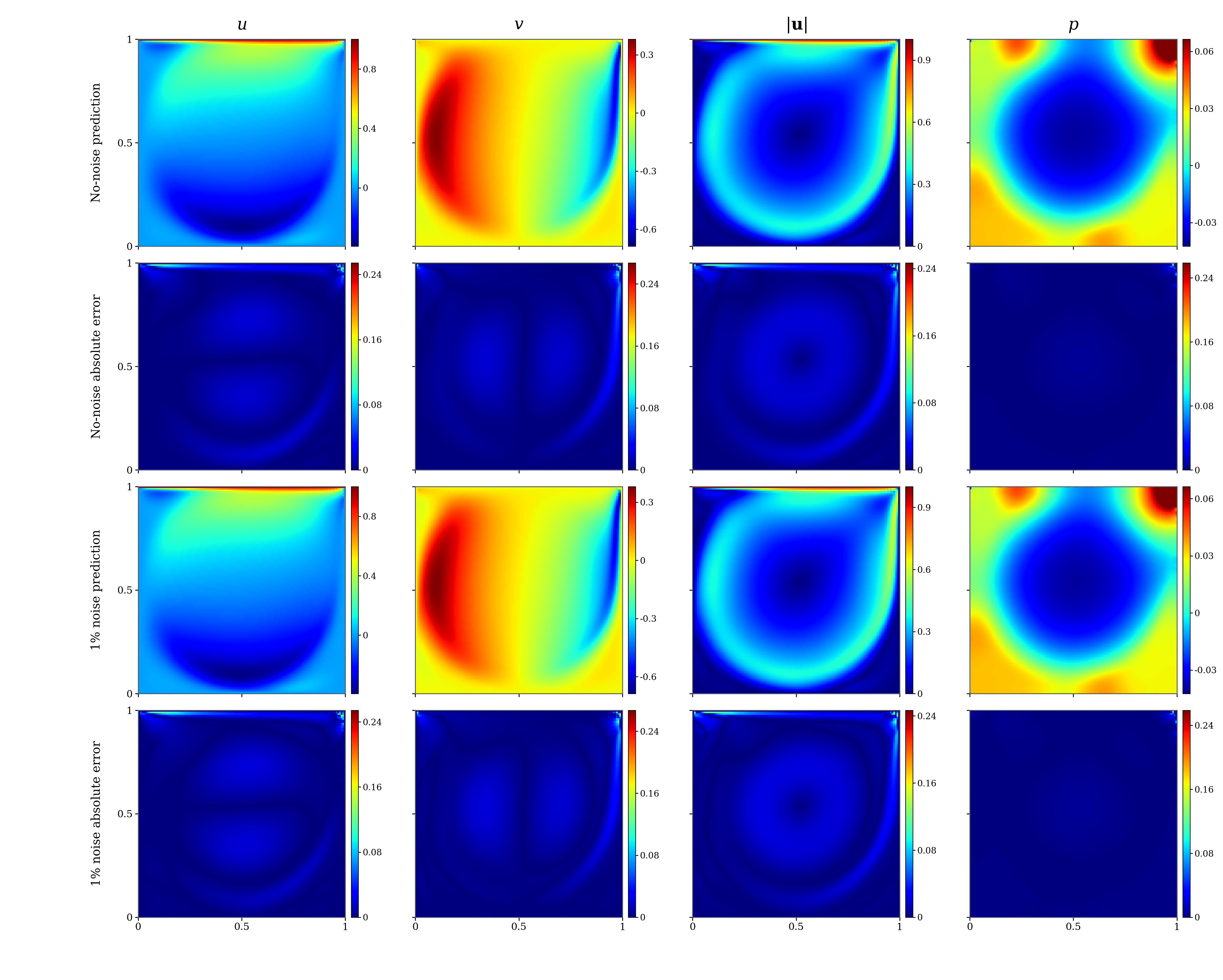}
    \caption{Predicted fields and point-wise errors for the 2D lid-driven cavity inverse problem at $\mathrm{Re}=3200$. The columns show $u$, $v$, $|\mathbf{u}|$, and $p$. The first two rows give the noise-free case. The last two rows give the case with $1\%$ pressure noise.}
    \label{fig:ldc2d_re3200_clean_noise_uvspeedp}
\end{figure}

\subsubsection{Inverse problem for 2D Lid-Driven Cavity Flow at
\texorpdfstring{$\mathrm{Re}=5000$}{Re=5000}}
We next consider $\mathrm{Re} = 5000$ using the same candidate sensors and training configuration. The large $Re$ makes the pressure-only inverse problem more ill-conditioned. The results are reported in \autoref{tab:ldc2d_re5000_sensor_noise_results}. Again, \autoref{tab:ldc2d_re5000_sensor_noise_results} shows the reconstruction error of the entire physical fields and the Reynolds number. The results are similar to that in \autoref{tab:ldc2d_re3200_sensor_noise_results}. The ablation study in \autoref{tab:ldc2d_re5000_ablation_400sensors} shows that both residual-based weighting and the longer feedback path are important in this more challenging regime, which again demonstrates the effectiveness of the proposed block. So, we will use the full model for all remaining numerical experiments. We plot the reconstruction fields and the absolute errors in \autoref{fig:ldc2d_re5000_clean_noise_uvspeedp}. It is clear that, even for $\mathrm{Re} = 5000$ with 1\% noise level, one can still recover the velocity filed from the sparse observations of pressures on the boundary.

We also compare the two methods at $\mathrm{Re}=5000$ using 400 boundary pressure observations without noise. \autoref{tab:ldc2d_pinn_comparison} shows that the proposed method gives lower errors for all reported fields and the Reynolds number. The improvement in velocity reconstruction and parameter identification remains clear at this higher Reynolds number, indicating that the proposed method performs better than the vanilla PINN in the more challenging inverse setting.

We next examine the reconstruction performance for the 2D lid-driven cavity inverse problem at both $\mathrm{Re} = 3200$ and $\mathrm{Re} = 5000$. \autoref{fig:app_ldc_error_convergence} reports the relative $L_2$ errors of the velocity components and the pressure. It can be observed that, for both Reynolds numbers, all error metrics decay rapidly during the early training stage and stabilize at low levels, indicating that the proposed framework achieves accurate field reconstruction despite the pressure-only observations. The errors at $\mathrm{Re} = 5000$ are slightly higher than those at $\mathrm{Re} = 3200$, which is expected since the inverse problem becomes more ill-conditioned as the Reynolds number increases. \autoref{fig:app_reynolds_identification} shows the histories of the identified Reynolds number during training. In both cases, the estimated Reynolds number converges smoothly and remains close to the ground-truth values (indicated by the horizontal dashed lines), with only minor fluctuations after convergence. These results further confirm that the proposed method can reliably recover both the flow fields and the unknown parameter from sparse boundary pressure measurements, even under challenging high‑Reynolds‑number conditions.

\begin{figure}[htb]
    \centering
    \includegraphics[width=1.0\textwidth]{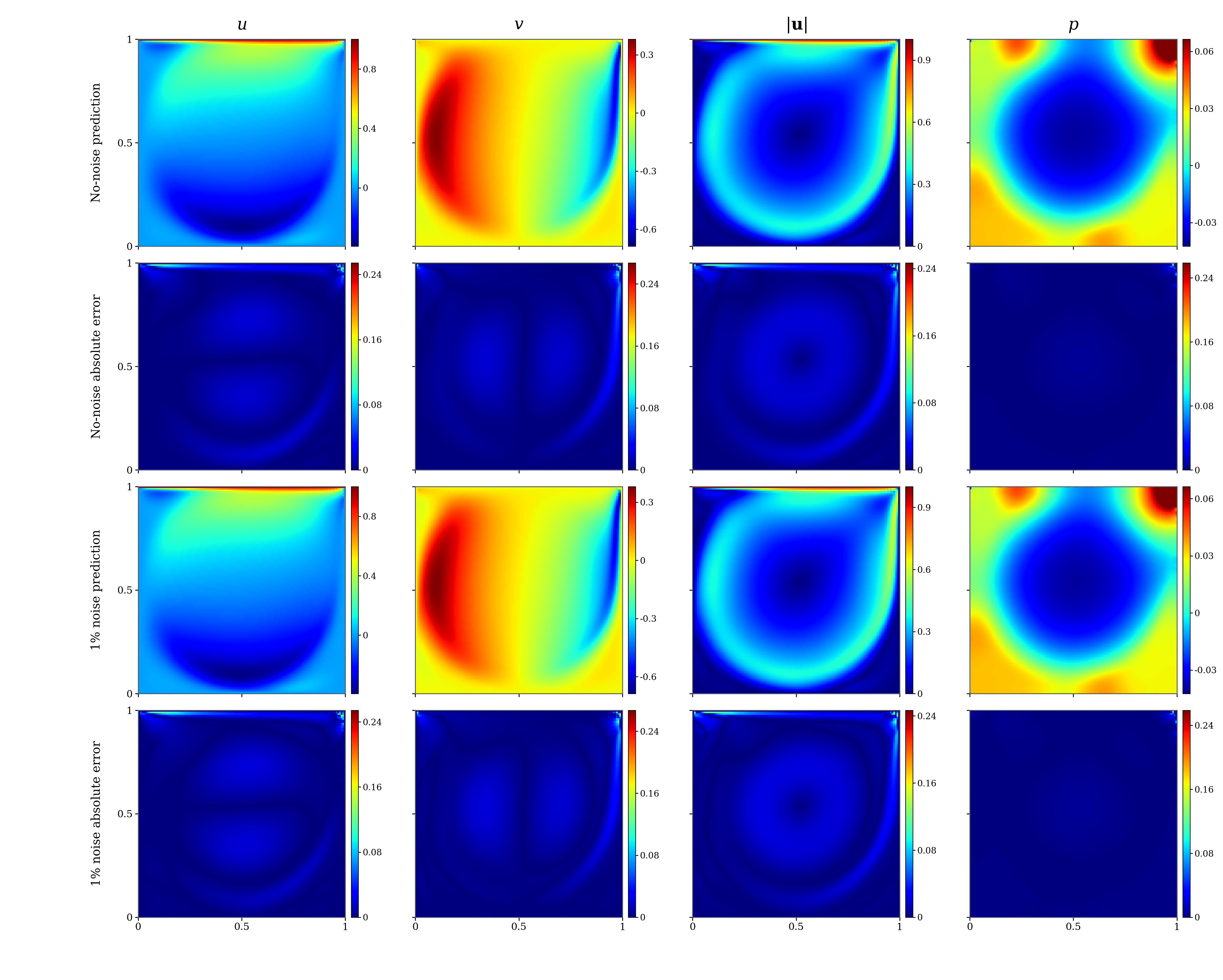}
    \caption{Predicted fields and point-wise errors for the 2D lid-driven cavity inverse problem at $\mathrm{Re}=5000$. The columns show $u$, $v$, $|\mathbf{u}|$, and $p$. The first two rows give the noise-free case. The last two rows give the case with $1\%$ pressure noise.}
    \label{fig:ldc2d_re5000_clean_noise_uvspeedp}
\end{figure}

\begin{table*}[htb]
\centering
\caption{Reconstruction and parameter inversion errors for the 2D lid-driven cavity inverse problem at $\mathrm{Re}=5000$ under different numbers of pressure observations and noise levels.}
\label{tab:ldc2d_re5000_sensor_noise_results}
\resizebox{\textwidth}{!}{
\begin{tabular}{ccccccc}
\toprule
\# Sensors & Noise level & $e_{|\mathbf{u}|}$ & $e_u$ & $e_v$ & $e_p$ & $e_{\mathrm{Re}}$ \\
\midrule
200 & 0.00 & $6.072\times 10^{-1}$ & $7.594\times 10^{-1}$ & $8.456\times 10^{-1}$ & $7.072\times 10^{-1}$ & $1.081\times 10^{-1}$ \\
300 & 0.00 & $1.502\times 10^{-1}$ & $1.408\times 10^{-1}$ & $1.655\times 10^{-1}$ & $2.412\times 10^{-1}$ & $3.688\times 10^{-1}$ \\
400 & 0.00 & $1.030\times 10^{-1}$ & $9.743\times 10^{-2}$ & $1.197\times 10^{-1}$ & $2.959\times 10^{-1}$ & $5.999\times 10^{-2}$ \\
\midrule
200 & 0.01 & $6.037\times 10^{-1}$ & $7.796\times 10^{-1}$ & $8.824\times 10^{-1}$ & $7.437\times 10^{-1}$ & $2.962\times 10^{-3}$ \\
300 & 0.01 & $1.451\times 10^{-1}$ & $1.364\times 10^{-1}$ & $1.596\times 10^{-1}$ & $2.337\times 10^{-1}$ & $3.531\times 10^{-1}$ \\
400 & 0.01 & $1.436\times 10^{-1}$ & $1.374\times 10^{-1}$ & $1.590\times 10^{-1}$ & $2.906\times 10^{-1}$ & $4.181\times 10^{-2}$ \\
\midrule
200 & 0.05 & $6.124\times 10^{-1}$ & $7.260\times 10^{-1}$ & $8.043\times 10^{-1}$ & $6.709\times 10^{-1}$ & $1.857\times 10^{-1}$ \\
300 & 0.05 & $1.582\times 10^{-1}$ & $1.490\times 10^{-1}$ & $1.733\times 10^{-1}$ & $2.496\times 10^{-1}$ & $3.834\times 10^{-1}$ \\
400 & 0.05 & $1.153\times 10^{-1}$ & $1.134\times 10^{-1}$ & $1.287\times 10^{-1}$ & $2.755\times 10^{-1}$ & $9.922\times 10^{-2}$ \\
\bottomrule
\end{tabular}
}
\end{table*}

\begin{table*}[htb]
\centering
\caption{Ablation study for the 2D lid-driven cavity inverse problem at $\mathrm{Re}=5000$ with 400 pressure observations and no observation noise.}
\label{tab:ldc2d_re5000_ablation_400sensors}
\resizebox{\textwidth}{!}{
\begin{tabular}{cccccccc}
\toprule
Case & Feedback layers & Sensor weighting & 
$e_{|\mathbf{u}|}$ & $e_u$ & $e_v$ & $e_p$ & $e_{\mathrm{Re}}$ \\
\midrule
Full model 
& 2 
& Residual 
& $1.030\times 10^{-1}$ 
& $9.743\times 10^{-2}$ 
& $1.197\times 10^{-1}$ 
& $2.959\times 10^{-1}$ 
& $5.999\times 10^{-2}$ \\

No sensor weighting
& 2 
& None 
& $5.380\times 10^{-1}$ 
& $7.385\times 10^{-1}$ 
& $8.496\times 10^{-1}$ 
& $7.356\times 10^{-1}$ 
& $2.088\times 10^{-1}$ \\

One feedback layer
& 1 
& Residual 
& $5.548\times 10^{-1}$ 
& $8.461\times 10^{-1}$ 
& $1.058\times 10^{0}$ 
& $8.056\times 10^{-1}$ 
& $1.061\times 10^{0}$ \\

One feedback layer, no sensor weighting
& 1 
& None 
& $5.608\times 10^{-1}$ 
& $8.719\times 10^{-1}$ 
& $1.117\times 10^{0}$ 
& $9.838\times 10^{-1}$ 
& $8.701\times 10^{-1}$ \\
\bottomrule
\end{tabular}
}
\end{table*}

\begin{figure}[htb]
    \centering

    \begin{subfigure}[t]{0.32\textwidth}
        \centering
        \includegraphics[width=\linewidth]
        {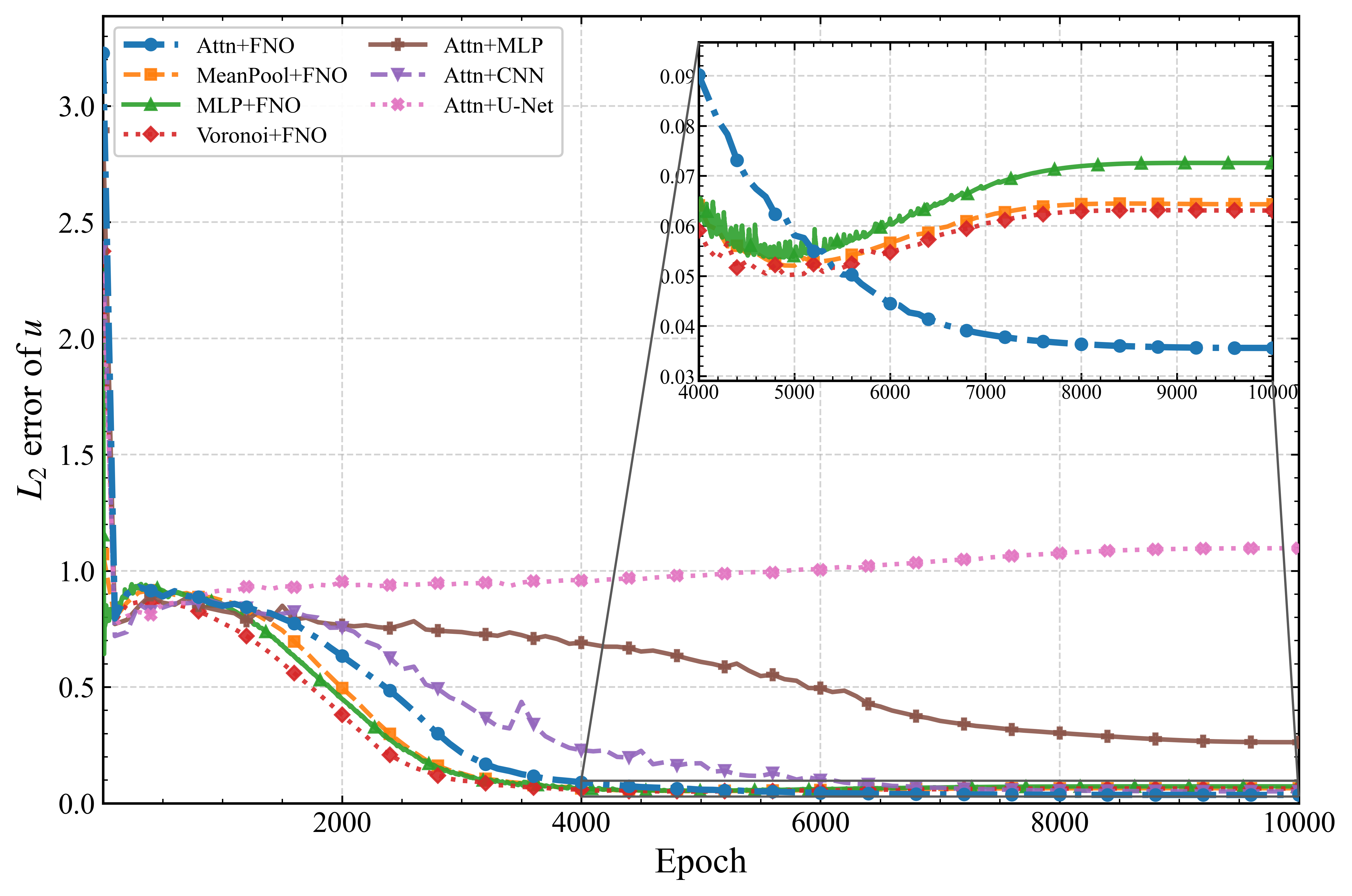}
        \caption{$u$, $\mathrm{Re}=3200$}
        \label{fig:app_l2_u_re3200}
    \end{subfigure}
    \hfill
    \begin{subfigure}[t]{0.32\textwidth}
        \centering
        \includegraphics[width=\linewidth]
        {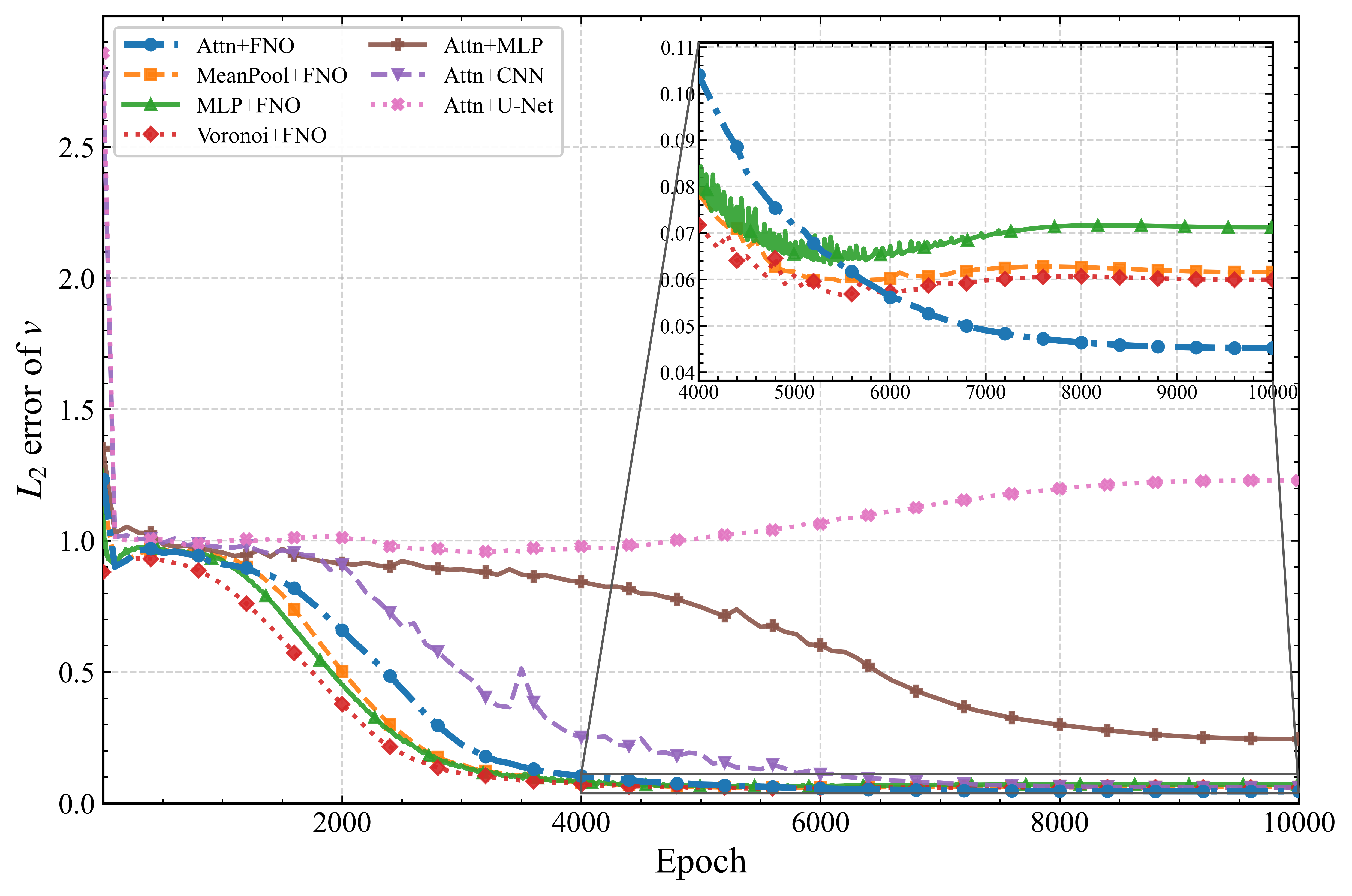}
        \caption{$v$, $\mathrm{Re}=3200$}
        \label{fig:app_l2_v_re3200}
    \end{subfigure}
    \hfill
    \begin{subfigure}[t]{0.32\textwidth}
        \centering
        \includegraphics[width=\linewidth]
        {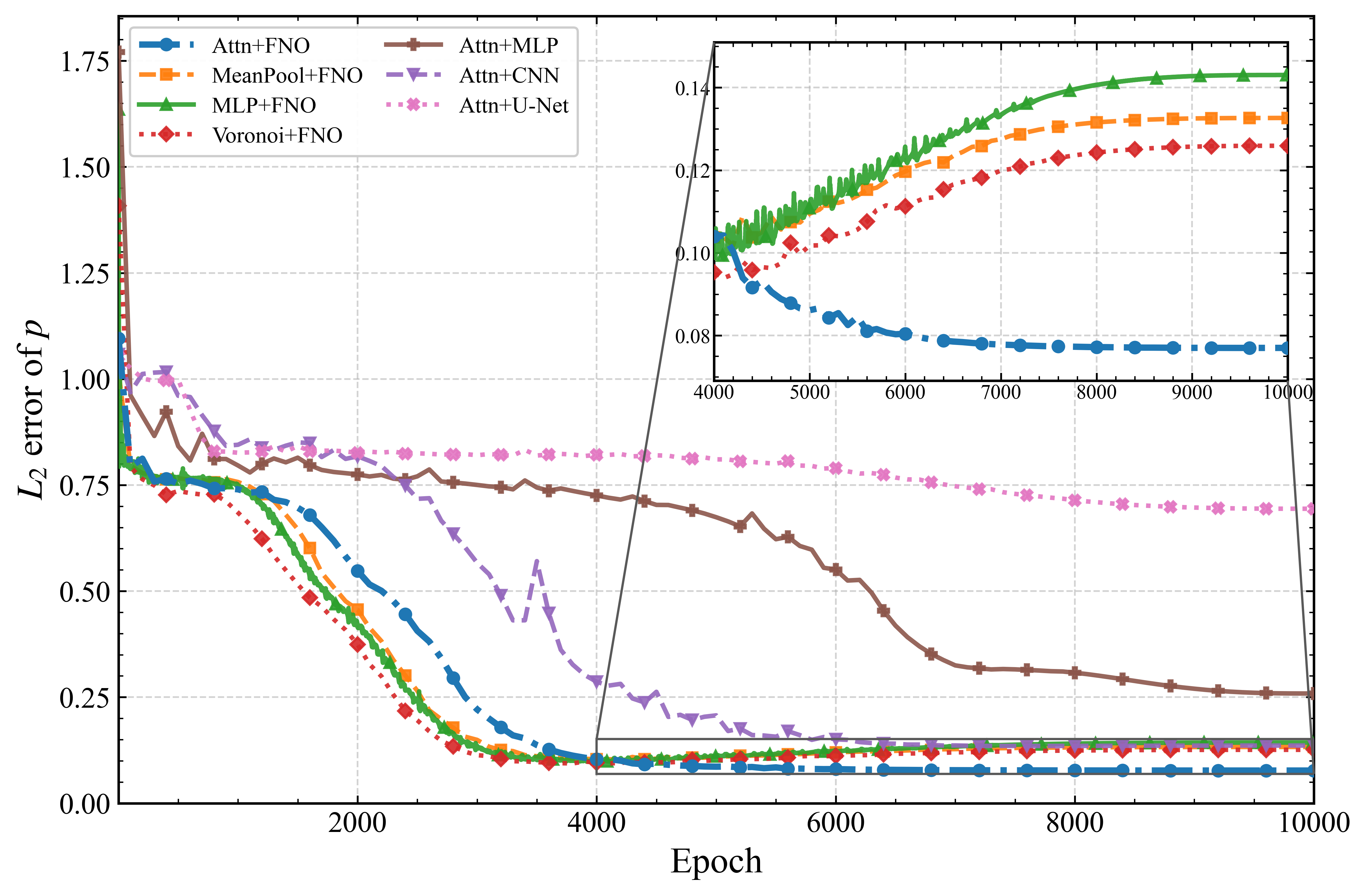}
        \caption{$p$, $\mathrm{Re}=3200$}
        \label{fig:app_l2_p_re3200}
    \end{subfigure}

    \vspace{0.6em}

    \begin{subfigure}[t]{0.32\textwidth}
        \centering
        \includegraphics[width=\linewidth]
        {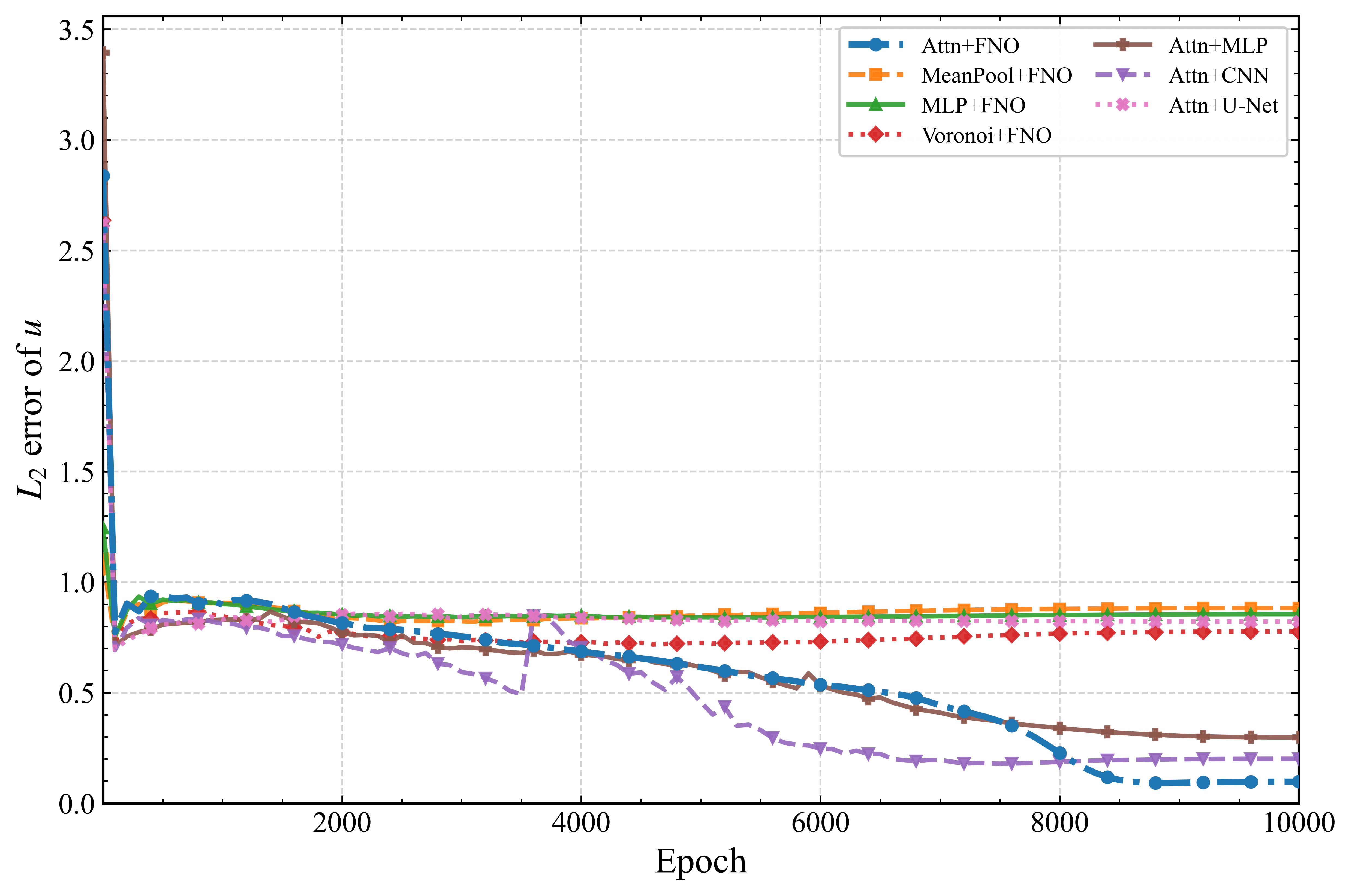}
        \caption{$u$, $\mathrm{Re}=5000$}
        \label{fig:app_l2_u_re5000}
    \end{subfigure}
    \hfill
    \begin{subfigure}[t]{0.32\textwidth}
        \centering
        \includegraphics[width=\linewidth]
        {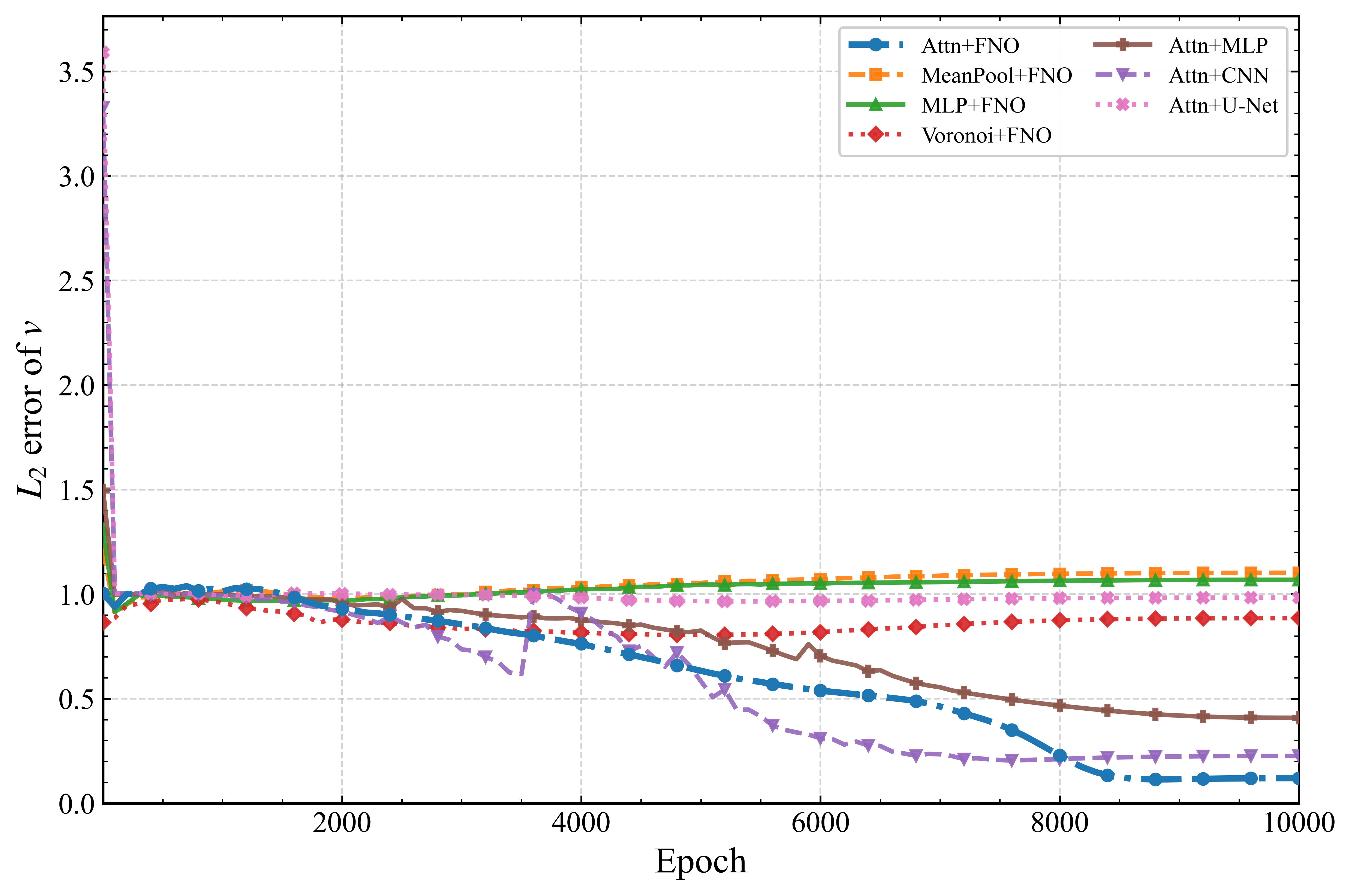}
        \caption{$v$, $\mathrm{Re}=5000$}
        \label{fig:app_l2_v_re5000}
    \end{subfigure}
    \hfill
    \begin{subfigure}[t]{0.32\textwidth}
        \centering
        \includegraphics[width=\linewidth]
        {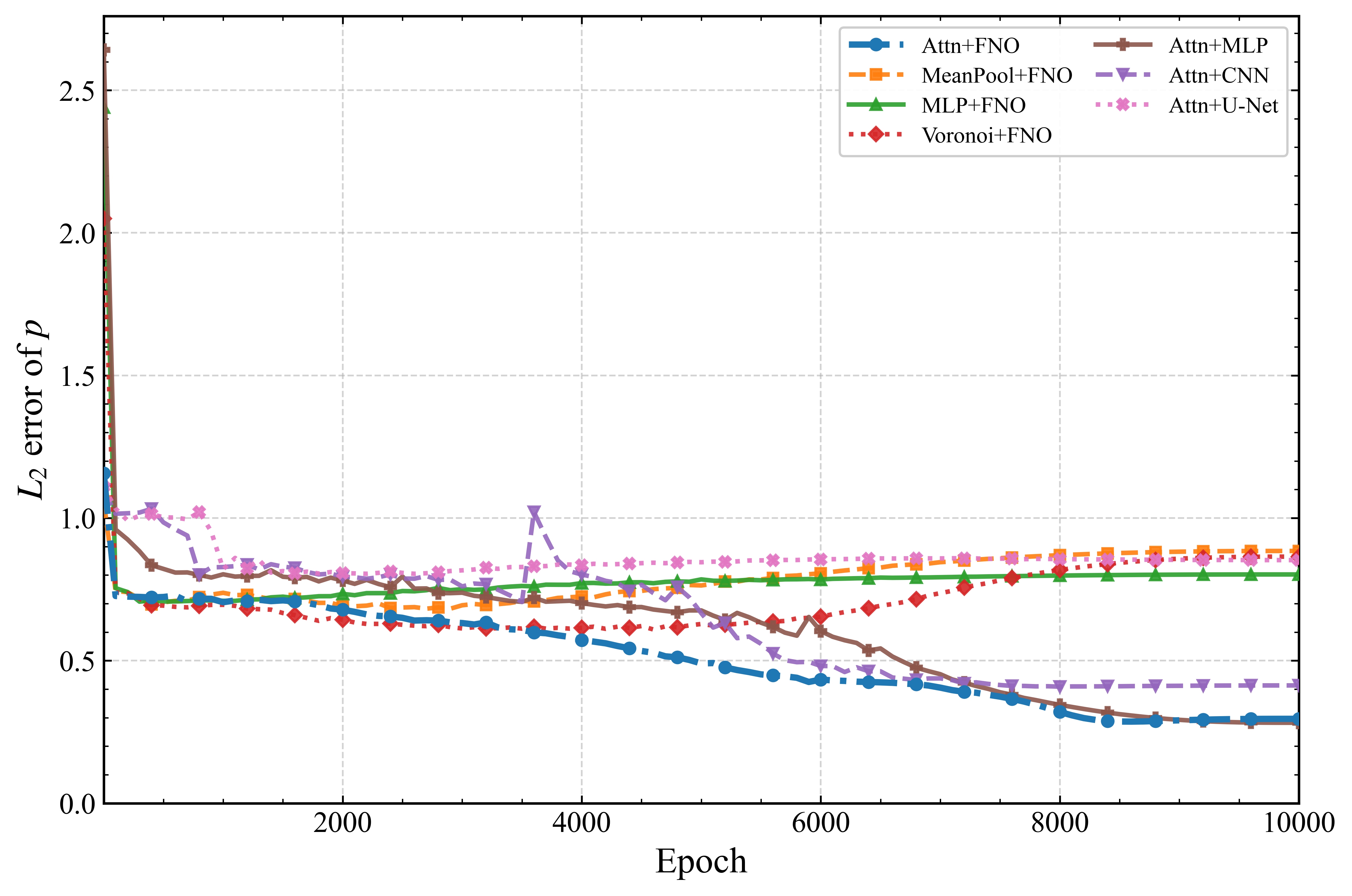}
        \caption{$p$, $\mathrm{Re}=5000$}
        \label{fig:app_l2_p_re5000}
    \end{subfigure}

    \caption{
    Relative $L_2$ errors for the
    two-dimensional lid-driven cavity inverse problem.
    The first row reports the results at $\mathrm{Re}=3200$, and the
    second row reports the results at $\mathrm{Re}=5000$.
    The columns correspond to the errors of $u$, $v$, and $p$,
    respectively.
    }
    \label{fig:app_ldc_error_convergence}
\end{figure}

\begin{figure}[htb]
    \centering

    \begin{subfigure}[t]{0.485\textwidth}
        \centering
        \includegraphics[width=\linewidth]
        {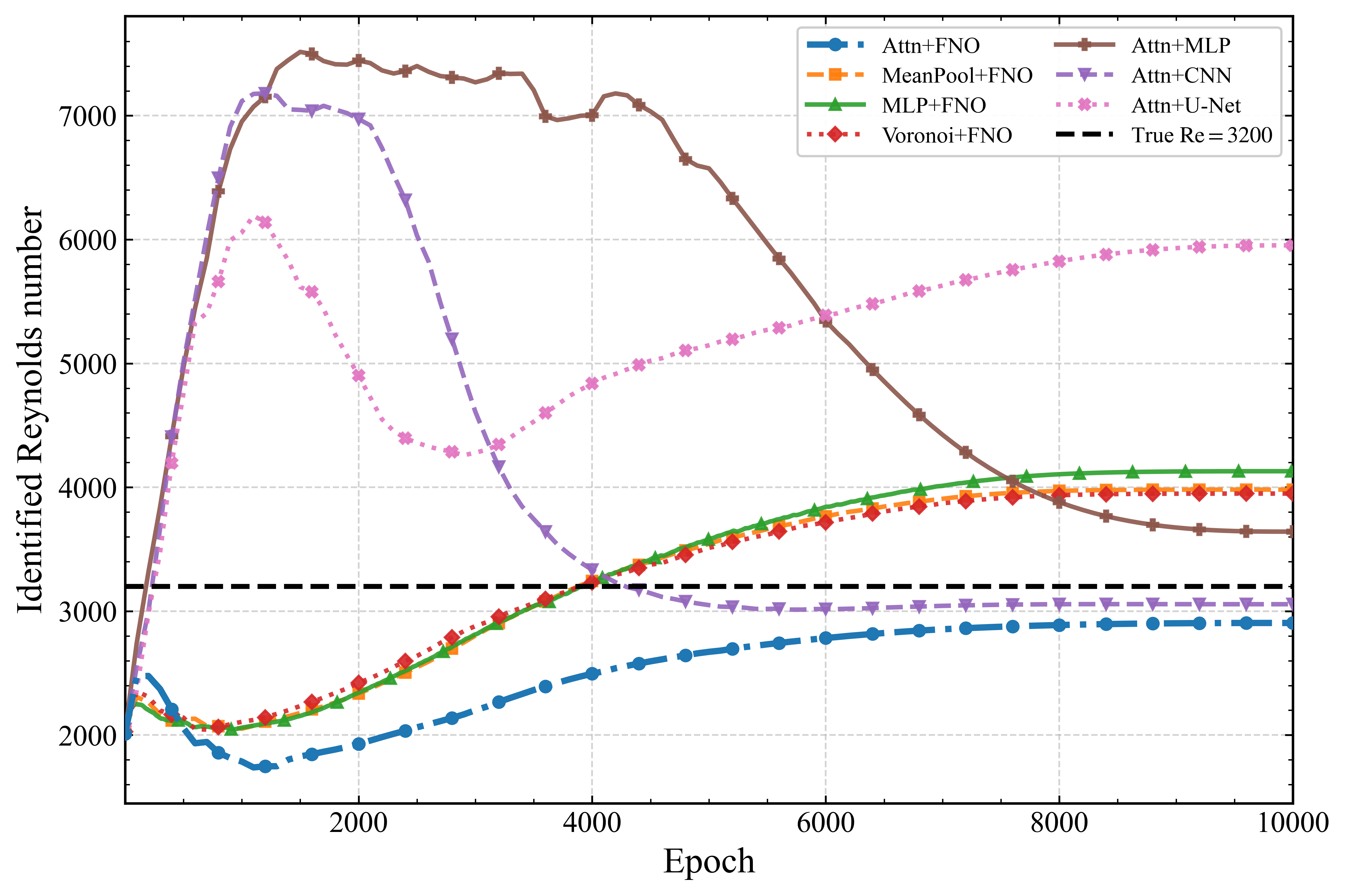}
        \caption{$\mathrm{Re}=3200$}
        \label{fig:app_identified_re3200}
    \end{subfigure}
    \hfill
    \begin{subfigure}[t]{0.485\textwidth}
        \centering
        \includegraphics[width=\linewidth]
        {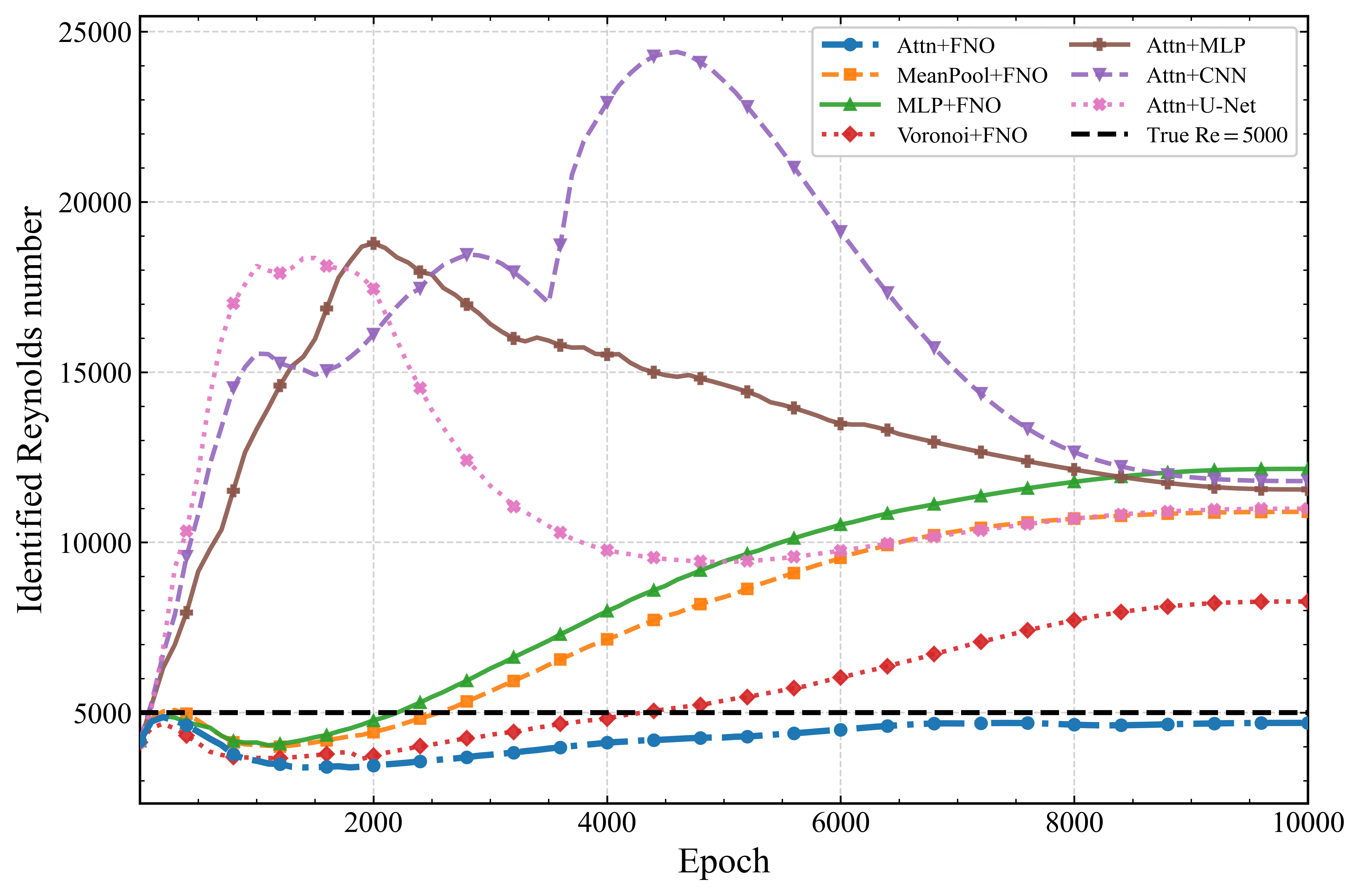}
        \caption{$\mathrm{Re}=5000$}
        \label{fig:app_identified_re5000}
    \end{subfigure}

    \caption{
    Histories of the identified Reynolds number for the
    two-dimensional lid-driven cavity inverse problem. The horizontal
    dashed lines indicate the corresponding ground-truth values.
    }
    \label{fig:app_reynolds_identification}
\end{figure}

\FloatBarrier

\subsection{3D Lid-Driven Cavity Flow}

We further extend the inverse reconstruction task to the three-dimensional lid-driven cavity flow governed by the steady incompressible Navier--Stokes equations:
\begin{equation*}\label{eq:ldc3d_param}
\left\{
\begin{aligned}
(\mathbf{u}\cdot\nabla)\mathbf{u}+\nabla p
&=\nu\Delta\mathbf{u}, && \text{in } \Omega,\\
\nabla\cdot\mathbf{u} &=0, && \text{in } \Omega,\\
\mathbf{u} &=\mathbf{g}, && \text{on } \partial\Omega,
\end{aligned}
\right.
\end{equation*}
where $\Omega=(0,1)^3$, $\mathbf{u}=(u,v,w)^{\top}$ denotes the three-dimensional velocity field, $p$ denotes the pressure field, and $\nu=1/\mathrm{Re}$ is the kinematic viscosity. Following the numerical setup, the moving lid is imposed on the top boundary, while the other five walls satisfy no-slip boundary conditions:
\begin{equation*}
\mathbf{g}(x,y,z)=
\begin{cases}
(1,0,0)^{\top}, & y=1,\\
(0,0,0)^{\top}, & \text{on the remaining boundary}.
\end{cases}
\end{equation*}

Only sparse boundary pressure observations are used, while all three velocity components remain unobserved. Our goal is to reconstruct $(u,v,w,p)$ and identify $\nu$ for the target case $\mathrm{Re}=500$. The reference solution is generated with OpenFOAM~\cite{weller1998tensorial} on a uniform $129\times129\times129$ finite volume mesh, from which 500, 1000, or 1500 boundary pressure locations are randomly selected.

The encoder consists of a two-layer sensor MLP with a hidden width and output dimension of 64, four-head attention modules, one sensor self-attention layer, and a bidirectional sensor--grid information path with two feedback stages. The grid tokens used for feedback are downsampled with a stride of 4 in each spatial direction. The FNO decoder consists of four residual FNO blocks with a channel width of 32. Each block combines a spectral convolution retaining 16 Fourier modes in each spatial direction with a parallel $1\times1\times1$ convolution, followed by group normalization and GELU activation. Replicate padding of size 4 is applied before the FNO blocks. The predicted pressures at the sensor locations are obtained through trilinear sampling, while the PDE residuals are evaluated using second-order central finite differences. The initial learning rates are $10^{-3}$ for the network and $5\times10^{-3}$ for $\log\nu$. The Reynolds number is initialized uniformly within $[50,1000]$ and optimized through $\log\nu$ without a prior. During the first 500 epochs, all sensor weights are fixed at one; thereafter, they are updated every 200 epochs according to the PDE residuals and used in both the observation loss and the attention calculation. The reference velocity field is excluded from the training loss.

\begin{table*}[!t]
\centering
\caption{Reconstruction and parameter inversion errors for the 3D lid-driven cavity inverse problem under different numbers of boundary pressure observations and noise levels.}
\label{tab:ldc3d_pressure_noise_results}
\resizebox{\textwidth}{!}{
\begin{tabular}{cccccccc}
\toprule
Noise level & \# Pressure obs. & $e_{|\mathbf{u}|}$ & $e_u$ & $e_v$ & $e_w$ & $e_p$ & $e_{\mathrm{Re}}$ \\
\midrule
\multirow{3}{*}{0.00}
& 500  & $1.086\times 10^{-1}$ & $1.211\times 10^{-1}$ & $9.763\times 10^{-2}$ & $1.048\times 10^{-1}$ & $1.489\times 10^{-1}$ & $1.471\times 10^{-2}$ \\
& 1000 & $1.034\times 10^{-1}$ & $1.157\times 10^{-1}$ & $9.309\times 10^{-2}$ & $8.912\times 10^{-2}$ & $1.311\times 10^{-1}$ & $6.479\times 10^{-2}$ \\
& 1500 & $1.019\times 10^{-1}$ & $1.142\times 10^{-1}$ & $9.209\times 10^{-2}$ & $7.017\times 10^{-2}$ & $1.304\times 10^{-1}$ & $7.833\times 10^{-2}$ \\
\midrule
\multirow{3}{*}{0.01}
& 500  & $1.073\times 10^{-1}$ & $1.197\times 10^{-1}$ & $9.591\times 10^{-2}$ & $1.113\times 10^{-1}$ & $1.444\times 10^{-1}$ & $2.912\times 10^{-2}$ \\
& 1000 & $1.025\times 10^{-1}$ & $1.149\times 10^{-1}$ & $9.176\times 10^{-2}$ & $8.767\times 10^{-2}$ & $1.296\times 10^{-1}$ & $7.311\times 10^{-2}$ \\
& 1500 & $1.040\times 10^{-1}$ & $1.158\times 10^{-1}$ & $9.534\times 10^{-2}$ & $7.447\times 10^{-2}$ & $1.317\times 10^{-1}$ & $6.282\times 10^{-2}$ \\
\midrule
\multirow{3}{*}{0.05}
& 500  & $1.084\times 10^{-1}$ & $1.209\times 10^{-1}$ & $9.761\times 10^{-2}$ & $1.114\times 10^{-1}$ & $1.471\times 10^{-1}$ & $2.147\times 10^{-2}$ \\
& 1000 & $1.068\times 10^{-1}$ & $1.186\times 10^{-1}$ & $9.776\times 10^{-2}$ & $1.055\times 10^{-1}$ & $1.371\times 10^{-1}$ & $4.758\times 10^{-2}$ \\
& 1500 & $1.057\times 10^{-1}$ & $1.174\times 10^{-1}$ & $9.796\times 10^{-2}$ & $8.339\times 10^{-2}$ & $1.368\times 10^{-1}$ & $5.251\times 10^{-2}$ \\
\bottomrule
\end{tabular}
}
\end{table*}

\autoref{tab:ldc3d_pressure_noise_results} shows the reconstruction errors of the physical fields and the Reynolds number for the three-dimensional lid-driven cavity inverse problem, still different numbers of boundary pressure sensors and noise levels are considered. It can be seen that, in the noise-free case, increasing the number of sensors from 500 to 1500 reduces the velocity and pressure errors slightly. The error for the Reynolds number does not exhibit a monotonic decrease as the number of sensors increases. When a noise of 1\% is added, the overall field errors remain close to their noise-free scenarios, indicating that the proposed framework is largely insensitive to low-level perturbations. Even at a 5\% noise level, the velocity errors remain close to $10^{-1}$, and increasing the number of sensors reduces most of the reconstruction errors, especially $e_w$ and $e_p$. The Reynolds-number error remains more sensitive and non-monotone. These results demonstrate that the proposed method achieves robust field reconstruction under moderate noise, and that increasing the number of pressure sensors serves as an effective means of improving the reconstruction accuracy, although the Reynolds number remains more difficult to identify reliably.

\autoref{tab:ldc3d_re500_ablation_1500sensors} presents the ablation study results for the three-dimensional lid-driven cavity test problem at $Re=500$ with 1500 pressure observations and no observation noise. From \autoref{tab:ldc3d_re500_ablation_1500sensors}, we can see that the full model yields the lowest pressure error, whereas the cases without the feedback layer give smaller errors for most velocity metrics. The differences among the four variants, however, remain modest, indicating that they provide comparable overall field reconstruction performance. Reducing the feedback path slightly improves the velocity reconstruction but results in larger pressure and Reynolds-number errors. Removing sensor weighting while retaining the feedback layer gives the lowest $e_{Re}$, but slightly increases most of the field errors. These results indicate that the feedback module and residual-based sensor weighting mainly balance the accuracy of field reconstruction and Reynolds-number identification in this test problem. \autoref{fig:ldc3d_re500_noise_uvspeedp_pred_error} shows the absolute errors of the reconstruction fields, which can be seen that our method is able to reconstruct the three-dimensional velocity field from sparse boundary observations.

We further compare the proposed method with a vanilla PINN for the 3D lid-driven cavity inverse problem using 1500 boundary pressure observations without noise. As shown in \autoref{tab:ldc3d_re500_pinn_comparison_1500}, the proposed method yields lower errors for all reconstructed fields and the Reynolds number. The improvement is particularly pronounced for the $w$ component. These results indicate that the proposed method provides more accurate field reconstruction and parameter identification from sparse pressure observations in three dimensions. From \autoref{fig:ldc3d_attn_vs_pinn_curves}, the proposed method reaches lower relative $L_2$ errors for $u$, $v$, and $p$, while the vanilla PINN levels off at higher errors. The identified Reynolds number also stays close to the true value of 500 for the proposed method, whereas the PINN estimate eventually rises well above it.

\begin{table}[H]
\centering
\caption{Ablation study for the 3D lid-driven cavity inverse problem at $\mathrm{Re}=500$ with 1500 boundary pressure observations and no observation noise.}
\label{tab:ldc3d_re500_ablation_1500sensors}
\resizebox{\textwidth}{!}{
\begin{tabular}{ccccccccc}
\toprule
Case & Feedback layers & Sensor weighting &
$e_{|\mathbf{u}|}$ & $e_u$ & $e_v$ & $e_w$ & $e_p$ & $e_{\mathrm{Re}}$ \\
\midrule
Full model
& 1
& Residual
& $1.019\times 10^{-1}$
& $1.142\times 10^{-1}$
& $9.209\times 10^{-2}$
& $7.017\times 10^{-2}$
& $1.304\times 10^{-1}$
& $7.833\times 10^{-2}$ \\

No sensor weighting
& 1
& None
& $1.022\times 10^{-1}$
& $1.145\times 10^{-1}$
& $9.223\times 10^{-2}$
& $7.005\times 10^{-2}$
& $1.325\times 10^{-1}$
& $7.253\times 10^{-2}$ \\

No feedback layer
& 0
& Residual
& $9.955\times 10^{-2}$
& $1.126\times 10^{-1}$
& $8.810\times 10^{-2}$
& $6.890\times 10^{-2}$
& $1.312\times 10^{-1}$
& $9.605\times 10^{-2}$ \\

No feedback layer, no sensor weighting
& 0
& None
& $9.971\times 10^{-2}$
& $1.128\times 10^{-1}$
& $8.817\times 10^{-2}$
& $6.791\times 10^{-2}$
& $1.324\times 10^{-1}$
& $9.226\times 10^{-2}$ \\
\bottomrule
\end{tabular}
}
\end{table}

\begin{table*}[!t]
\centering
\caption{Comparison with vanilla PINN for the 3D lid-driven cavity inverse problem at $\mathrm{Re}=500$ using 1500 boundary pressure observations without noise.}
\label{tab:ldc3d_re500_pinn_comparison_1500}
\resizebox{\textwidth}{!}{
\begin{tabular}{lcccccc}
\toprule
Method & $e_{|\mathbf{u}|}$ & $e_u$ & $e_v$ & $e_w$ & $e_p$ & $e_{\mathrm{Re}}$ \\
\midrule
Vanilla PINN
& $5.851\times 10^{-1}$
& $5.947\times 10^{-1}$
& $9.285\times 10^{-1}$
& $2.490\times 10^{0}$
& $4.789\times 10^{-1}$
& $2.862\times 10^{-1}$ \\
Ours
& $\mathbf{1.019\times 10^{-1}}$
& $\mathbf{1.142\times 10^{-1}}$
& $\mathbf{9.209\times 10^{-2}}$
& $\mathbf{7.017\times 10^{-2}}$
& $\mathbf{1.304\times 10^{-1}}$
& $\mathbf{7.833\times 10^{-2}}$ \\
\bottomrule
\end{tabular}
}
\end{table*}

\begin{figure}[H]
    \centering
    \includegraphics[width=1.0\textwidth]{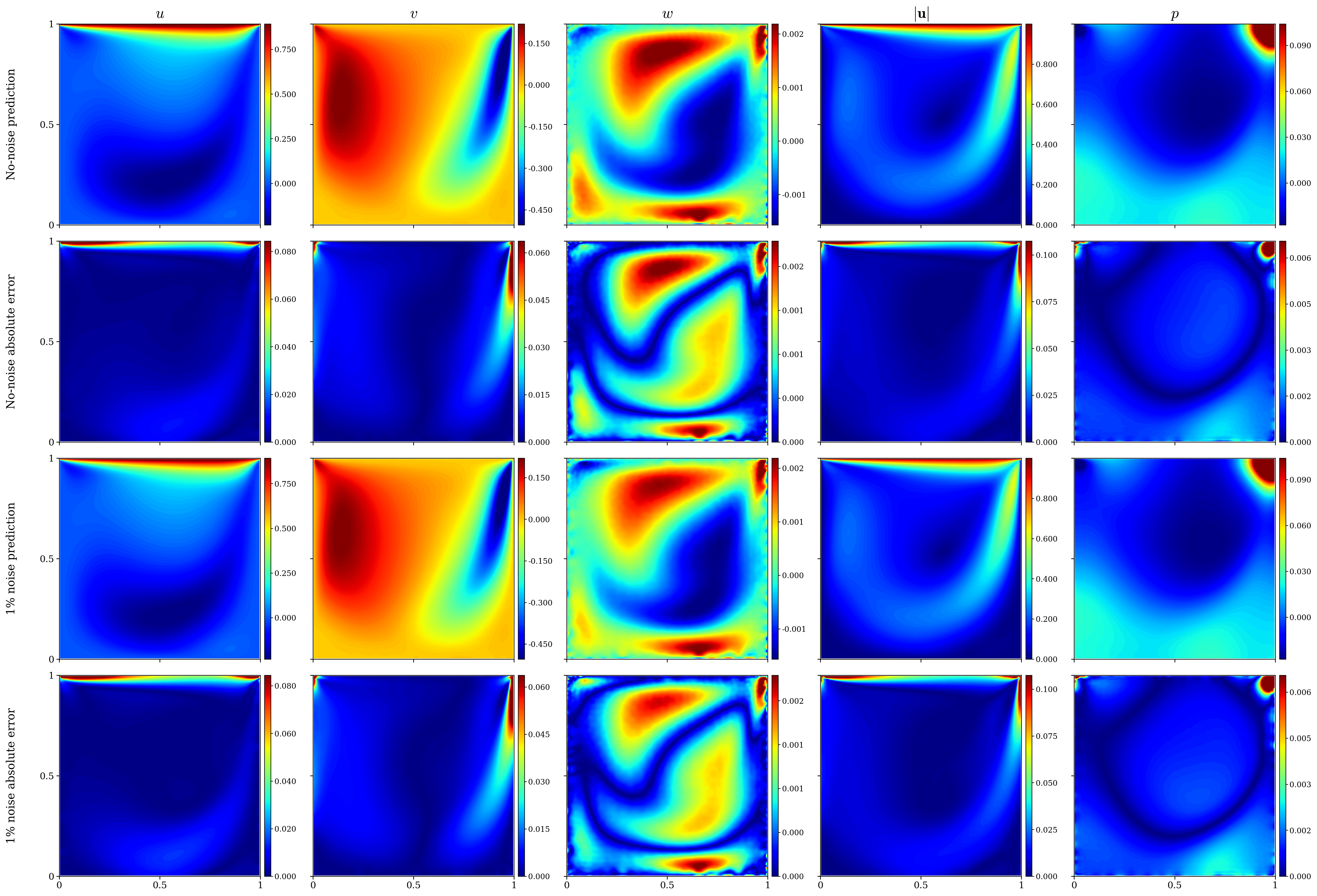}
    \caption{Predicted fields and absolute errors for the 3D lid-driven cavity inverse problem at $\mathrm{Re}=500$. The plots are shown on the mid-plane slice. The columns show $u$, $v$, $w$, $|\mathbf{u}|$, and $p$. The first two rows give the noise-free case, and the last two rows show the case with $1\%$ pressure noise.}
    \label{fig:ldc3d_re500_noise_uvspeedp_pred_error}
\end{figure}

\begin{figure}[H]
    \centering

    \begin{subfigure}[t]{0.48\textwidth}
        \centering
        \includegraphics[width=\linewidth]{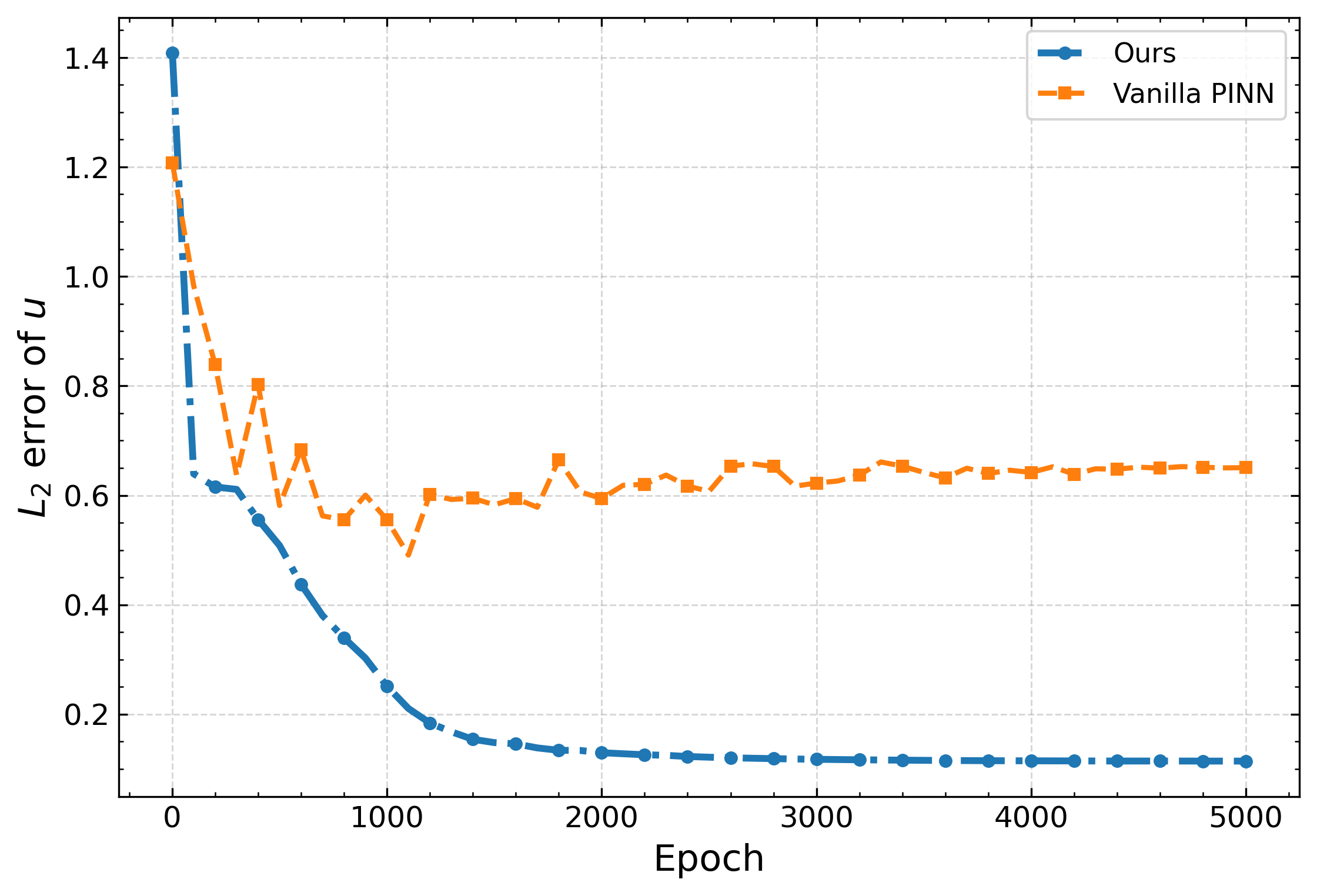}
        \caption{$u$}
        \label{fig:ldc3d_l2_u_attn_vs_pinn}
    \end{subfigure}
    \hfill
    \begin{subfigure}[t]{0.48\textwidth}
        \centering
        \includegraphics[width=\linewidth]{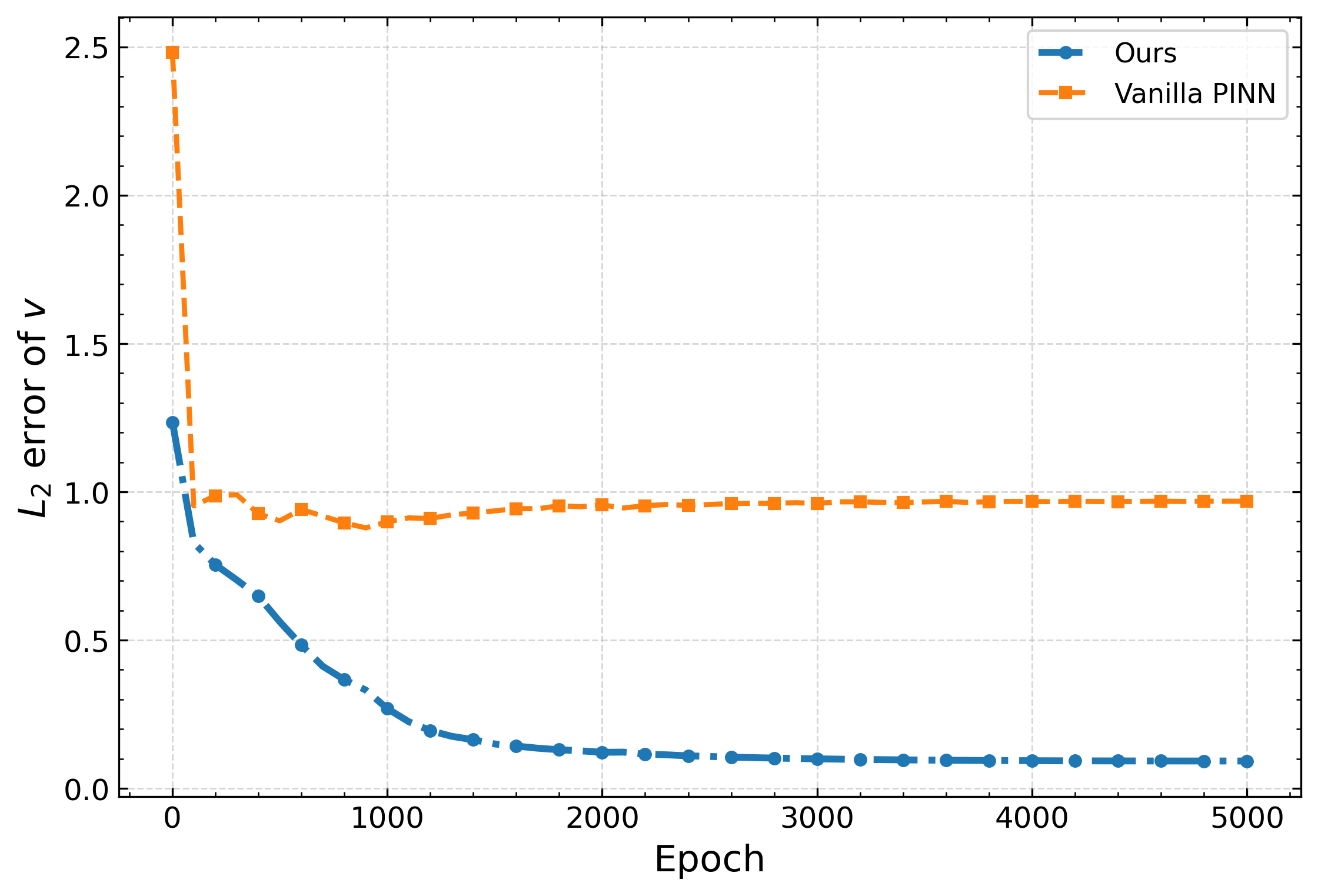}
        \caption{$v$}
        \label{fig:ldc3d_l2_v_attn_vs_pinn}
    \end{subfigure}

    \vspace{0.3cm}

    \begin{subfigure}[t]{0.48\textwidth}
        \centering
        \includegraphics[width=\linewidth]{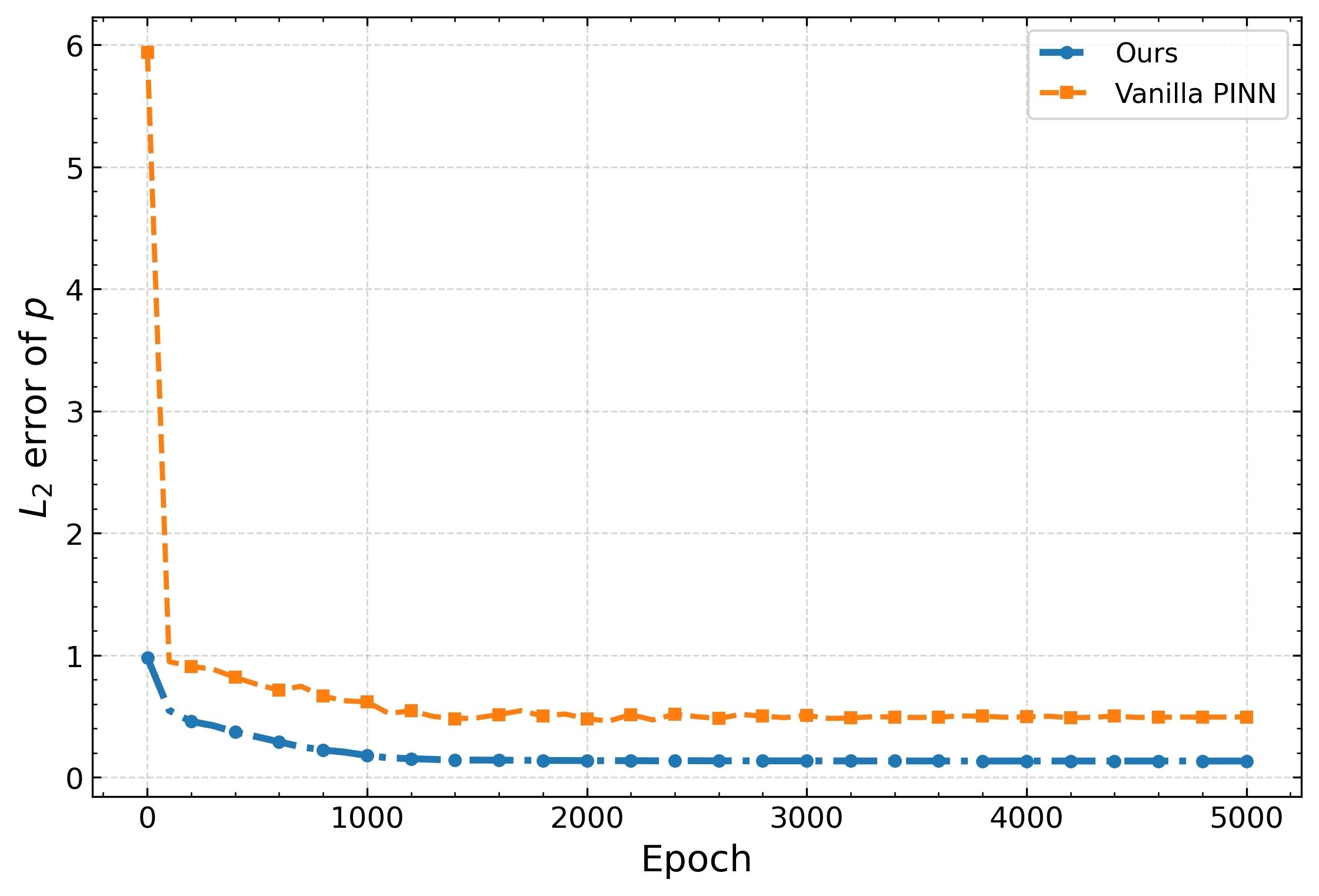}
        \caption{$p$}
        \label{fig:ldc3d_l2_p_attn_vs_pinn}
    \end{subfigure}
    \hfill
    \begin{subfigure}[t]{0.48\textwidth}
        \centering
        \includegraphics[width=\linewidth]{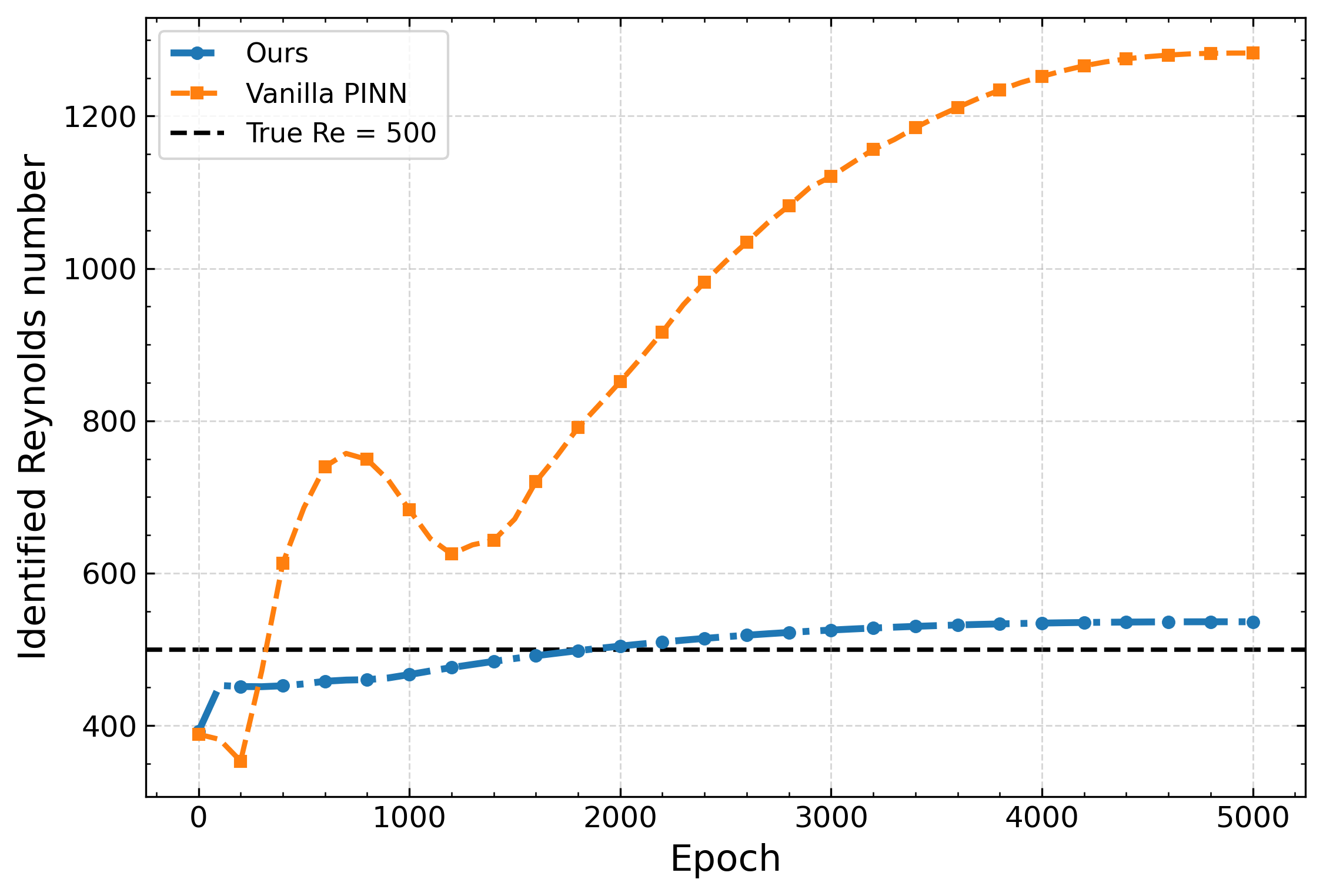}
        \caption{Identified $\mathrm{Re}$}
        \label{fig:ldc3d_re_attn_vs_pinn}
    \end{subfigure}

    \caption{Training histories for the 3D lid-driven cavity inverse problem at
    $\mathrm{Re}=500$. The first three panels compare the relative $L_2$ errors
    of $u$, $v$, and the mean-free pressure $p$ between Attn+FNO and Vanilla
    PINN. The last panel shows their identified Reynolds numbers.}
    \label{fig:ldc3d_attn_vs_pinn_curves}
\end{figure}

\subsection{2D Cylinder Wake}

We next consider the unsteady two-dimensional cylinder wake inverse problem. The data are taken from the dataset introduced by Raissi et al.~\cite{raissi2019physics} and are interpolated from scattered spectral-element points onto a regular space-time grid. The computational domain is
\[
\Omega=[1,8]\times[-2,2],\qquad t\in[0,19.9],
\]
with a regular grid of size $128\times64$ and 199 time frames. The flow is governed by the unsteady incompressible Navier--Stokes equations:
\begin{equation}\label{eq:cylinder_param}
\left\{
\begin{aligned}
u_t+\lambda_1(uu_x+vu_y)+p_x-\nu\Delta u &=0,\\
v_t+\lambda_1(uv_x+vv_y)+p_y-\nu\Delta v &=0,\\
u_x+v_y &=0.
\end{aligned}
\right.
\end{equation}
Here, $\lambda_1$ denotes the convection coefficient and $\nu=1/\mathrm{Re}$ is the kinematic viscosity, both of which are treated as unknown learnable physical parameters. Unlike the Lid-driven cavity cases, the cylinder wake is an unsteady open flow problem dominated by strong convective transport and periodic vortex shedding. Once the velocity field and the corresponding boundary conditions are given, pressure can be recovered, up to an additive constant, from the momentum equations or the associated pressure Poisson equation. Conversely, when only sparse boundary pressure observations are available without velocity initial or boundary data, such scalar and indirect information is generally insufficient to uniquely determine both velocity components and the phase of the interior vortical structures, making the problem illness in the present setting. Therefore, only sparse $(u,v)$ observations are used, which are randomly sampled from the four boundaries over the full time interval. We consider $13{,}000$, $14{,}000$, and $15{,}000$ space--time observation locations. Our goal is to reconstruct $(u,v,p)(x,y,t)$ and identify $\mathrm{Re}$ and $\lambda_1$, whose reference values are 100 and 1, respectively.

The encoder consists of a two-layer MLP with a hidden and attention dimension of 128, a latent dimension of 64, four attention heads, one attention layer, and one feedback stage. Residual aware sensor weights are introduced into the attention logits through a $0.5\log\omega_i$ bias. The FNO decoder consists of four residual FNO blocks with a channel width of 96. Each block combines a spectral convolution retaining 24 Fourier modes in
each spatial direction with a parallel $1\times1$ convolution, followed by group normalization and GELU activation. Replicate padding of size 8 is applied before the FNO blocks. The predicted boundary velocities are obtained through bilinear sampling, while the PDE residuals are evaluated using second-order central finite differences in space and time. The initial values are sampled from
$\mathrm{Re}_0\in[30,300]$ and $\lambda_{1,0}\in[0.3,2.0]$. The loss weights are 1 for momentum, 5 for incompressibility, 20 for sparse boundary velocity, and $10^{-4}$ for temporal smoothness. The initial learning rates are $10^{-3}$ for the network and $5\times10^{-4}$ for both $\log\nu$ and $\log\lambda_1$. During the first 500 epochs, all sensor weights are fixed at one; thereafter, they are updated every 200 epochs according to the PDE residuals and used in the attention calculation.

\begin{table*}[!t]
\centering
\caption{Reconstruction and parameter inversion errors for the 2D cylinder wake inverse problem under different numbers of velocity boundary observations and noise levels.}
\label{tab:cylinder2d_uvbc_noise_results}
\resizebox{\textwidth}{!}{
\begin{tabular}{cccccccc}
\toprule
Noise level & \# Velocity obs. & $e_{|\mathbf{u}|}$ & $e_u$ & $e_v$ & $e_p$ & $e_{\mathrm{Re}}$ & $e_{\lambda_1}$ \\
\midrule
\multirow{3}{*}{0.00}
& 13{,}000 & $3.146\times 10^{-2}$ & $2.637\times 10^{-2}$ & $1.156\times 10^{-1}$ & $3.386\times 10^{-1}$ & $3.130\times 10^{-1}$ & $5.907\times 10^{-3}$ \\
& 14{,}000 & $2.677\times 10^{-2}$ & $2.245\times 10^{-2}$ & $9.804\times 10^{-2}$ & $2.329\times 10^{-1}$ & $2.771\times 10^{-1}$ & $5.517\times 10^{-3}$ \\
& 15{,}000 & $5.736\times 10^{-3}$ & $5.498\times 10^{-3}$ & $1.871\times 10^{-2}$ & $1.136\times 10^{-1}$ & $4.917\times 10^{-2}$ & $7.368\times 10^{-4}$ \\
\midrule
\multirow{3}{*}{0.01}
& 13{,}000 & $5.612\times 10^{-2}$ & $4.754\times 10^{-2}$ & $2.086\times 10^{-1}$ & $3.787\times 10^{-1}$ & $4.477\times 10^{-1}$ & $3.417\times 10^{-3}$ \\
& 14{,}000 & $1.580\times 10^{-2}$ & $1.360\times 10^{-2}$ & $5.672\times 10^{-2}$ & $2.448\times 10^{-1}$ & $1.459\times 10^{-1}$ & $4.799\times 10^{-4}$ \\
& 15{,}000 & $3.427\times 10^{-2}$ & $2.885\times 10^{-2}$ & $1.260\times 10^{-1}$ & $2.122\times 10^{-1}$ & $3.176\times 10^{-1}$ & $3.574\times 10^{-3}$ \\
\midrule
\multirow{3}{*}{0.05}
& 13{,}000 & $1.828\times 10^{-1}$ & $1.616\times 10^{-1}$ & $6.003\times 10^{-1}$ & $7.758\times 10^{-1}$ & $1.488\times 10^{-1}$ & $4.746\times 10^{-2}$ \\
& 14{,}000 & $2.082\times 10^{-1}$ & $1.854\times 10^{-1}$ & $6.983\times 10^{-1}$ & $8.519\times 10^{-1}$ & $2.214\times 10^{-1}$ & $8.205\times 10^{-2}$ \\
& 15{,}000 & $1.333\times 10^{-1}$ & $1.168\times 10^{-1}$ & $4.442\times 10^{-1}$ & $6.462\times 10^{-1}$ & $2.339\times 10^{-1}$ & $3.050\times 10^{-2}$ \\
\bottomrule
\end{tabular}
}
\end{table*}

\autoref{tab:cylinder2d_uvbc_noise_results} shows the reconstruction 
errors of the physical fields and the physical parameters for the 
two-dimensional cylinder wake inverse problem, where different numbers 
of boundary velocity observations and noise levels are considered. 
It can be seen that, in the noise-free case, increasing the number of 
observations from 13,000 to 15,000 consistently reduces the velocity, 
pressure, and parameter errors, with 15,000 observations yielding the 
lowest error for every reported quantity. When a mild noise of 1\% is 
added, the errors no longer decrease monotonically with the number of 
observations. The 14,000-observation setting gives the lowest velocity 
and parameter errors, whereas 15,000 observations produce the lowest 
pressure error. At a 5\% noise level, the overall field errors increase 
markedly. Nevertheless, this deterioration is effectively mitigated by 
deploying 15,000 observations, which give the lowest errors for all 
reconstructed fields. The Reynolds-number error, however, remains 
non-monotone and reaches its smallest value with 13,000 observations. 
These results demonstrate that increasing the amount of boundary velocity 
data generally improves the reconstruction of the time-dependent interior 
flow, while the identification of the physical parameters is more sensitive 
to observation noise and the particular sampling pattern. \autoref{fig:cylinder2d_t10_t19_uvp_pred_error_clean} and \autoref{fig:cylinder2d_t10_t19_uvp_pred_error_noise001} shows the  point-wise errors of the reconstruction fields for velocity fields and pressure fields with zero noise and 1\% noise respectively, we choose $t=10,19$ to show the reconstruction results, it is seen that the proposed method is able to reconstruct the field information from sparse observations.

We further compare the proposed method with a vanilla PINN for the 2D cylinder wake inverse problem using 15,000 boundary velocity observations without noise. As shown in \autoref{tab:cylinder2d_pinn_comparison_15000}, the proposed method gives lower errors for the reconstructed velocity and pressure fields, as well as for both identified parameters, $\mathrm{Re}$ and $\lambda_1$. The improvement in identifying $\lambda_1$ is particularly clear. These results indicate that the proposed method recovers the unobserved pressure field and governing parameters more accurately from boundary velocity observations.

\autoref{tab:cylinder2d_ablation_15000_uvbc} presents the ablation study 
results for the two-dimensional cylinder wake test problem with 15,000 
boundary velocity observations and no observation noise. From 
\autoref{tab:cylinder2d_ablation_15000_uvbc}, we can see that the full 
model yields the lowest errors for all reported field metrics and for 
the Reynolds number. The full model also provides the best overall field 
reconstruction performance. Removing either the feedback module or 
residual-based sensor weighting results in a substantial loss of velocity 
and pressure accuracy, as well as Reynolds-number identification accuracy. 
When both modules are removed, all reported errors further increase and 
reach their largest values. These results demonstrate that both the feedback 
module and residual-based sensor weighting are important for propagating 
sparse time-dependent boundary information into the unobserved interior flow.

\begin{table*}[!t]
\centering
\caption{Comparison with vanilla PINN for the 2D cylinder wake inverse problem using 15000 velocity boundary observations without noise.}
\label{tab:cylinder2d_pinn_comparison_15000}
\resizebox{\textwidth}{!}{
\begin{tabular}{lcccccc}
\toprule
Method & $e_{|\mathbf{u}|}$ & $e_u$ & $e_v$ & $e_p$ & $e_{\mathrm{Re}}$ & $e_{\lambda_1}$ \\
\midrule
Vanilla PINN
& $8.630\times 10^{-2}$
& $7.435\times 10^{-2}$
& $2.773\times 10^{-1}$
& $4.692\times 10^{-1}$
& $2.784\times 10^{-1}$
& $3.202\times 10^{-2}$ \\
Ours
& $\mathbf{5.736\times 10^{-3}}$
& $\mathbf{5.498\times 10^{-3}}$
& $\mathbf{1.871\times 10^{-2}}$
& $\mathbf{1.136\times 10^{-1}}$
& $\mathbf{4.917\times 10^{-2}}$
& $\mathbf{7.368\times 10^{-4}}$ \\
\bottomrule
\end{tabular}
}
\end{table*}

\begin{table*}[!t]
\centering
\caption{Ablation study for the 2D cylinder wake inverse problem with 15000 velocity boundary observations and no observation noise.}
\label{tab:cylinder2d_ablation_15000_uvbc}
\resizebox{\textwidth}{!}{
\begin{tabular}{cccccccc}
\toprule
Case & Feedback layers & Sensor weighting &
$e_{|\mathbf{u}|}$ & $e_u$ & $e_v$ & $e_p$ & $e_{\mathrm{Re}}$ \\
\midrule
Full model
& 1
& Residual
& $5.736\times 10^{-3}$
& $5.498\times 10^{-3}$
& $1.871\times 10^{-2}$
& $1.136\times 10^{-1}$
& $4.917\times 10^{-2}$\\

No sensor weighting
& 1
& None
& $4.360\times 10^{-2}$
& $3.643\times 10^{-2}$
& $1.608\times 10^{-1}$
& $3.644\times 10^{-1}$
& $3.994\times 10^{-1}$\\

No feedback
& 0
& Residual
& $3.823\times 10^{-2}$
& $3.204\times 10^{-2}$
& $1.405\times 10^{-1}$
& $2.271\times 10^{-1}$
& $3.561\times 10^{-1}$\\

No feedback, no sensor weighting
& 0
& None
& $6.422\times 10^{-2}$
& $5.326\times 10^{-2}$
& $2.381\times 10^{-1}$
& $3.725\times 10^{-1}$
& $5.005\times 10^{-1}$\\
\bottomrule
\end{tabular}
}
\end{table*}

\begin{figure}[H]
    \centering
    \includegraphics[width=0.9\textwidth]{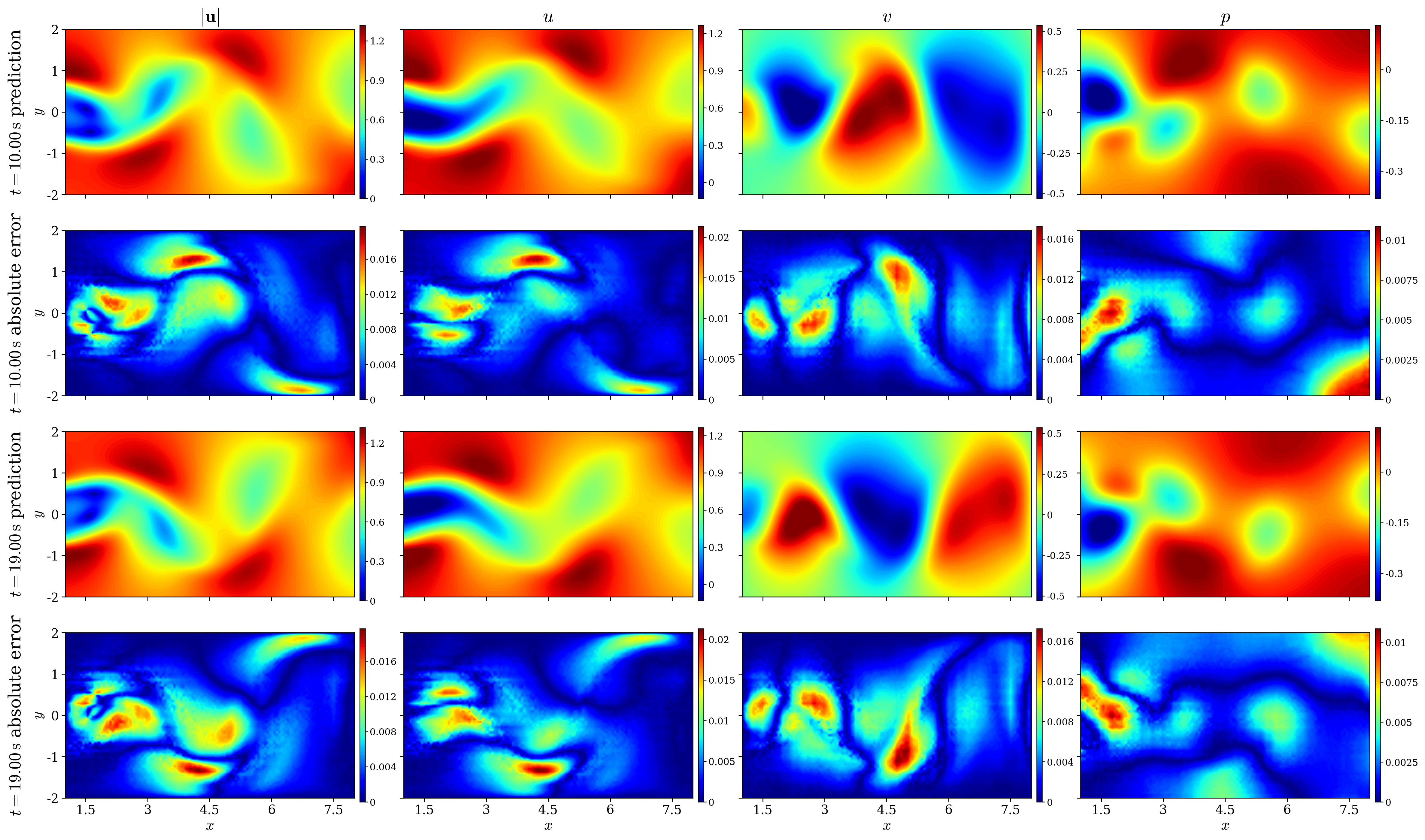}
    \caption{Predicted fields and absolute errors for the 2D cylinder wake inverse problem with clean boundary velocity data. The columns show $|\mathbf{u}|$, $u$, $v$, and mean-free pressure $p$. The first two rows give the results at $t=10$. The last two rows give the results at $t=19$.}
    \label{fig:cylinder2d_t10_t19_uvp_pred_error_clean}
\end{figure}

\begin{figure}[H]
    \centering
    \includegraphics[width=0.9\textwidth]{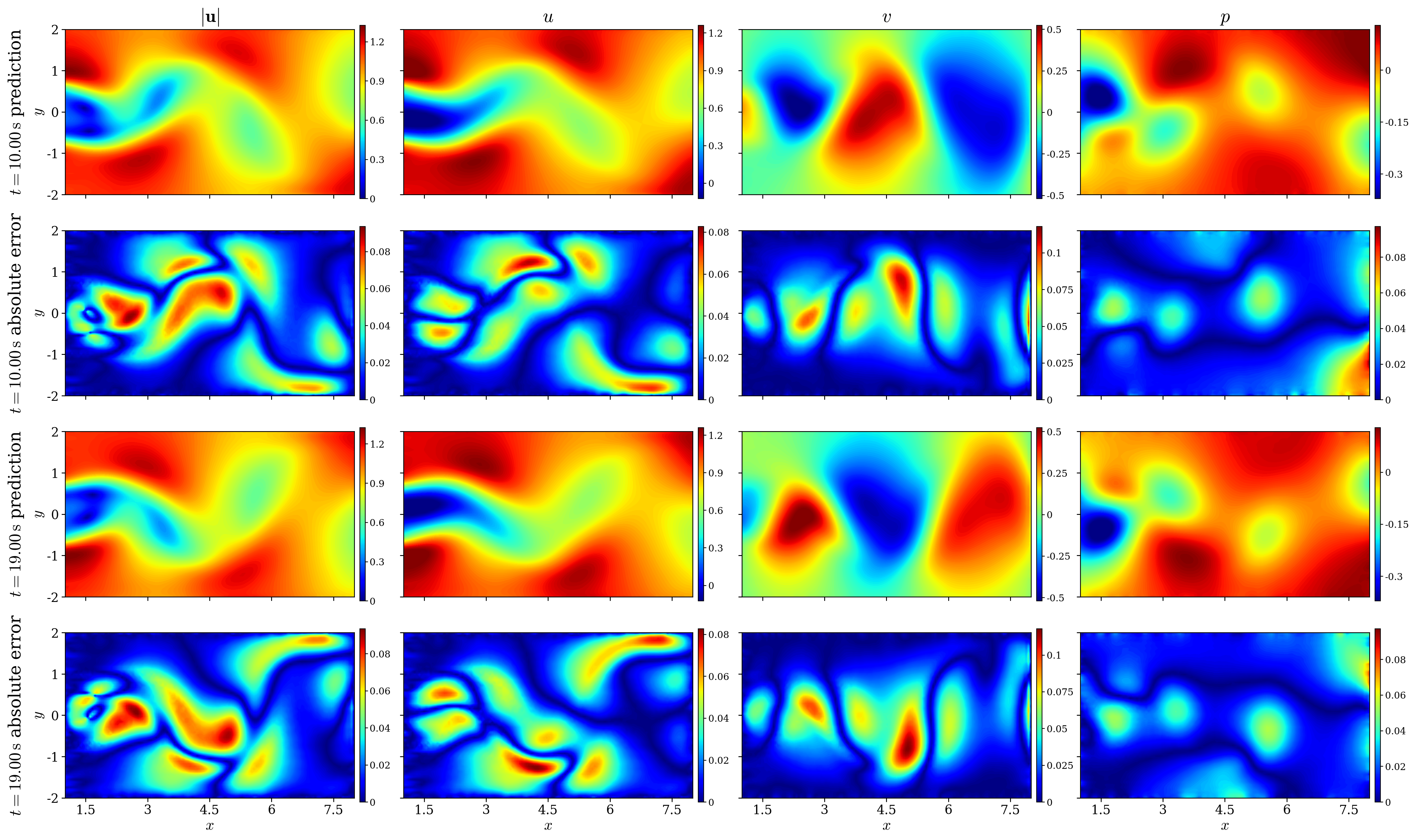}
    \caption{Predicted fields and absolute errors for the 2D cylinder wake inverse problem with $1\%$ noisy boundary velocity data. The columns show $|\mathbf{u}|$, $u$, $v$, and mean-free pressure $p$. The first two rows give the results at $t=10$. The last two rows give the results at $t=19$.}
    \label{fig:cylinder2d_t10_t19_uvp_pred_error_noise001}
\end{figure}

\begin{figure}[H]
    \centering

    \begin{subfigure}[t]{0.48\textwidth}
        \centering
        \includegraphics[width=\linewidth]{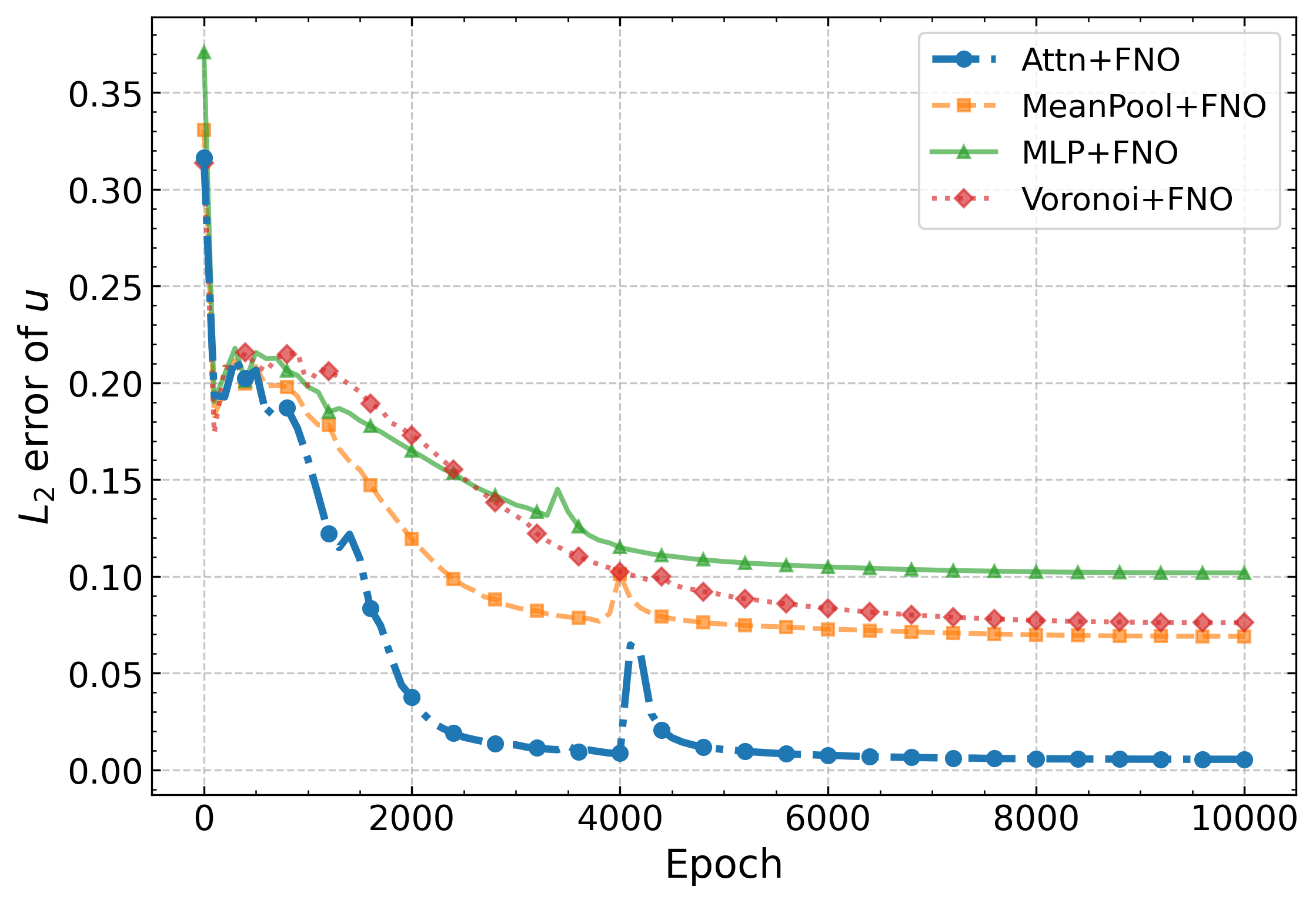}
        \caption{$u$}
        \label{fig:cylinder2d_l2_u_encoder_compare}
    \end{subfigure}
    \hfill
    \begin{subfigure}[t]{0.48\textwidth}
        \centering
        \includegraphics[width=\linewidth]{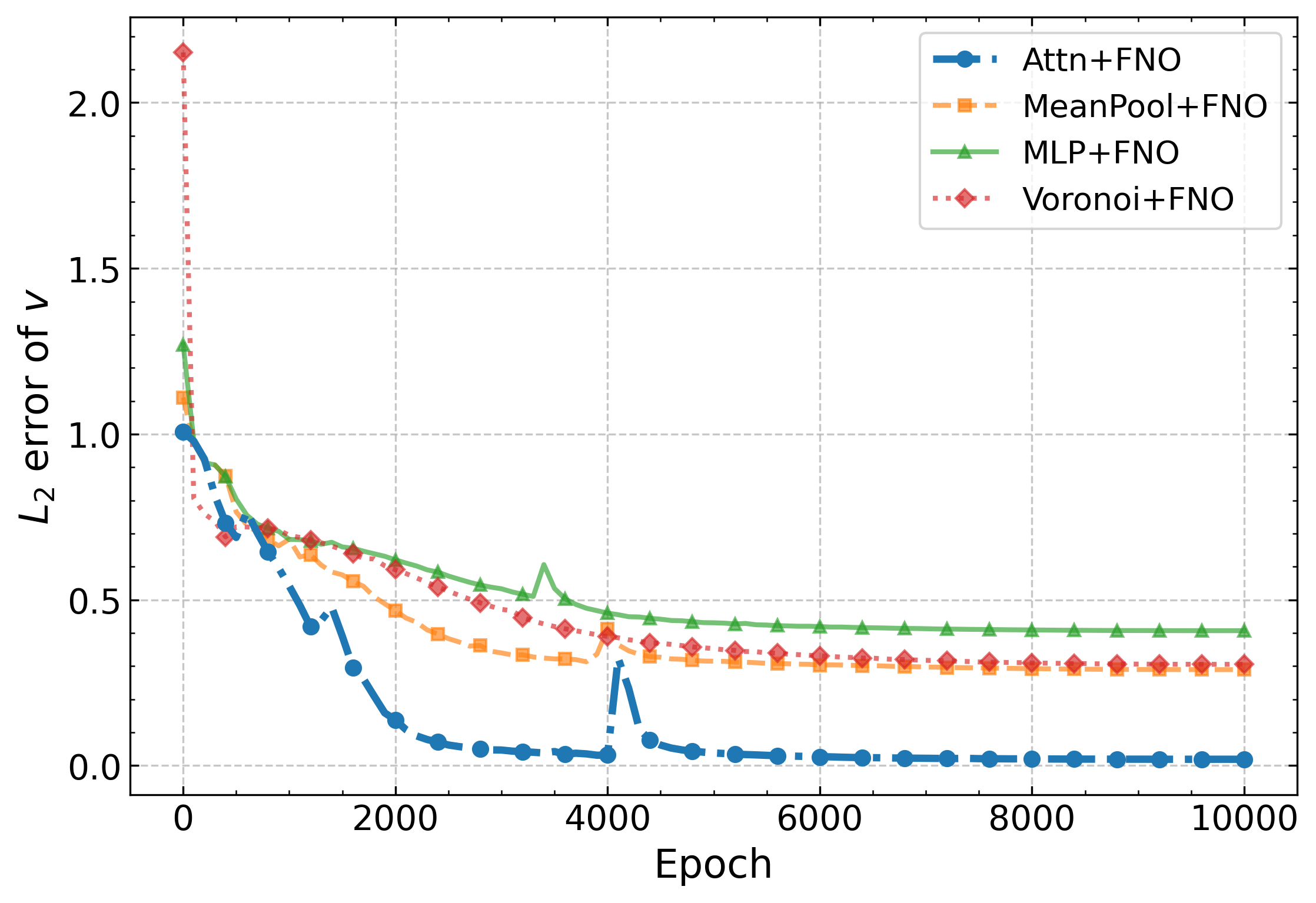}
        \caption{$v$}
        \label{fig:cylinder2d_l2_v_encoder_compare}
    \end{subfigure}

    \vspace{0.2cm}

    \begin{subfigure}[t]{0.48\textwidth}
        \centering
        \includegraphics[width=\linewidth]{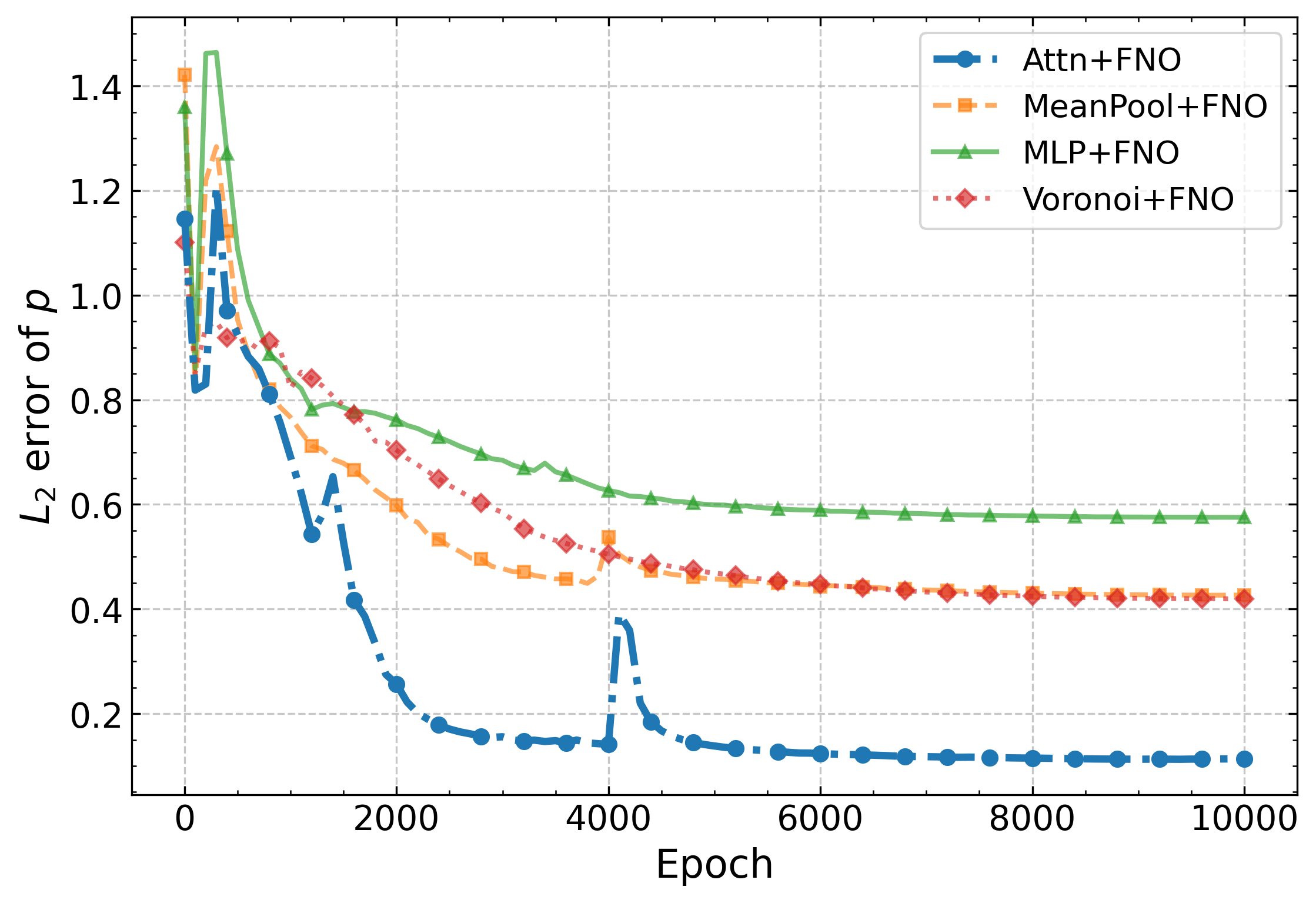}
        \caption{$p$}
        \label{fig:cylinder2d_l2_p_encoder_compare}
    \end{subfigure}
    \hfill
    \begin{subfigure}[t]{0.48\textwidth}
        \centering
        \includegraphics[width=\linewidth]{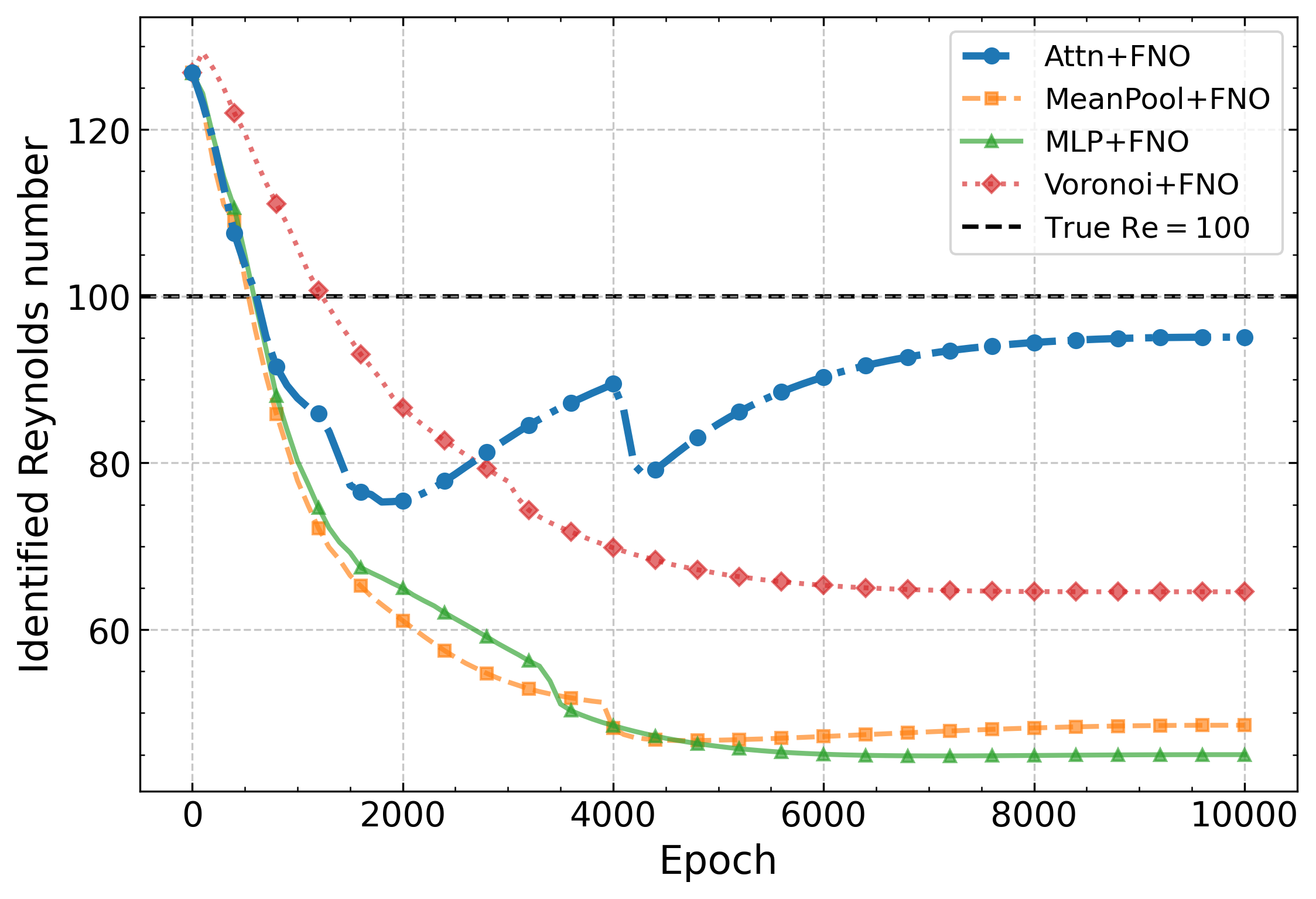}
        \caption{Identified $\mathrm{Re}$}
        \label{fig:cylinder2d_re_encoder_compare}
    \end{subfigure}

    \caption{Training histories for the 2D cylinder wake inverse problem.
    The first three panels show the relative $L_2$ errors of $u$, $v$,
    and the pressure $p$, while the last panel shows the
    identified Reynolds number. The compared methods are Attn+FNO,
    MeanPool+FNO, MLP+FNO, and Voronoi+FNO.}
    \label{fig:cylinder2d_l2_encoder_compare}
\end{figure}

\subsection{2D Non-ideal MHD Inverse Problem}
\label{subsec:mhd_inverse}

Finally, we consider a two-dimensional inverse problem for compressible non-ideal
magnetohydrodynamics (MHD) based on the Orszag--Tang vortex. The computational
domain is \(\Omega=[0,2\pi]\times[0,2\pi]\) with \(t\in[0,3.99]\), and periodic
boundary conditions in both spatial directions. The reference data were
generated from non-ideal MHD simulations performed using the PLUTO code
\cite{mignone2007pluto} on a uniform \(256\times256\) computational grid, with
400 temporal snapshots retained for the present study. The governing equations
with explicit viscosity and resistivity are
\begin{equation*}
\left\{
\begin{aligned}
    &\rho_t+\nabla\cdot(\rho\mathbf{v})=0,\\
    &(\rho\mathbf{v})_t
    +\nabla\cdot\Bigl[
        \rho\mathbf{v}\mathbf{v}
        +\bigl(p+\tfrac12|\mathbf{B}|^2\bigr)\mathbf{I}
        -\mathbf{B}\mathbf{B}-\boldsymbol{\tau}
    \Bigr]=0,\\
    &\mathbf{B}_t-\nabla\times(\mathbf{v}\times\mathbf{B})
    +\nabla\times(\eta\mathbf{J})=0,\\
    &E_t+\nabla\cdot\Bigl[
        \bigl(E+p+\tfrac12|\mathbf{B}|^2\bigr)\mathbf{v}
        -(\mathbf{v}\cdot\mathbf{B})\mathbf{B}\\
    &\qquad\qquad\quad
        -\boldsymbol{\tau}\mathbf{v}
        -\eta\mathbf{J}\times\mathbf{B}
    \Bigr]=0,\\
    &\nabla\cdot\mathbf{B}=0.
\end{aligned}
\right.
\label{eq:nonideal_mhd_system}
\end{equation*}
Here \(\mathbf{v}=(v_x,v_y,0)^{\mathrm{T}}\) and
\(\mathbf{B}=(B_x,B_y,0)^{\mathrm{T}}\). The current density, total energy
density, and viscous stress tensor are defined by
\begin{equation*}
\left\{
\begin{aligned}
    \mathbf{J}
    &=\nabla\times\mathbf{B}=(0,0,j_z)^{\mathrm{T}},
    \qquad
    j_z=\frac{\partial B_y}{\partial x}
        -\frac{\partial B_x}{\partial y},\\
    E
    &=\frac{p}{\gamma-1}
      +\frac12\rho|\mathbf{v}|^2
      +\frac12|\mathbf{B}|^2,\\
    \boldsymbol{\tau}
    &=\nu\Bigl[
        \nabla\mathbf{v}
        +(\nabla\mathbf{v})^{\mathrm{T}}
        -\frac23(\nabla\cdot\mathbf{v})\mathbf{I}
    \Bigr].
\end{aligned}
\right.
\end{equation*}
where \(\gamma=5/3\), \(\nu\) is the viscosity, and \(\eta\) is the magnetic
diffusivity. The true parameter values in the reference simulation are \(\nu=\eta=10^{-3}\). The initial condition is the standard Orszag--Tang vortex~\cite{toth2000constraint}:
\begin{equation*}
\left\{
\begin{aligned}
    \mathbf{v}(x,y,0)
    &=(-\sin y,\ \sin x,\ 0)^{\mathrm{T}},\\
    \mathbf{B}(x,y,0)
    &=(-\sin y,\ \sin 2x,\ 0)^{\mathrm{T}},\\
    \rho(x,y,0)
    &=\frac{25}{9},
    \qquad p(x,y,0)=\frac{5}{3}.
\end{aligned}
\right.
\end{equation*}
For \(\mathbf{q}=(\rho,v_x,v_y,B_x,B_y,p)\), the periodic boundary conditions are
\begin{equation*}
    \mathbf{q}(0,y,t)=\mathbf{q}(2\pi,y,t),
    \qquad
    \mathbf{q}(x,0,t)=\mathbf{q}(x,2\pi,t).
\end{equation*}

The model receives sparse magnetic-field observations $(B_x,B_y)$ and, in the pressure-assisted settings, additional boundary or interior pressure observations. The task is to reconstruct $(\rho,v_x,v_y,B_x,B_y,p)$, evaluate the current density as $j_z=\partial_xB_y-\partial_yB_x$, and identify $\nu$ and $\eta$. The magnetic observations are sampled globally from the interior space--time pool, with $N_B\in\{5{,}000,10{,}000,15{,}000\}$, while the pressure-assisted cases use $N_p=10{,}000$. The attention encoder is conditioned only on the magnetic observations, whereas the pressure observations enter through a separate sparse observation loss.

The encoder consists of a two-layer sensor MLP with a hidden width and output dimension of $64$, a four-head cross-attention layer, two bidirectional sensor--grid feedback stages, and an output latent dimension of $32$. Residual-aware weights are used in the sparse observation losses, while the magnetic-observation weights additionally enter the attention logits through a $0.5\log\omega_i$ bias. The gated latent map is decoded by an FNO consisting of four layers with a channel width of $96$. Each layer combines a spectral convolution retaining $32\times32$ Fourier modes with a parallel $1\times1$ convolution, followed by GELU activation. Both $\nu$ and $\eta$ are initialized at $10^{-4}$ and optimized through their logarithms without parameter priors. The loss combines the sparse observation losses, the continuity, momentum, induction, and energy residuals, the divergence-free constraint, and the prescribed initial conditions for $(B_x,B_y,\rho,p)$. Spatial periodicity is incorporated through sine--cosine coordinate features and Fourier-based spatial differentiation.

Training uses all $400$ frames through randomly sampled windows of five consecutive frames. During the first $3{,}000$ epochs, four joint updates are performed per epoch. From epochs $3{,}001$ to $6{,}500$, each epoch contains four field updates, one parameter update, and one joint update, followed by four joint updates per epoch during the final $500$ epochs. Muon-based optimizers are employed~\cite{jordan2024muon,liu2025muon}. During the first $500$ epochs of sensor-weight warm-up, all observation weights are fixed at one. They are first updated from the current PDE residuals at epoch $600$ and are subsequently updated every $200$ epochs.

\begin{table}[H]
\centering
\caption{Reconstruction and parameter inversion errors for the non-ideal MHD inverse problem under different magnetic-field observations, pressure-observation settings, and noise levels. Here $N_b$ and $N_p$ denote the numbers of magnetic-field and pressure observations. ``None'', ``boundary'', and ``Internal'' denote no pressure observation, boundary pressure observation, and internal pressure observation.}
\label{tab:mhd_pressure_obs_noise_results}
\scriptsize
\setlength{\tabcolsep}{4.2pt}
\resizebox{\textwidth}{!}{
\begin{tabular}{cccccccccccc}
\toprule
Noise level
& Pressure obs.
& $N_b$
& $N_p$
& $e_u$
& $e_v$
& $e_{b_x}$
& $e_{b_y}$
& $e_{\rho}$
& $e_p$
& $e_{\nu}$
& $e_{\eta}$ \\
\midrule
\multirow{5}{*}{0.00}
& Internal & 5000  & 10000
& $1.205\times 10^{-1}$
& $1.213\times 10^{-1}$
& $5.354\times 10^{-2}$
& $5.549\times 10^{-2}$
& $7.475\times 10^{-2}$
& $1.767\times 10^{-1}$
& $7.786\times 10^{-2}$
& $8.519\times 10^{-2}$ \\

& Internal & 10000 & 10000
& $1.166\times 10^{-1}$
& $1.153\times 10^{-1}$
& $4.535\times 10^{-2}$
& $4.764\times 10^{-2}$
& $7.225\times 10^{-2}$
& $1.750\times 10^{-1}$
& $8.348\times 10^{-2}$
& $8.668\times 10^{-2}$ \\

& None     & 15{,}000 & --
& $2.349\times 10^{-1}$
& $2.468\times 10^{-1}$
& $4.762\times 10^{-2}$
& $5.226\times 10^{-2}$
& $1.051\times 10^{-1}$
& $4.448\times 10^{-1}$
& $5.560\times 10^{-2}$
& $7.423\times 10^{-2}$ \\

& boundary     & 15{,}000 & 10000
& $1.694\times 10^{-1}$
& $1.708\times 10^{-1}$
& $4.503\times 10^{-2}$
& $4.879\times 10^{-2}$
& $9.099\times 10^{-2}$
& $2.850\times 10^{-1}$
& $3.121\times 10^{-2}$
& $5.600\times 10^{-2}$ \\

& Internal & 15{,}000 & 10000
& $1.197\times 10^{-1}$
& $1.206\times 10^{-1}$
& $4.017\times 10^{-2}$
& $4.286\times 10^{-2}$
& $7.023\times 10^{-2}$
& $1.779\times 10^{-1}$
& $1.425\times 10^{-1}$
& $1.358\times 10^{-1}$ \\
\midrule
\multirow{3}{*}{0.01}
& Internal & 5000  & 10000
& $1.197\times 10^{-1}$
& $1.202\times 10^{-1}$
& $5.408\times 10^{-2}$
& $5.655\times 10^{-2}$
& $7.502\times 10^{-2}$
& $1.777\times 10^{-1}$
& $7.838\times 10^{-2}$
& $8.610\times 10^{-2}$ \\

& Internal & 10000 & 10000
& $1.175\times 10^{-1}$
& $1.179\times 10^{-1}$
& $4.533\times 10^{-2}$
& $4.801\times 10^{-2}$
& $7.193\times 10^{-2}$
& $1.755\times 10^{-1}$
& $8.283\times 10^{-2}$
& $8.641\times 10^{-2}$ \\

& Internal & 15{,}000 & 10000
& $1.170\times 10^{-1}$
& $1.180\times 10^{-1}$
& $4.028\times 10^{-2}$
& $4.321\times 10^{-2}$
& $7.284\times 10^{-2}$
& $1.732\times 10^{-1}$
& $8.188\times 10^{-2}$
& $8.799\times 10^{-2}$ \\
\midrule
\multirow{3}{*}{0.05}
& Internal & 5000  & 10000
& $1.243\times 10^{-1}$
& $1.286\times 10^{-1}$
& $6.198\times 10^{-2}$
& $6.388\times 10^{-2}$
& $7.714\times 10^{-2}$
& $1.839\times 10^{-1}$
& $8.006\times 10^{-2}$
& $8.493\times 10^{-2}$ \\

& Internal & 10000 & 10000
& $1.219\times 10^{-1}$
& $1.237\times 10^{-1}$
& $5.306\times 10^{-2}$
& $5.566\times 10^{-2}$
& $7.271\times 10^{-2}$
& $1.800\times 10^{-1}$
& $8.028\times 10^{-2}$
& $8.497\times 10^{-2}$ \\

& Internal & 15{,}000 & 10000
& $1.199\times 10^{-1}$
& $1.249\times 10^{-1}$
& $4.910\times 10^{-2}$
& $5.245\times 10^{-2}$
& $7.408\times 10^{-2}$
& $1.778\times 10^{-1}$
& $8.097\times 10^{-2}$
& $8.928\times 10^{-2}$ \\
\bottomrule
\end{tabular}
}
\end{table}

\autoref{tab:mhd_pressure_obs_noise_results} shows the reconstruction 
errors of the physical fields and physical parameters for the 
two-dimensional non-ideal MHD inverse problem, where different numbers  of magnetic-field observations, pressure-observation settings, and noise  levels are considered. It can be seen that, in the noise-free case,  pressure information is important for recovering the hydrodynamic  variables. With 15,000 magnetic-field observations, omitting pressure  data substantially increases the velocity, density, and pressure errors.  Among the three noise-free cases with 15,000 magnetic-field observations,  interior pressure observations yield the lowest errors for all reconstructed 
fields, whereas boundary pressure observations give the smallest errors for  $\nu$ and $\eta$. When a mild noise of 1\% is added, the overall field  errors remain close to their noise-free scenarios, indicating that the  proposed framework is largely insensitive to low-level perturbations. At a 5\% noise level, the field errors increase only moderately, while  increasing the number of magnetic-field observations generally improves  the magnetic-field reconstruction. The errors for $\nu$ and $\eta$,  however, do not exhibit a strictly monotonic decrease. These results  demonstrate that the proposed method achieves robust field reconstruction  under moderate noise and that pressure observations are important for  recovering the hydrodynamic variables, while physical-parameter identification remains more sensitive to the observation configuration. \autoref{fig:mhd_frame0050_0200_0399_uvbxbyprho_pred_error_clean} and \autoref{fig:mhd_frame0050_0200_0399_uvbxbyprho_pred_error_noise001} show the reconstructed fields and the point-wise  absolute errors at frames 50, 200, and 399 using clean and 1\% noisy  magnetic-field and interior pressure observations, respectively. It is  seen that the proposed method is able to reconstruct the coupled velocity,  magnetic, pressure, and density fields from sparse observations. The main  spatial structures and their temporal evolution are recovered at all three 
frames. Although the errors become more localized around sharp gradients  and fine-scale structures at later times, the overall reconstructed fields remain consistent with the reference solutions under observation noise.

\autoref{tab:mhd_ablation_latest} presents the ablation study results 
for the two-dimensional non-ideal MHD test problem with 15,000 
magnetic-field observations, 10,000 internal pressure observations, 
and no observation noise. From \autoref{tab:mhd_ablation_latest}, we 
can see that no single configuration yields the lowest errors for all 
reported metrics. The full model attains the lowest reported $e_{b_y}$ 
and $e_\rho$, whereas the case without the feedback layer gives the 
smallest velocity and pressure errors. Removing sensor weighting yields 
the lowest $e_{b_x}$, $e_\nu$, and $e_\eta$. Reducing the feedback path 
can improve the velocity and pressure reconstruction but results in a 
slight loss of magnetic-field accuracy. Removing sensor weighting improves the parameter estimates but does not produce a uniform gain across the field metrics. These results indicate that, in this case, the feedback module and residual-based sensor weighting mainly balance the accuracy of magnetic-field reconstruction, hydrodynamic-field reconstruction, and physical-parameter identification. We further compare the proposed method with a vanilla PINN for the non-ideal MHD inverse problem using 15,000 magnetic-field observations without pressure observations or noise. As shown in \autoref{tab:mhd_vanilla_pinn_comparison}, the proposed method achieves lower errors for all reconstructed fields and both identified parameters, \(\nu\) and \(\eta\). These results indicate that the proposed method recovers the unobserved hydrodynamic fields and governing parameters more accurately from sparse magnetic-field observations.

\begin{table}[H]
\centering
\caption{Ablation study for the 2D non-ideal MHD inverse problem with 15000 internal magnetic-field observations and 10000 internal pressure observations.}
\label{tab:mhd_ablation_latest}
\resizebox{\textwidth}{!}{%
\begin{tabular}{ccccccccccc}
\toprule
Case & Feedback layers & Sensor weighting
& $e_u$ & $e_v$ & $e_{b_x}$ & $e_{b_y}$ & $e_\rho$ & $e_p$ & $e_\nu$ & $e_\eta$ \\
\midrule
Full model
& 2
& Residual
& $1.197\times 10^{-1}$
& $1.206\times 10^{-1}$
& $4.017\times 10^{-2}$
& $4.286\times 10^{-2}$
& $7.023\times 10^{-2}$
& $1.779\times 10^{-1}$
& $1.425\times 10^{-1}$
& $1.358\times 10^{-1}$ \\

No sensor weighting
& 2
& None
& $1.199\times 10^{-1}$
& $1.215\times 10^{-1}$
& $4.012\times 10^{-2}$
& $4.351\times 10^{-2}$
& $7.208\times 10^{-2}$
& $1.781\times 10^{-1}$
& $8.176\times 10^{-2}$
& $8.787\times 10^{-2}$ \\

No feedback layer
& 1
& Residual
& $1.154\times 10^{-1}$
& $1.174\times 10^{-1}$
& $4.115\times 10^{-2}$
& $4.385\times 10^{-2}$
& $7.057\times 10^{-2}$
& $1.717\times 10^{-1}$
& $1.425\times 10^{-1}$
& $1.358\times 10^{-1}$ \\

No feedback layer, no sensor weighting
& 1
& None
& $1.214\times 10^{-1}$
& $1.218\times 10^{-1}$
& $4.240\times 10^{-2}$
& $4.482\times 10^{-2}$
& $7.051\times 10^{-2}$
& $1.812\times 10^{-1}$
& $4.927\times 10^{-1}$
& $4.947\times 10^{-1}$ \\
\bottomrule
\end{tabular}%
}
\end{table}

\begin{table}[htbp]
    \centering
    \caption{Comparison with vanilla PINN for the non-ideal MHD inverse problem using 15000 magnetic-field observation locations without pressure observations or noise.}
    \label{tab:mhd_vanilla_pinn_comparison}
    \renewcommand{\arraystretch}{1.15}
    \setlength{\tabcolsep}{6pt}
    \resizebox{\linewidth}{!}{%
    \begin{tabular}{@{}l*{8}{c}@{}}
        \toprule
        Method & $e_u$ & $e_v$ & $e_{b_x}$ & $e_{b_y}$ & $e_{\rho}$ & $e_p$ & $e_{\nu}$ & $e_{\eta}$ \\
        \midrule
        Vanilla PINN & $3.378\times 10^{-1}$ & $3.142\times 10^{-1}$ & $8.348\times 10^{-2}$ & $9.135\times 10^{-2}$ & $4.392\times 10^{-1}$ & $7.149\times 10^{-1}$ & $5.244\times 10^{-1}$ & $1.527\times 10^{-1}$ \\
        Ours & $\mathbf{2.349\times 10^{-1}}$ & $\mathbf{2.468\times 10^{-1}}$ & $\mathbf{4.762\times 10^{-2}}$ & $\mathbf{5.226\times 10^{-2}}$ & 
        $\mathbf{1.051\times 10^{-1}}$ & $\mathbf{4.448\times 10^{-1}}$ & 
        $\mathbf{5.560\times 10^{-2}}$ & $\mathbf{7.423\times 10^{-2}}$ \\
        \bottomrule
    \end{tabular}%
    }
    \vspace{4pt}
    \begin{minipage}{\linewidth}
    \end{minipage}
\end{table}

\begin{figure}[H]
    \centering
    \includegraphics[width=1.0\textwidth]{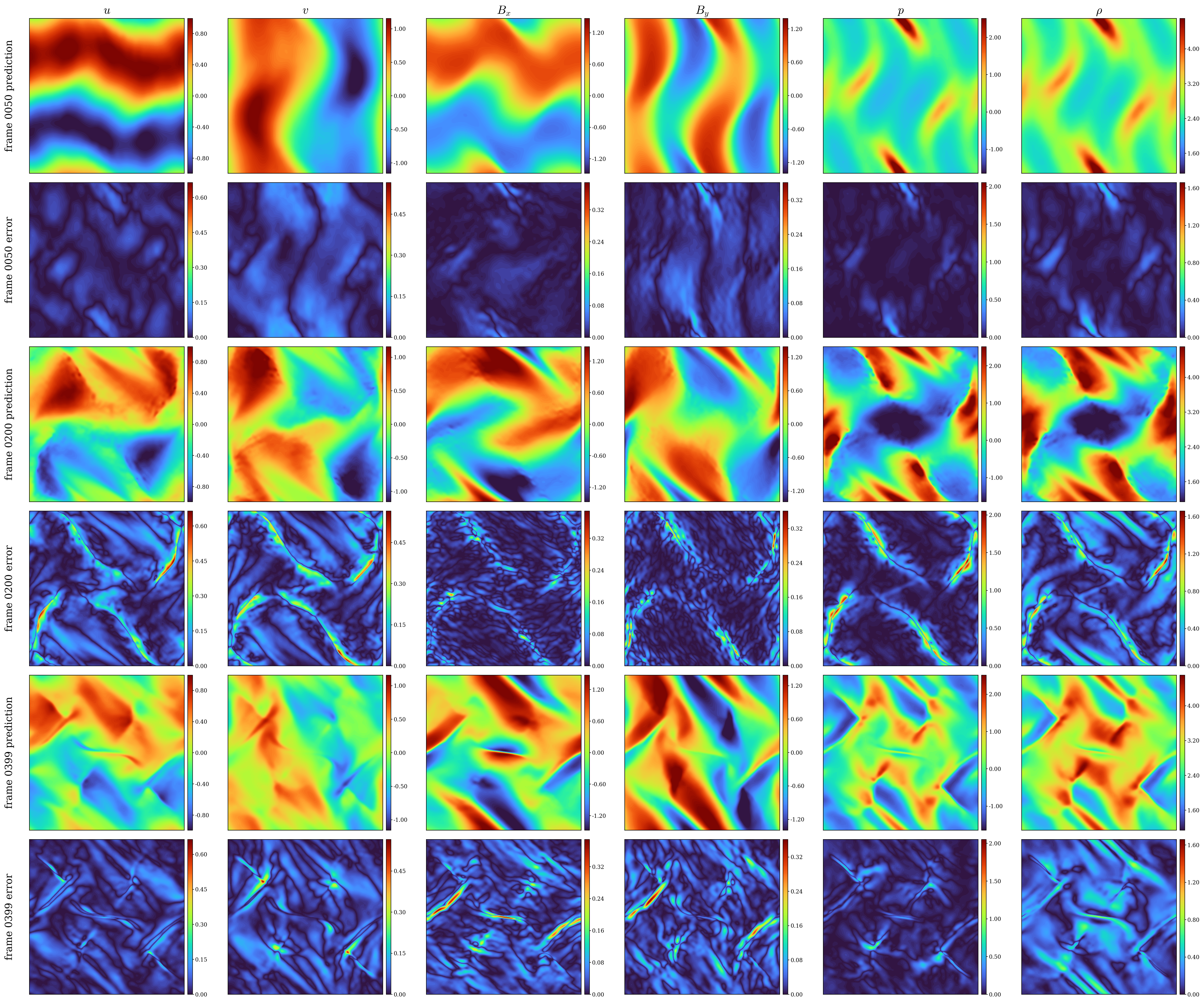}
    \caption{Predicted fields and absolute errors for the 2D non-ideal MHD inverse problem with clean magnetic-field and internal pressure data. The columns show $u$, $v$, $B_x$, $B_y$, mean-free pressure $p$, and density $\rho$. The row pairs correspond to frames 50, 200, and 399.}
    \label{fig:mhd_frame0050_0200_0399_uvbxbyprho_pred_error_clean}
\end{figure}

\begin{figure}[H]
    \centering
    \includegraphics[width=1.0\textwidth]{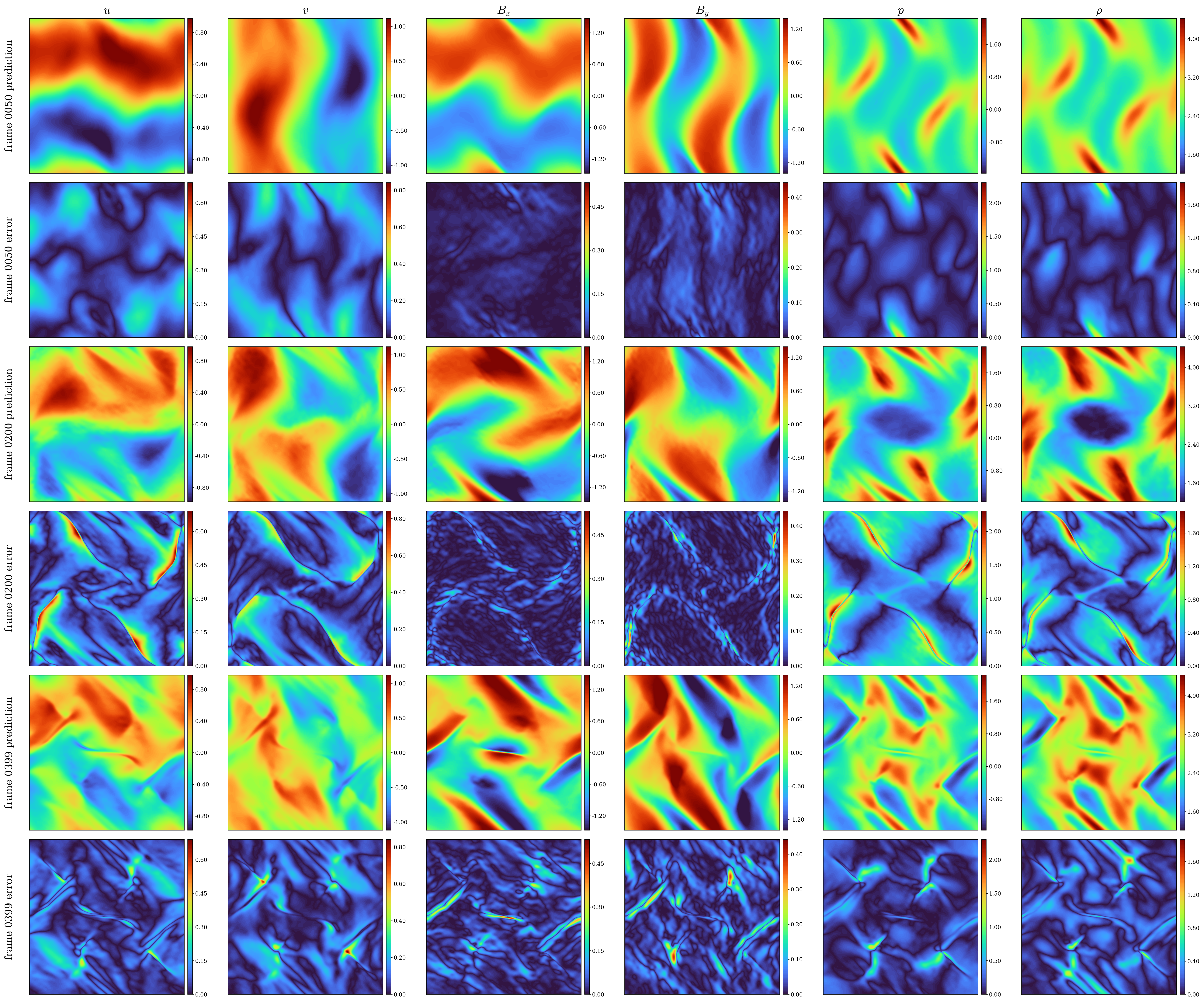}
    \caption{Predicted fields and absolute errors for the 2D non-ideal MHD inverse problem with $1\%$ noisy magnetic-field and internal pressure data. The columns show $u$, $v$, $B_x$, $B_y$, mean-free pressure $p$, and density $\rho$. The row pairs correspond to frames 50, 200, and 399.}
    \label{fig:mhd_frame0050_0200_0399_uvbxbyprho_pred_error_noise001}
\end{figure}

\section{Conclusion}
\label{sec:conclusion}
In this work, we developed a physics-constrained attentive neural operator framework to jointly reconstruct fields and identify parameters in coupled physical systems from incomplete observations. The model represents sparse measurements as sensor tokens, maps them onto a regular grid through cross-attention, and decodes the full fields with an FNO, thereby recovering both observed and unobserved variables without full-field supervision.

We also established a conditional stability estimate that bounds the field and parameter errors under local observability conditions. Numerical experiments on lid-driven cavity flows, a cylinder wake, and a non-ideal MHD problem show that the framework can recover hidden fields and identifies physical parameters from sparse single-field observations. The results also indicate that parameter identification is generally more sensitive to noise and sensor configuration than state reconstruction.

Several limitations remain. We verify local observability numerically for the tested configurations rather than proving it for general systems, and we optimize each case separately, requiring retraining for every new configuration. Future work will pursue generalizable, pre-trained inverse operators across varying Reynolds numbers and boundary conditions. Other directions include adaptive sensor placement, uncertainty quantification, and large-scale three-dimensional time-dependent systems.

\section*{Acknowledgements}
K. Tang is supported by NSFC grant 12501598. X. Sun is supported by NSFC grants 12622131, 12525210, and 92570102. W. Li is supported by the Shanghai Pujiang Program (25PJD058) and NSFC grant 62602407.

\clearpage
\appendix

\section{Multi Seeds Results}
\label{app:Multi_Seeds_Results}

\begin{table}[H]
\centering
\caption{Random-seed evaluation for the 2D lid-driven cavity inverse problem at $\mathrm{Re}=3200$. All settings are reported as mean $\pm$ standard deviation over five runs.}
\label{tab:ldc2d_re3200_random_seed_results}
\resizebox{\textwidth}{!}{
\begin{tabular}{ccccccc}
\toprule
\# Sensors & Noise level & $e_{|\mathbf{u}|}$ & $e_u$ & $e_v$ & $e_p$ & $e_{\mathrm{Re}}$ \\
\midrule
100 & 0.00
& $(8.037 \pm 2.187)\times 10^{-2}$
& $(7.558 \pm 2.184)\times 10^{-2}$
& $(8.948 \pm 2.196)\times 10^{-2}$
& $(2.598 \pm 0.193)\times 10^{-1}$
& $(1.239 \pm 0.557)\times 10^{-1}$ \\
200 & 0.00
& $(5.136 \pm 0.384)\times 10^{-2}$
& $(5.197 \pm 0.388)\times 10^{-2}$
& $(5.669 \pm 0.360)\times 10^{-2}$
& $(1.479 \pm 0.794)\times 10^{-1}$
& $(6.263 \pm 3.650)\times 10^{-2}$ \\
300 & 0.00
& $(4.799 \pm 1.747)\times 10^{-2}$
& $(4.487 \pm 1.774)\times 10^{-2}$
& $(5.607 \pm 1.946)\times 10^{-2}$
& $(1.145 \pm 0.682)\times 10^{-1}$
& $(1.444 \pm 0.823)\times 10^{-1}$ \\
\midrule
100 & 0.01
& $(5.718 \pm 1.010)\times 10^{-2}$
& $(5.379 \pm 0.834)\times 10^{-2}$
& $(6.511 \pm 1.173)\times 10^{-2}$
& $(2.176 \pm 0.685)\times 10^{-1}$
& $(9.281 \pm 4.544)\times 10^{-2}$ \\
200 & 0.01
& $(5.017 \pm 0.244)\times 10^{-2}$
& $(5.138 \pm 0.227)\times 10^{-2}$
& $(5.452 \pm 0.237)\times 10^{-2}$
& $(1.083 \pm 0.034)\times 10^{-1}$
& $(7.717 \pm 1.085)\times 10^{-2}$ \\
300 & 0.01
& $(4.120 \pm 0.165)\times 10^{-2}$
& $(3.806 \pm 0.146)\times 10^{-2}$
& $(4.830 \pm 0.189)\times 10^{-2}$
& $(9.036 \pm 0.944)\times 10^{-2}$
& $(1.155 \pm 0.081)\times 10^{-1}$ \\
\midrule
100 & 0.05
& $(9.713 \pm 1.256)\times 10^{-2}$
& $(9.174 \pm 1.281)\times 10^{-2}$
& $(1.075 \pm 0.129)\times 10^{-1}$
& $(2.420 \pm 0.276)\times 10^{-1}$
& $(2.062 \pm 0.399)\times 10^{-1}$ \\
200 & 0.05
& $(5.694 \pm 1.094)\times 10^{-2}$
& $(5.532 \pm 1.049)\times 10^{-2}$
& $(6.411 \pm 1.129)\times 10^{-2}$
& $(1.200 \pm 0.214)\times 10^{-1}$
& $(5.054 \pm 4.654)\times 10^{-2}$ \\
300 & 0.05
& $(4.343 \pm 0.325)\times 10^{-2}$
& $(4.177 \pm 0.261)\times 10^{-2}$
& $(4.949 \pm 0.389)\times 10^{-2}$
& $(9.427 \pm 0.993)\times 10^{-2}$
& $(1.172 \pm 0.221)\times 10^{-1}$ \\
\bottomrule
\end{tabular}
}
\end{table}

\section{Proofs and supplementary analysis for the sparse inverse framework}
\label{app:theory_details}

This appendix proves the two stability results in
Section~\ref{sec:theory_sparse_inverse}, records the assumption that connects
the sampled loss to continuous residual norms, and spells out the structural
calculations used for the Navier--Stokes and MHD examples. All symbols below
have the meaning fixed in Section~\ref{sec:theory_sparse_inverse}.

\subsection{Proof of linearized joint observability}
\label{app:proof_linear_observability}

\begin{proof}[Proof of Proposition~\ref{prop:forward_sensitivity_implies_observability}]
Recall from \eqref{eq:parameter_to_state_derivative} that
\begin{equation*}
    \mathcal T^\dagger
    =D\mathscr U(\lambda^\dagger)
    =-(\mathcal L^\dagger)^{-1}\mathcal R_\lambda^\dagger.
    \label{eq:appendix_reduced_state_derivative}
\end{equation*}
For a perturbation
\((h,\xi)\in\mathcal X_U\times\mathbb R^m\), set
\begin{equation*}
    \widetilde h:=h-\mathcal T^\dagger\xi.
    \label{eq:appendix_transverse_state_perturbation}
\end{equation*}
Since
\(\mathcal L^\dagger\mathcal T^\dagger
=-\mathcal R_\lambda^\dagger\),
\begin{equation*}
    \mathcal L^\dagger\widetilde h
    =\mathcal L^\dagger h
     +\mathcal R_\lambda^\dagger\xi.
    \label{eq:appendix_reduced_residual_identity}
\end{equation*}
Assumption~\ref{ass:constrained_state_stability} therefore yields
\begin{equation}
    \|\widetilde h\|_{\mathcal X_U}
    \leq
    C_F
    \|\mathcal L^\dagger h
      +\mathcal R_\lambda^\dagger\xi\|_{\mathcal Y}.
    \label{eq:proof_transverse_h_bound}
\end{equation}

The observation perturbation has the decomposition
\begin{equation*}
\begin{aligned}
    \mathcal H_U^\dagger h
    +\mathcal H_\lambda^\dagger\xi
    &=\mathcal H_U^\dagger\widetilde h
      +(\mathcal H_U^\dagger\mathcal T^\dagger
        +\mathcal H_\lambda^\dagger)\xi\\
    &=\mathcal H_U^\dagger\widetilde h+J_\lambda\xi.
\end{aligned}
\label{eq:appendix_observation_decomposition}
\end{equation*}
By Assumption~\ref{ass:parameter_observation_sensitivity},
\begin{equation*}
\begin{aligned}
    \sigma_*|\xi|
    &\leq\|J_\lambda\xi\|_{\mathcal Y_{\rm obs}}\\
    &\leq
    \|\mathcal H_U^\dagger h
      +\mathcal H_\lambda^\dagger\xi\|_{\mathcal Y_{\rm obs}}
    +\|\mathcal H_U^\dagger\|\,
     \|\widetilde h\|_{\mathcal X_U}.
\end{aligned}
\label{eq:proof_parameter_sensitivity_bound}
\end{equation*}
Combining this estimate with \eqref{eq:proof_transverse_h_bound} gives the
explicit parameter bound
\begin{equation}
\begin{aligned}
    |\xi|
    \leq{}&
    \sigma_*^{-1}
    \|\mathcal H_U^\dagger h
      +\mathcal H_\lambda^\dagger\xi\|_{\mathcal Y_{\rm obs}}\\
    &+\sigma_*^{-1}\|\mathcal H_U^\dagger\|C_F
    \|\mathcal L^\dagger h
      +\mathcal R_\lambda^\dagger\xi\|_{\mathcal Y}.
\end{aligned}
\label{eq:appendix_parameter_bound_explicit}
\end{equation}
Finally, \(h=\widetilde h+\mathcal T^\dagger\xi\), and hence
\begin{equation}
    \|h\|_{\mathcal X_U}
    \leq\|\widetilde h\|_{\mathcal X_U}
    +\|\mathcal T^\dagger\|\,|\xi|.
    \label{eq:appendix_state_recovery_from_reduced_part}
\end{equation}
Substituting \eqref{eq:proof_transverse_h_bound} and
\eqref{eq:appendix_parameter_bound_explicit} into the last inequality proves
\eqref{eq:general_linearized_observability_estimate}. If its two residual
terms vanish, then \(\xi=0\) by
\eqref{eq:appendix_parameter_bound_explicit}, and subsequently \(h=0\) by
\eqref{eq:appendix_state_recovery_from_reduced_part}. This proves the final
claim.
\end{proof}

The argument also shows why field reconstruction and parameter identification
can have very different conditioning. The former inherits the state constant
\(C_F\), whereas the latter contains the additional factor
\(\sigma_*^{-1}\). A nearly invisible parameter direction cannot be repaired
by a well-conditioned state equation alone.

\subsection{Proof of nonlinear stability and loss-to-error control}
\label{app:proof_nonlinear_loss}

Equip \(\mathcal Y\times\mathcal Y_{\rm obs}\) with the product norm
\begin{equation*}
    \|(r,q)\|_{\mathcal Y_A}
    :=\|r\|_{\mathcal Y}+\|q\|_{\mathcal Y_{\rm obs}},
    \qquad
    \mathcal A(Z):=(\mathcal R(Z),\mathcal O_S(Z)).
    \label{eq:augmented_product_norm}
\end{equation*}

\begin{proof}[Proof of Theorem~\ref{thm:joint_local_stability}]
Let \(e=Z-Z^\dagger=(h,\xi)\). The derivative of the augmented map at the
reference pair is
\begin{equation*}
    D\mathcal A(Z^\dagger)e
    =\left(
        \mathcal L^\dagger h
        +\mathcal R_\lambda^\dagger\xi,
        \mathcal H_U^\dagger h
        +\mathcal H_\lambda^\dagger\xi
      \right).
    \label{eq:augmented_derivative_at_reference}
\end{equation*}
Proposition~\ref{prop:forward_sensitivity_implies_observability} therefore
implies
\begin{equation*}
    d_{\mathcal X}(Z,Z^\dagger)
    \leq C_{\rm obs}
    \|D\mathcal A(Z^\dagger)e\|_{\mathcal Y_A}.
    \label{eq:linearized_augmented_lower_bound}
\end{equation*}

Local Lipschitz continuity of \(D\mathcal A\) gives
\begin{equation*}
    \mathcal A(Z)-\mathcal A(Z^\dagger)
    =D\mathcal A(Z^\dagger)e+\rho_A(e),
    \qquad
    \|\rho_A(e)\|_{\mathcal Y_A}
    \leq C_A\|e\|_{\mathcal X}^2.
    \label{eq:augmented_taylor_expansion}
\end{equation*}
Combining the last two estimates yields
\begin{equation*}
\begin{aligned}
    d_{\mathcal X}(Z,Z^\dagger)
    \leq{}&
    C_{\rm obs}
    \|\mathcal A(Z)-\mathcal A(Z^\dagger)\|_{\mathcal Y_A}\\
    &+C_{\rm obs}C_A\|e\|_{\mathcal X}^2.
\end{aligned}
\label{eq:appendix_nonlinear_pre_absorption}
\end{equation*}
Because \(d_{\mathcal X}\) is equivalent to the product norm on
\(\mathcal X\), the quadratic term can be absorbed into the left-hand side
after restricting \(Z\) to a sufficiently small neighborhood
\(\mathcal N_0\) of \(Z^\dagger\). Since
\(\mathcal R(Z^\dagger)=0\), this gives
\eqref{eq:general_local_nonlinear_stability}. If both
\(\mathcal R(Z)=0\) and
\(\mathcal O_S(Z)=\mathcal O_S(Z^\dagger)\), the right-hand side vanishes,
so \(Z=Z^\dagger\) within \(\mathcal N_0\). This proves local injectivity on
\(\mathcal M\).

It remains to pass from residual norms to the loss. By
\eqref{eq:residual_product_norm}, positivity of the residual and observation
weights, non-negativity of \(\mathcal L_{\rm reg}\), and equivalence of the
weighted observation norm, there is a constant \(C_\gamma>0\) such that
\begin{equation*}
    \|\mathcal R(\widehat Z)\|_{\mathcal Y}
    +\|\mathcal O_S(\widehat Z)-y^\delta\|_{\mathcal Y_{\rm obs}}
    \leq C_\gamma\mathcal J_\delta(\widehat Z)^{1/2}.
    \label{eq:loss_controls_residual_and_data}
\end{equation*}
The noise model \eqref{eq:general_noisy_observation_model} gives
\begin{equation*}
\begin{aligned}
    \|\mathcal O_S(\widehat Z)-\mathcal O_S(Z^\dagger)
    \|_{\mathcal Y_{\rm obs}}
    &\leq
    \|\mathcal O_S(\widehat Z)-y^\delta\|_{\mathcal Y_{\rm obs}}
    +\|\varepsilon^\delta\|_{\mathcal Y_{\rm obs}}\\
    &\leq
    \|\mathcal O_S(\widehat Z)-y^\delta\|_{\mathcal Y_{\rm obs}}
    +\delta.
\end{aligned}
\label{eq:appendix_noisy_observation_triangle}
\end{equation*}
Substitution into \eqref{eq:general_local_nonlinear_stability} proves
\eqref{eq:general_lipschitz_loss_to_error}.
\end{proof}

For completeness, suppose only the conditional estimate
\begin{equation*}
    d_{\mathcal X}(Z,Z^\dagger)
    \leq C_{\rm stab}\,
    \omega\!\left(
        \|\mathcal R(Z)\|_{\mathcal Y}
        +\|\mathcal O_S(Z)-\mathcal O_S(Z^\dagger)
        \|_{\mathcal Y_{\rm obs}}
    \right)
    \label{eq:general_conditional_stability_modulus}
\end{equation*}
is available, where \(\omega\) is increasing, continuous, and satisfies
\(\omega(0)=0\). The same calculation gives
\begin{equation*}
    d_{\mathcal X}(\widehat Z,Z^\dagger)
    \leq C_{\rm stab}\,
    \omega\!\left(
        C_\gamma\mathcal J_\delta(\widehat Z)^{1/2}+\delta
    \right).
    \label{eq:general_continuous_loss_to_error}
\end{equation*}
The choice \(\omega(s)=s\) recovers local Lipschitz stability;
\(\omega(s)=s^\alpha\), \(0<\alpha<1\), gives H\"older stability. A
logarithmic modulus can be used when no stronger inverse estimate is
available.

\subsection{Empirical loss consistency and discretization terms}
\label{app:empirical_consistency}
\label{subsec:empirical_multiphysics_error}

The continuous norms in \eqref{eq:general_continuous_physics_loss} are
approximated using interior grid or collocation points, boundary and initial
points, and sensor locations. Let \(\widehat{\mathcal J}_\delta\) denote the
resulting empirical objective. For a fixed trained model, assume the
a posteriori consistency estimate
\begin{equation}
\begin{aligned}
    &\|\mathcal R(\widehat Z)\|_{\mathcal Y}
    +\|\mathcal O_S(\widehat Z)-y^\delta\|_{\mathcal Y_{\rm obs}}\\
    &\qquad\leq
    C_{\rm emp}\widehat{\mathcal J}_\delta(\widehat Z)^{1/2}
    +\varepsilon_{\rm disc},
\end{aligned}
\label{eq:empirical_to_continuous_consistency}
\end{equation}
where
\begin{equation*}
    \varepsilon_{\rm disc}
    :=\varepsilon_{\rm der}
     +\varepsilon_{\rm quad}
     +\varepsilon_{\rm int}
     +\varepsilon_{\rm con}.
    \label{eq:discretization_error_decomposition}
\end{equation*}
The four terms represent derivative-discretization, quadrature or collocation,
sensor-interpolation, and numerical constraint-enforcement errors,
respectively. Combining \eqref{eq:empirical_to_continuous_consistency} with
the noisy-data triangle inequality gives
\begin{equation*}
    d_{\mathcal X}(\widehat Z,Z^\dagger)
    \leq C_{\rm stab}\,
    \omega\!\left(
        C_{\rm emp}\widehat{\mathcal J}_\delta(\widehat Z)^{1/2}
        +\varepsilon_{\rm disc}+\delta
    \right).
    \label{eq:empirical_multiphysics_loss_to_error}
\end{equation*}
For \(\omega(s)=s\), this is
\eqref{eq:empirical_multiphysics_lipschitz_bound}. This is an a posteriori
condition rather than a uniform sampling theorem: a small discrete loss is
informative only when the discretization also makes
\(\varepsilon_{\rm disc}\) small.

\subsection{Detailed structural calculations for the present systems}
\label{app:system_structural_details}

\subsubsection{Incompressible Navier--Stokes system}

Although the cavity grouping in \eqref{eq:cavity_two_field_grouping} places
pressure first, we retain the standard physical order
\((\mathbf u,p)\) when writing the Navier--Stokes operator. This is only a
change of coordinates in the product state space. For fixed
\(\nu^\dagger>0\), the state derivative of
\eqref{eq:ns_instance_general_residual} at
\((\mathbf u^\dagger,p^\dagger)\) is the constrained Oseen operator
\begin{equation*}
\begin{aligned}
    \mathcal L_{\rm NS}^\dagger
    (\delta\mathbf u,\delta p)
    =\Big(&
        (\mathbf u^\dagger\cdot\nabla)\delta\mathbf u
        +(\delta\mathbf u\cdot\nabla)\mathbf u^\dagger
        +\nabla\delta p
        -\nu^\dagger\Delta\delta\mathbf u,
        \\
        &\nabla\cdot\delta\mathbf u,
        \delta\mathbf u|_{\partial\Omega},
        \int_\Omega\delta p\,dx
    \Big).
\end{aligned}
\label{eq:linearized_oseen_operator}
\end{equation*}
For divergence-free \(\mathbf u^\dagger\) and homogeneous velocity
perturbations, integration by parts gives
\begin{equation*}
\begin{aligned}
    \big((\mathbf u^\dagger\cdot\nabla)\delta\mathbf u,
         \delta\mathbf u\big)_{L^2(\Omega)}
    &=\frac12\int_{\partial\Omega}
        |\delta\mathbf u|^2\mathbf u^\dagger\cdot\mathbf n\,ds\\
    &\quad-\frac12\int_\Omega
        (\nabla\cdot\mathbf u^\dagger)
        |\delta\mathbf u|^2\,dx
    =0.
\end{aligned}
\label{eq:ns_transport_skew_symmetry}
\end{equation*}
Here \(\mathbf n\) is the outward unit normal to \(\partial\Omega\).
The viscous term satisfies
\begin{equation*}
    \big(-\nu^\dagger\Delta\delta\mathbf u,
         \delta\mathbf u\big)_{L^2(\Omega)}
    =\nu^\dagger
     \|\nabla\delta\mathbf u\|_{L^2(\Omega)}^2,
    \label{eq:ns_viscous_dissipation}
\end{equation*}
while the remaining convection term obeys
\begin{equation*}
    \left|
        \big((\delta\mathbf u\cdot\nabla)\mathbf u^\dagger,
             \delta\mathbf u\big)_{L^2(\Omega)}
    \right|
    \leq
    \|\nabla\mathbf u^\dagger\|_{L^\infty(\Omega)}
    \|\delta\mathbf u\|_{L^2(\Omega)}^2.
    \label{eq:ns_background_gradient_bound}
\end{equation*}
Thus the viscous term is coercive in the velocity energy space, transport by
the divergence-free reference velocity is energy-neutral, and the
background-gradient term is lower order. Pressure is controlled by the
velocity--pressure inf-sup condition after its additive gauge is fixed.
These observations support the state estimate in
Assumption~\ref{ass:constrained_state_stability}, but they do
not prove bounded invertibility by themselves. One still needs compatible
data and a trivial kernel for the homogeneous constrained Oseen problem.

For an unsteady problem, the time derivative supplies the energy increment.
If the initial state is prescribed in \(\mathcal C\), the perturbation at
\(t=0\) is fixed; otherwise that initial state is part of the inverse problem.
The same transport and viscous estimates apply on time intervals over which
the reference solution remains regular.

For the cylinder-wake equations in \eqref{eq:cylinder_param}, the parameter
part of the linearized momentum residual also contains
\begin{equation*}
    \delta\lambda_1\,
    (\mathbf u^\dagger\cdot\nabla)\mathbf u^\dagger
    -\delta\nu\,\Delta\mathbf u^\dagger.
    \label{eq:cylinder_parameter_derivative}
\end{equation*}
These terms belong to
\(\mathcal R_\lambda^\dagger(\delta\lambda_1,\delta\nu)\); the state
operator \(\mathcal L^\dagger\) is formed with
\(\lambda_1^\dagger\) and \(\nu^\dagger\) fixed.

\subsubsection{Non-ideal magnetohydrodynamics}
\label{app:mhd_coupling_details}

Around a regular reference state with positive density and pressure, the
linearized non-ideal MHD system has a coupled hyperbolic--parabolic structure.
Its directly dissipative part is
\begin{equation*}
    \nu^\dagger
    \|\nabla\delta\mathbf v\|_{L^2(\Omega)}^2
    +\eta^\dagger
    \|\nabla\delta\mathbf B\|_{L^2(\Omega)}^2.
    \label{eq:mhd_direct_dissipation}
\end{equation*}
The remaining variables are controlled through the joint energy and the
off-diagonal coupling. For example, the derivative of the magnetic stress in
the momentum equation is
\begin{equation*}
\begin{aligned}
    D_{\mathbf B}\!\left[
        \nabla\cdot\left(
            \frac12|\mathbf B|^2\mathbf I
            -\mathbf B\otimes\mathbf B
        \right)
    \right]_{\mathbf B=\mathbf B^\dagger}
    [\delta\mathbf B]
    =\nabla\cdot\Big(&
        (\mathbf B^\dagger\cdot\delta\mathbf B)\mathbf I\\
        &-\mathbf B^\dagger\otimes\delta\mathbf B
        -\delta\mathbf B\otimes\mathbf B^\dagger
    \Big).
\end{aligned}
\label{eq:linearized_magnetic_stress_coupling}
\end{equation*}
Here \(\mathbf I\) is the identity tensor and \(\otimes\) denotes the
dyadic product.
The linearized induction equation contains
\begin{equation*}
\begin{aligned}
    \partial_t\delta\mathbf B
    &-\nabla\times\left(
        \delta\mathbf v\times\mathbf B^\dagger
        +\mathbf v^\dagger\times\delta\mathbf B
    \right)\\
    &+\nabla\times\left(
        \eta^\dagger\nabla\times\delta\mathbf B
        +\delta\eta\,\mathbf J^\dagger
    \right),
\end{aligned}
\label{eq:linearized_induction_coupling}
\end{equation*}
where
\(\mathbf J^\dagger:=\nabla\times\mathbf B^\dagger\). The first coupling
shows how magnetic perturbations enter the momentum balance; the second shows
how velocity and resistivity perturbations enter the magnetic evolution.
These terms are not sign-definite. Under bounded-coefficient and regularity
assumptions, they can be estimated by a small multiple of the dissipative
terms plus a multiple of the joint energy, leading to
\eqref{eq:mhd_linearized_energy_estimate}.

The initial and periodic conditions, the constraint
\(\nabla\cdot\mathbf B=0\), and Gr\"onwall's inequality then give an a
priori state estimate on a regular time interval. Verifying
Assumption~\ref{ass:constrained_state_stability}
still requires well-posedness and compatibility for the complete linearized
initial-boundary value problem. Shock formation, vanishing dissipation, loss
of positivity, and loss of regularity are outside this local argument. The
coupling provides a route by which observations of \(\mathbf B\) can constrain
the hydrodynamic variables, but it does not guarantee
Assumption~\ref{ass:parameter_observation_sensitivity} for an arbitrary
sensor configuration.

\subsection{Scale-aware numerical diagnostic for Assumption~A2}
\label{app:a2_diagnostic}

For a finite sensor set, identify \(\mathcal Y_{\rm obs}\) with
\(\mathbb R^{n_y}\), where \(n_y\) is the number of scalar measured values,
including all observed channels. Then
\(J_\lambda\in\mathbb R^{n_y\times m}\). Its singular values depend on the
units used for both parameters and observations. Let
\begin{equation}
    D_\lambda
    :=\operatorname{diag}
       (s_{\lambda,1},\ldots,s_{\lambda,m}),
    \qquad
    D_y
    :=\operatorname{diag}(s_{y,1},\ldots,s_{y,n_y}),
    \label{eq:a2_scaling_matrices}
\end{equation}
where all scales are strictly positive. In the normalized coordinates
\begin{equation*}
    \bar\lambda
    :=D_\lambda^{-1}(\lambda-\lambda^\dagger),
    \qquad
    \bar y:=D_y^{-1}(y-y^\dagger),
\end{equation*}
where \(y^\dagger:=\Phi(\lambda^\dagger)\). The dimensionless sensitivity
matrix is
\begin{equation*}
    \widetilde J_\lambda
    :=D_y^{-1}J_\lambda D_\lambda.
    \label{eq:normalized_parameter_sensitivity}
\end{equation*}

The \(j\)-th column can be estimated along the local PDE solution branch by a
centered finite difference,
\begin{equation*}
    [\widetilde J_\lambda]_{:j}
    \approx
    D_y^{-1}
    \frac{
        \Phi(\lambda^\dagger+\tau_js_{\lambda,j}e_j)
        -\Phi(\lambda^\dagger-\tau_js_{\lambda,j}e_j)
    }{2\tau_j},
    \label{eq:a2_centered_finite_difference}
\end{equation*}
where \(e_j\in\mathbb R^m\) is the \(j\)-th coordinate vector and
\(\tau_j>0\) is dimensionless. The step must remain inside \(\Lambda\). Automatic
differentiation through a differentiable PDE solver may be used instead.

Normalization preserves rank and makes sensitivity magnitudes comparable
across parameterizations. The main diagnostic is
\(\sigma_{\min}(\widetilde J_\lambda)\). When \(m>1\), the condition number
\begin{equation*}
    \kappa(\widetilde J_\lambda)
    :=\frac{\sigma_{\max}(\widetilde J_\lambda)}
            {\sigma_{\min}(\widetilde J_\lambda)}
\end{equation*}
also reveals nearly collinear parameter directions. For a single parameter,
\(\sigma_{\min}(\widetilde J_\lambda)
=\|[\widetilde J_\lambda]_{:1}\|_2\). The calculation should be repeated for
several step sizes and grid resolutions. Stable values across these choices
help distinguish genuine physical insensitivity from numerical error.

A positive value at one reference pair gives numerical evidence only for that
pair and sensor configuration; it is not a uniform lower bound on a parameter
region. A reproducible diagnostic should therefore report the reference
state, sensor layout, normalization scales, perturbation sizes,
\(\sigma_{\min}\), and, when applicable, the condition number.

\subsubsection{Numerical check for the two-dimensional cavity problem}
We apply the preceding diagnostic to the pressure-observed cavity problem at
\(\mathrm{Re}^\dagger=5000\), for which
\(\nu^\dagger=2\times10^{-4}\). The trainable coefficient in the numerical
model is represented by the log-viscosity
\(\theta=\log\nu\). At the reference value
\(\theta^\dagger=\log\nu^\dagger\), the corresponding sensitivity satisfies
\begin{equation*}
    J_\theta
    :=\frac{d}{d\theta}\Phi(e^\theta)
      \bigg|_{\theta=\theta^\dagger}
    =\nu^\dagger J_\nu.
    \label{eq:a2_log_viscosity_relation}
\end{equation*}
Thus, taking \(s_\nu=\nu^\dagger\) in
\eqref{eq:a2_scaling_matrices} makes \(J_\theta\) equal to the parameter-scaled
column of \(\widetilde J_\lambda\). The pressure is nondimensionalized with
the cavity scale \(s_p=U_{\rm lid}^2=1\) (equivalently,
\(\rho U_{\rm lid}^2=1\) when \(\rho=1\)), so \(D_y=I\) for this
calculation. Each pressure field is shifted to have zero spatial mean before
sampling, which removes the arbitrary additive pressure constant.

For each perturbation size
\(\tau\in\{0.005,0.010,0.020\}\), two OpenFOAM solutions were computed at
\begin{equation*}
    \nu_\pm=\nu^\dagger e^{\pm\tau},
    \qquad
    \mathrm{Re}_\pm=\mathrm{Re}^\dagger e^{\mp\tau}.
    \label{eq:a2_cavity_perturbed_parameters}
\end{equation*}
The resulting Reynolds-number pairs
\((\mathrm{Re}_+,\mathrm{Re}_-)\) are
\((4975.062,5025.063)\), \((4950.249,5050.251)\), and
\((4900.993,5101.007)\), respectively. The domain, boundary conditions,
discretization, solver settings, and pressure convention were kept fixed, and
all solutions were exported to the same \(129\times129\) grid. No observation
noise was added. Within each 200-, 300-, or 400-sensor experiment, the same
boundary pressure mask was applied to both members of every centered pair.

Let \(\Phi_{N_s}\) denote the vector of mean-free pressures at a fixed mask of
\(N_s\) active sensors. We estimate
\begin{equation*}
    \widehat J_{\theta,N_s}(\tau)
    :=\frac{
        \Phi_{N_s}(\theta^\dagger+\tau)
        -\Phi_{N_s}(\theta^\dagger-\tau)
    }{2\tau},
    \qquad
    \widehat\sigma_{\min,N_s}(\tau)
    :=\|\widehat J_{\theta,N_s}(\tau)\|_2.
    \label{eq:a2_cavity_finite_difference}
\end{equation*}
The second equality follows because viscosity is the only unknown physical
parameter in this example. To facilitate comparison between masks of different
sizes, we also report the root-mean-square sensitivity
\begin{equation*}
    \widehat\sigma_{\min,N_s}^{\rm rms}(\tau)
    :=N_s^{-1/2}\widehat\sigma_{\min,N_s}(\tau).
    \label{eq:a2_cavity_rms_sensitivity}
\end{equation*}

\begin{table}[htb]
\centering
\caption{Finite-difference diagnostic for Assumption~A2 in the
two-dimensional cavity problem at \(\mathrm{Re}=5000\).}
\label{tab:a2_ldc_re5000_diagnostic}

\small
\setlength{\tabcolsep}{5pt}

\begin{tabular}{lccccc}
\toprule
\(N_s\)
& \(\widehat{\sigma}_{\min}(0.005)\)
& \(\widehat{\sigma}_{\min}(0.010)\)
& \(\widehat{\sigma}_{\min}(0.020)\)
& \(\widehat{\sigma}_{\min}^{\rm rms}(0.010)\)
& \(\Delta_\tau\) \\
\midrule
200 & 0.400947 & 0.406944 & 0.409901 & 0.028775 & \(1.47\%\) \\
300 & 0.374331 & 0.382087 & 0.385947 & 0.022060 & \(2.03\%\) \\
400 & 0.467187 & 0.476120 & 0.480551 & 0.023806 & \(1.88\%\) \\
\bottomrule
\end{tabular}

\end{table}

Here
\begin{equation*}
    \Delta_\tau
    :=\max_{\tau\in\{0.005,0.020\}}
    \frac{
        \left|\widehat\sigma_{\min,N_s}(\tau)
        -\widehat\sigma_{\min,N_s}(0.010)\right|
    }{\widehat\sigma_{\min,N_s}(0.010)}
    \label{eq:a2_cavity_step_variation}
\end{equation*}
measures the largest relative change from the middle step size. All nine
estimates are strictly positive, and the largest step-size variation is
\(2.03\%\). Since the sensitivity matrix has only one column in this case,
these nonzero values show that the estimated matrix has full column rank. The
corresponding one-parameter condition number is identically one and is
therefore not listed separately. The
small variation across the three perturbation sizes also indicates that the
observed sensitivity is not tied to a single finite-difference step. The
results therefore provide direct local numerical evidence for
Assumption~\ref{ass:parameter_observation_sensitivity} at the selected
reference state and for each of the three sensor configurations.

The values are not monotone in \(N_s\). This is not a contradiction: the
three sensor masks are not nested, so changing \(N_s\) also changes the sensor
locations. Moreover, the Euclidean singular value has an inherent dependence
on the number of observations. The RMS column removes the elementary
\(\sqrt{N_s}\) scaling but not the effect of sensor placement. This behavior
is consistent with the configuration dependence stated in
Assumption~\ref{ass:parameter_observation_sensitivity}. Finally, the present
test is local to \(\mathrm{Re}=5000\) and the exported \(129\times129\) grid.

\clearpage

\bibliographystyle{plain}
\bibliography{References}

@article{ghia1982high,
  title={{{High-Re} solutions for incompressible flow using the {Navier-Stokes} equations and a multigrid method}},
  author={Ghia, UKNG and Ghia, Kirti N and Shin, CT},
  journal={{Journal of Computational Physics}},
  volume={48},
  number={3},
  pages={387--411},
  year={1982},
  publisher={Elsevier}
}

@article{mignone2007pluto,
  title={{{PLUTO}: a numerical code for computational astrophysics}},
  author={Mignone, Andrea and Bodo, G and Massaglia, S and Matsakos, Titos and Tesileanu, O ea and Zanni, C and Ferrari, Anthony},
  journal={{The Astrophysical Journal Supplement Series}},
  volume={170},
  number={1},
  pages={228--242},
  year={2007}
}

@inproceedings{loshchilov2019decoupled,
  title={{Decoupled Weight Decay Regularization}},
  author={Loshchilov, Ilya and Hutter, Frank},
  booktitle={{International Conference on Learning Representations}},
  year={2019}
}

@article{bruneau2006two,
  title={{The {2D} lid-driven cavity problem revisited}},
  author={Bruneau, Charles-Henri and Saad, Mazen},
  journal={{Computers \& Fluids}},
  volume={35},
  number={3},
  pages={326--348},
  year={2006},
  doi={10.1016/j.compfluid.2004.12.004}
}

@article{albensoeder2005accurate,
  title={{Accurate three-dimensional lid-driven cavity flow}},
  author={Albensoeder, Stefan and Kuhlmann, Hendrik C.},
  journal={{Journal of Computational Physics}},
  volume={206},
  number={2},
  pages={536--558},
  year={2005},
  doi={10.1016/j.jcp.2004.12.024}
}

@incollection{schafer1996benchmark,
  title={{Benchmark computations of laminar flow around a cylinder}},
  author={Sch{\"a}fer, M. and Turek, S. and Durst, F. and Krause, E. and Rannacher, R.},
  booktitle={{Flow Simulation with High-Performance Computers II}},
  editor={Hirschel, Ernst Heinrich},
  series={Notes on Numerical Fluid Mechanics},
  volume={48},
  pages={547--566},
  publisher={Vieweg+Teubner Verlag},
  year={1996},
  doi={10.1007/978-3-322-89849-4_39}
}

@article{orszag1979small,
  title={{Small-scale structure of two-dimensional magnetohydrodynamic turbulence}},
  author={Orszag, S. A. and Tang, C. M.},
  journal={{Journal of Fluid Mechanics}},
  volume={90},
  number={1},
  pages={129--143},
  year={1979},
  doi={10.1017/S002211207900210X}
}

@article{toth2000constraint,
  title={{The {$\nabla \cdot \mathbf{B}=0$} constraint in shock-capturing magnetohydrodynamics codes}},
  author={T{\'o}th, G{\'a}bor},
  journal={{Journal of Computational Physics}},
  volume={161},
  number={2},
  pages={605--652},
  year={2000},
  doi={10.1006/jcph.2000.6519}
}

@article{raissi2019physics,
  title={{Physics-informed neural networks: A deep learning framework for solving forward and inverse problems involving nonlinear partial differential equations}},
  author={Raissi, Maziar and Perdikaris, Paris and Karniadakis, George E},
  journal={{Journal of Computational Physics}},
  volume={378},
  pages={686--707},
  year={2019},
  publisher={Elsevier}
}

@article{mishra2022estimates,
  title={{Estimates on the generalization error of physics-informed neural networks for approximating a class of inverse problems for {PDEs}}},
  author={Mishra, Siddhartha and Molinaro, Roberto},
  journal={{IMA Journal of Numerical Analysis}},
  volume={42},
  number={2},
  pages={981--1022},
  year={2022},
  publisher={Oxford University Press}
}

@article{weller1998tensorial,
  title={{A tensorial approach to computational continuum mechanics using object-oriented techniques}},
  author={Weller, Henry G and Tabor, Gavin and Jasak, Hrvoje and Fureby, Christer},
  journal={{Computers in Physics}},
  volume={12},
  number={6},
  pages={620--631},
  year={1998},
  publisher={American Institute of Physics}
}

@article{willcox2006unsteady,
  title={{Unsteady flow sensing and estimation via the gappy proper orthogonal decomposition}},
  author={Willcox, Karen},
  journal={{Computers \& Fluids}},
  volume={35},
  number={2},
  pages={208--226},
  year={2006},
  publisher={Elsevier}
}

@article{fukami2021global,
  title={{Global field reconstruction from sparse sensors with {Voronoi} tessellation-assisted deep learning}},
  author={Fukami, Kai and Maulik, Romit and Ramachandra, Nesar and Fukagata, Koji and Taira, Kunihiko},
  journal={{Nature Machine Intelligence}},
  volume={3},
  number={11},
  pages={945--951},
  year={2021},
  publisher={Nature Publishing Group UK London}
}

@article{santos2023development,
  title={{Development of the senseiver for efficient field reconstruction from sparse observations}},
  author={Santos, Javier E and Fox, Zachary R and Mohan, Arvind and O’Malley, Daniel and Viswanathan, Hari and Lubbers, Nicholas},
  journal={{Nature Machine Intelligence}},
  volume={5},
  number={11},
  pages={1317--1325},
  year={2023},
  publisher={Nature Publishing Group UK London}
}

@article{lu2021learning,
  title={{Learning nonlinear operators via {DeepONet} based on the universal approximation theorem of operators}},
  author={Lu, Lu and Jin, Pengzhan and Pang, Guofei and Zhang, Zhongqiang and Karniadakis, George Em},
  journal={{Nature Machine Intelligence}},
  volume={3},
  number={3},
  pages={218--229},
  year={2021},
  publisher={Nature Publishing Group UK London}
}

@article{vaswani2017attention,
  title={{Attention is all you need}},
  author={Vaswani, Ashish and Shazeer, Noam and Parmar, Niki and Uszkoreit, Jakob and Jones, Llion and Gomez, Aidan N and Kaiser, {\L}ukasz and Polosukhin, Illia},
  journal={{Advances in Neural Information Processing Systems}},
  volume={30},
  year={2017}
}

@article{li2020fourier,
  title={{{Fourier} neural operator for parametric partial differential equations}},
  author={Li, Zongyi and Kovachki, Nikola and Azizzadenesheli, Kamyar and Liu, Burigede and Bhattacharya, Kaushik and Stuart, Andrew and Anandkumar, Anima},
  journal={{arXiv preprint arXiv:2010.08895}},
  year={2020}
}

@article{kovachki2023neural,
  title={{Neural operator: Learning maps between function spaces with applications to {PDEs}}},
  author={Kovachki, Nikola and Li, Zongyi and Liu, Burigede and Azizzadenesheli, Kamyar and Bhattacharya, Kaushik and Stuart, Andrew and Anandkumar, Anima},
  journal={{Journal of Machine Learning Research}},
  volume={24},
  number={89},
  pages={1--97},
  year={2023}
}

@article{zhao2024recfno,
  title={{{RecFNO}: A resolution-invariant flow and heat field reconstruction method from sparse observations via {Fourier} neural operator}},
  author={Zhao, Xiaoyu and Chen, Xiaoqian and Gong, Zhiqiang and Zhou, Weien and Yao, Wen and Zhang, Yunyang},
  journal={{International Journal of Thermal Sciences}},
  volume={195},
  pages={108619},
  year={2024},
  publisher={Elsevier}
}

@article{raissi2020hidden,
  title={{Hidden fluid mechanics: Learning velocity and pressure fields from flow visualizations}},
  author={Raissi, Maziar and Yazdani, Alireza and Karniadakis, George Em},
  journal={{Science}},
  volume={367},
  number={6481},
  pages={1026--1030},
  year={2020},
  publisher={American Association for the Advancement of Science}
}

@article{raissi2019deep,
  title={{Deep learning of vortex-induced vibrations}},
  author={Raissi, Maziar and Wang, Zhicheng and Triantafyllou, Michael S and Karniadakis, George Em},
  journal={{Journal of Fluid Mechanics}},
  volume={861},
  pages={119--137},
  year={2019},
  publisher={Cambridge University Press}
}

@article{he2020physics,
  title={{Physics-informed neural networks for multiphysics data assimilation with application to subsurface transport}},
  author={He, QiZhi and Barajas-Solano, David and Tartakovsky, Guzel and Tartakovsky, Alexandre M},
  journal={{Advances in Water Resources}},
  volume={141},
  pages={103610},
  year={2020},
  publisher={Elsevier}
}

@article{parfenyev2024inferring,
  title={{Inferring parameters and reconstruction of two-dimensional turbulent flows with physics-informed neural networks}},
  author={Parfenyev, Vladimir and Blumenau, Mark and Nikitin, Ilia},
  journal={{JETP Letters}},
  volume={120},
  number={8},
  pages={599--607},
  year={2024},
  publisher={Springer}
}

@article{toscano2025aivt,
  title={{{AIVT}: Inference of turbulent thermal convection from measured {3D} velocity data by physics-informed {Kolmogorov-Arnold} networks}},
  author={Toscano, Juan Diego and K{\"a}ufer, Theo and Wang, Zhibo and Maxey, Martin and Cierpka, Christian and Karniadakis, George Em},
  journal={{Science Advances}},
  volume={11},
  number={19},
  pages={eads5236},
  year={2025},
  publisher={American Association for the Advancement of Science}
}

@article{guan2025trajectory,
  title={{Trajectory-guided multiscale deep inversion networks for solving incompressible magnetohydrodynamics inverse problems}},
  author={Guan, Xiaofei and Hu, Boya and Mao, Shipeng},
  journal={{Physics of Fluids}},
  volume={37},
  number={10},
  year={2025},
  publisher={AIP Publishing}
}

@article{wang2021learning,
  title={{Learning the solution operator of parametric partial differential equations with physics-informed {DeepONets}}},
  author={Wang, Sifan and Wang, Hanwen and Perdikaris, Paris},
  journal={{Science Advances}},
  volume={7},
  number={40},
  pages={eabi8605},
  year={2021},
  publisher={American Association for the Advancement of Science}
}

@article{li2024physics,
  title={{Physics-informed neural operator for learning partial differential equations}},
  author={Li, Zongyi and Zheng, Hongkai and Kovachki, Nikola and Jin, David and Chen, Haoxuan and Liu, Burigede and Azizzadenesheli, Kamyar and Anandkumar, Anima},
  journal={{ACM/IMS Journal of Data Science}},
  volume={1},
  number={3},
  pages={1--27},
  year={2024},
  publisher={ACM New York, NY}
}

@article{barman2026efficient,
  title={{An Efficient Wavelet-based Physics Informed Residual Neural Networks for Flow Field Reconstruction with Extremely Sparse Data}},
  author={Barman, Biswanath and Ray, Rajendra K},
  journal={{arXiv preprint arXiv:2601.18848}},
  year={2026}
}

@article{barman2026physicsformer,
  title={{PhysicsFormer: An Efficient and Fast Attention-Based Physics Informed Neural Network for Solving Incompressible Navier Stokes Equations}},
  author={Barman, Biswanath and Chatterjee, Debdeep and Ray, Rajendra K},
  journal={{arXiv preprint arXiv:2601.03613v1}},
  year={2026},
  url={https://arxiv.org/abs/2601.03613v1}
}

@article{qian2026distributed,
  title={{Distributed physics-informed neural networks via domain decomposition for fast flow reconstruction}},
  author={Qian, Yixiao and Liu, Jiaxu and Xia, Zewei and Chen, Song and Xu, Chao and Cai, Shengze},
  journal={{arXiv preprint arXiv:2602.15883}},
  year={2026}
}

@article{sarkar2026learning,
  title={{Learning hidden physics and system parameters with deep operator networks}},
  author={Sarkar, Dibakar Roy and Kag, Vijay and Pal, Birupaksha and Goswami, Somdatta},
  journal={{Computer Methods in Applied Mechanics and Engineering}},
  volume={456},
  pages={118926},
  year={2026},
  publisher={Elsevier}
}

@article{sirovich1987turbulence,
  title={{Turbulence and the dynamics of coherent structures. Part I: Coherent structures}},
  author={Sirovich, Lawrence and others},
  journal={Quarterly of applied mathematics},
  volume={45},
  number={3},
  pages={561--571},
  year={1987}
}

@article{jordan2024muon,
  title={{Muon: An optimizer for hidden layers in neural networks, 2024}},
  author={Jordan, Keller and Jin, Yuchen and Boza, Vlado and You, Jiacheng and Cesista, Franz and Newhouse, Laker and Bernstein, Jeremy},
  journal={URL https://kellerjordan. github. io/posts/muon},
  volume={6},
  number={3},
  pages={4},
  year={2024}
}

@article{liu2025muon,
  title={{Muon is scalable for llm training}},
  author={Liu, Jingyuan and Su, Jianlin and Yao, Xingcheng and Jiang, Zhejun and Lai, Guokun and Du, Yulun and Qin, Yidao and Xu, Weixin and Lu, Enzhe and Yan, Junjie and others},
  journal={arXiv preprint arXiv:2502.16982},
  year={2025}
}

@inproceedings{krishnapriyan2021characterizing,
  title={{Characterizing possible failure modes in physics-informed neural networks}},
  author={Krishnapriyan, Aditi and Gholami, Amir and Zhe, Shandian and Kirby, Robert and Mahoney, Michael W.},
  booktitle={{Advances in Neural Information Processing Systems}},
  volume={34},
  year={2021}
}

@article{wang2022when,
  title={{When and why {PINNs} fail to train: A neural tangent kernel perspective}},
  author={Wang, Sifan and Yu, Xinling and Perdikaris, Paris},
  journal={{Journal of Computational Physics}},
  volume={449},
  pages={110768},
  year={2022},
  doi={10.1016/j.jcp.2021.110768}
}

@article{cai2021espresso,
  author  = {Cai, Shengze and Wang, Zhicheng and Fuest, Frederik and Jeon, Young Jin and Gray, Callum and Karniadakis, George Em},
  title   = {{Flow over an espresso cup: inferring {3-D} velocity and pressure fields from tomographic background oriented {Schlieren} via physics-informed neural networks}},
  journal = {{Journal of Fluid Mechanics}},
  volume  = {915},
  pages   = {A102},
  year    = {2021},
  doi     = {10.1017/jfm.2021.135}
}

@article{cotter2009bayesian,
  author  = {Cotter, Simon L. and Dashti, Masoumeh and Robinson, James C. and Stuart, Andrew M.},
  title   = {{{Bayesian} inverse problems for functions and applications to fluid mechanics}},
  journal = {{Inverse Problems}},
  volume  = {25},
  number  = {11},
  pages   = {115008},
  year    = {2009},
  doi     = {10.1088/0266-5611/25/11/115008}
}

@article{ghattas2021learning,
  author  = {Ghattas, Omar and Willcox, Karen},
  title   = {{Learning physics-based models from data: perspectives from inverse problems and model reduction}},
  journal = {{Acta Numerica}},
  volume  = {30},
  pages   = {445--554},
  year    = {2021},
  doi     = {10.1017/S0962492921000064}
}

@article{xu2022physics,
  author  = {Xu, Kailai and Darve, Eric},
  title   = {{Physics constrained learning for data-driven inverse modeling from sparse observations}},
  journal = {{Journal of Computational Physics}},
  volume  = {453},
  pages   = {110938},
  year    = {2022},
  doi     = {10.1016/j.jcp.2021.110938}
}

@article{rosofsky2023mhd,
  author  = {Rosofsky, Shawn G. and Huerta, E. A.},
  title   = {{Magnetohydrodynamics with physics informed neural operators}},
  journal = {{Machine Learning: Science and Technology}},
  volume  = {4},
  number  = {3},
  pages   = {035002},
  year    = {2023},
  doi     = {10.1088/2632-2153/ace30a}
}

@article{stefani2006inverse,
  author  = {Stefani, Frank and Gundrum, Thomas and Gerbeth, Gunter and G{\"u}nther, Uwe and Xu, Mingtian},
  title   = {{Inverse problems in magnetohydrodynamics: theoretical and experimental aspects}},
  journal = {{Inverse Problems in Science and Engineering}},
  volume  = {14},
  number  = {4},
  pages   = {411--422},
  year    = {2006},
  doi     = {10.1080/17415970600573791}
}

@article{mo2025reconstructing,
  author  = {Mo, Yaxin and Magri, Luca},
  title   = {{Reconstructing unsteady flows from sparse, noisy measurements with a physics-constrained convolutional neural network}},
  journal = {{Physical Review Fluids}},
  volume  = {10},
  number  = {3},
  pages   = {034901},
  year    = {2025},
  doi     = {10.1103/PhysRevFluids.10.034901}
}

\end{document}